\theoremstyle{plain}
\newtheorem{theorem}{Theorem}[section]
\theoremstyle{definition}
\newtheorem{definition}[theorem]{Definition}
\theoremstyle{remark}
\newcommand{%
     \scalebox{}{\input{}}  
}[2]{%
     \scalebox{#1}{\input{#2}}  
}
\title{From Counterfactuals to Trees: \\ Competitive Analysis of Model Extraction Attacks}
\author{%
  Awa Khouna \\
  Polytechnique Montréal\\
  \texttt{awa.khouna@polymtl.ca} \\
  \And 
  Julien Ferry \\
  Polytechnique Montréal\\
  \texttt{julien.ferry@polymtl.ca} \\
  \And 
  Thibaut Vidal \\
  Polytechnique Montréal\\
  \texttt{thibaut.vidal@polymtl.ca} \\
}
\begin{document}
\def\PathFinding{\textsc{PathFinding}}
\def\CF{\textsc{CF}}
\def\DualCF{\textsc{DualCF}}
\definecolor{class2}{RGB}{63,26,91} %
\definecolor{class1}{RGB}{208,221,40} %
\definecolor{class3}{RGB}{0,187,108} %

\def\classA{$c_2$}
\def\classB{$c_1$}
\def\classC{$c_3$}

\def\updated#1{\textcolor{blue}{#1}}

\maketitle

\begin{abstract}
    The advent of Machine Learning as a Service (MLaaS) has heightened the trade-off between model explainability and security. In particular, explainability techniques, such as counterfactual explanations, inadvertently increase the risk of model extraction attacks, enabling unauthorized replication of proprietary models.  In this paper, we formalize and characterize the risks and inherent complexity of model reconstruction, focusing on the ``oracle'' queries required for faithfully inferring the underlying prediction function.
    We present the first formal analysis of model extraction attacks through the lens of competitive analysis, establishing a foundational framework to evaluate their efficiency. Focusing on models based on additive decision trees (e.g., decision trees, gradient boosting, and random forests), we introduce novel reconstruction algorithms that achieve provably perfect fidelity while demonstrating strong anytime performance. Our framework provides theoretical bounds on the query complexity for extracting tree-based models, offering new insights into the security vulnerabilities of their deployment.
\end{abstract}

\section{Introduction}
Recent research has shown that sharing trained machine learning (ML) models can lead to the reconstruction of sensitive training data, posing significant privacy risks~\citep[see, e.g.,][]{Boenisch2023, Carlini2024, DBLP:conf/icml/FerryFPV24}. Applications in fields such as medical diagnostics, financial services, and personalized advertising often handle large amounts of private data, making them attractive targets for data reconstruction attacks. These attacks exploit vulnerabilities in the model to recover confidential information from the training dataset, thereby undermining the privacy guarantees that organizations seek to uphold. Consequently, organizations may prefer to utilize Machine Learning as a Service (MLaaS) to leverage powerful models without directly exposing them, balancing the benefits of advanced analytics with the need to protect sensitive information.

While MLaaS platforms provide accessible and scalable ML solutions, they must address the growing demand for explainability in their decision-making processes. Regulatory frameworks such as the EU AI Act's Article 13\footnote{\href{https://artificialintelligenceact.eu/article/13/}{https://artificialintelligenceact.eu/article/13/}} further mandate greater transparency across a wide range of applications. In response, MLaaS providers increasingly incorporate explainability techniques to elucidate model behavior and ensure fairness. Notably, counterfactual explanations specify the changes an input example must undergo to alter its prediction, thereby directly offering a form of recourse in many real-life applications.
However, studies have shown that querying a model's explanations can enable attackers to replicate its parameters and architecture, effectively copying the original model~\citep{Florian2016StealingMachineLearningModels, wang2022dualcf, aivodji2020model, oksuz2023autolycus}. This reveals a critical tension between the need for transparency and the protection of model integrity and intellectual property.

Model extraction attacks were proposed against a variety of ML models in recent years~\citep{DBLP:journals/csur/OliynykMR23}. While very few of them are \emph{functionally equivalent} (i.e., they provably reconstruct the black-box model's decision boundary), they often rely on strong assumptions, such as access to a leaf identifier in the case of decision tree models~\citep{Florian2016StealingMachineLearningModels}. Moreover, the majority of the literature focuses solely on empirically evaluating the fidelity of the extracted model w.r.t. the target black-box, lacking a rigorous framework for analyzing attack complexities and thoroughly characterizing their worst-case scenarios.
Finally, while counterfactual explanations constitute a promising attack surface and were exploited to conduct model extraction attacks~\citep{aivodji2020model, wang2022dualcf, dissanayake2024model}, existing approaches rely on training surrogate models without functional equivalence guarantees. 

In this study, we address these limitations through the following key contributions: 
\begin{itemize}
    \item We define a rigorous framework to characterize the complexity of model extraction attacks, utilizing competitive analysis (a notion from online optimization) to evaluate the difficulty of reconstructing models under various conditions. 
    \item We introduce a novel algorithm (TRA) specifically designed to efficiently extract axis-parallel decision boundary models (including, but not limited to, tree ensemble models) through locally optimal
    counterfactual explanations. 
    \item We provide a comprehensive theoretical analysis of our proposed method, offering guarantees on query complexity and demonstrating 100\% fidelity in the extracted models.
    \item We conduct extensive experiments to validate our theoretical findings, presenting an average-case and anytime performance analysis of TRA compared to state of the art reconstruction methods. These experiments not only confirm our theoretical results, but also provide practical insights into the effectiveness and limitations of our approach.
\end{itemize}

These contributions collectively highlight and permit us to better characterize security vulnerabilities in deploying explainable tree ensembles. 

\section{Online Discovery, Model Extraction Attacks and Competitive Analysis}\label{sec:online_analysis_extraction_attacks}

Online discovery problems have long been a focus of research in theoretical computer science, where the goal is to uncover the structure of an unknown environment through a sequence of queries or observations~\citep{Ghosh10, Deng91}. A classic example arises in map exploration: an agent (e.g., a robot) navigates a space cluttered with obstacles, with only a limited ``line of sight'' at each position. The agent's objective is to construct a complete representation (e.g., map) of its surroundings while minimizing resources such as travel distance or exploration time.

Model extraction attacks on MLaaS platforms exhibit striking parallels to these online exploration tasks. In a typical model extraction attack, an adversary queries a predictive model (the ``black box'') to gain information about its internal decision boundaries, effectively learning the decision function through a limited set of inputs and outputs. Drawing an analogy to the map exploration scenario, each query in a model extraction attack can be likened to a ``probe'' in the space of features that reveals partial information about the region—namely, the predicted label or a counterfactual explanation identifying the closest boundary capable of changing the prediction. Figure~\ref{fig:Illustration} illustrates this connection by contrasting a rover's sensor sweep in a polygon exploration task with a query to locate a counterfactual explanation in a machine-learning model.

\begin{figure*}
    \centering
     \scalebox{0.85}{\input{results/robot.tikz}}  

    \caption{Illustration of the connection between online discovery problems and model extraction attacks. \textbf{Left} (adapted from~\citet{tee2021lidar}): an autonomous robot maps an unknown 2D environment (e.g., a house) with limited-range sensors (e.g., LIDAR and laser distance measurements). \textbf{Right}: model extraction attacks recover the model's decision boundary via counterfactual queries.}
    \label{fig:Illustration}
\end{figure*}

\textbf{Online discovery problems \& Model extraction attacks.} Online discovery typically assumes an agent that can move freely in the physical world while receiving feedback about obstacles in its vicinity. In model extraction, the ``environment'' is the model's input space, and the queries return a point that lies on the nearest decision boundary (or provides the counterfactual boundary itself). Thus, while map exploration may allow richer geometric observations (e.g., an entire sensor sweep of obstacles), counterfactual-based model extraction often yields more constrained information (e.g., only the nearest boundary for a given input). Despite these differences, both problems share a common hallmark: the true structure (environment or decision boundaries) is unknown \emph{a priori} and must be inferred \emph{online} via carefully chosen queries.

\textbf{Competitive Analysis.} A central tool for analyzing online discovery problems is \emph{competitive analysis}~\citep{Karlin86}, which compares the performance of an \emph{online} algorithm — one that adapts its decisions based solely on information acquired so far — to an optimal \emph{offline} algorithm with complete foresight. Formally, we measure the ratio between: (i) the complexity (e.g., number of queries, computational cost) incurred by the online algorithm to achieve its goal and (ii) the minimal complexity that an offline algorithm, with complete foresight, would require to accomplish the same task. 
A constant ratio implies a \emph{constant-competitive} algorithm; in more complex settings, the ratio may grow with problem parameters.

By applying competitive analysis to model extraction attacks, we can quantify how many queries (i.e., counterfactuals or label predictions) are needed to guarantee perfect fidelity in model reconstruction under worst-case conditions, complementing empirical investigations. Moreover, competitive analysis encourages us to ask: \emph{How many queries, relative to an all-knowing attacker, does one need in order to prove with certainty that a specific model has been recovered?} This yields a principled measure of the difficulty of extracting tree-based models, analogous to classic results in online map discovery~\citep{Deng91, Hoffmann01, Ghosh10, Fekete10}.

Overall, this perspective paves the way for a unified view: model extraction attacks can be seen as online exploration in the feature space. Our work, therefore, provides new theoretical results for tree-based model extraction, and invites cross-pollination between the literature on online discovery algorithms and emerging threats in machine learning security.

\section{Method}

\subsection{Problem Statement}

We consider an input as an \(m\)-dimensional vector in the input space \(\mathcal{X} = \mathcal{X}_1 \times \mathcal{X}_2 \times \cdots \times \mathcal{X}_m \subseteq \mathbb{R}^m\), and the output belongs to the categorical space \(\mathcal{Y}\). Let \(\mathcal{F}\) denote the set of all axis-parallel decision boundary models, including decision trees and their ensembles, such as random forests \citep{RF}. A machine learning (ML) classification model \(f \in \mathcal{F}\) is defined as a function \(f : \mathcal{X} \mapsto \mathcal{Y}\). For simplicity and without loss of generality, we focus our discussion on decision trees, as any axis-parallel decision boundary model can be represented as a decision tree \citep{vidal2020bornagaintreeensembles}. The input space \(\mathcal{X}\) may comprise both categorical and continuous features.
\begin{definition}\label{def_local_opt}
Let \(d\) be a distance function and \(\mathcal{P}(\mathcal{X})\) denote the set of all subsets of $\mathcal{X}$. A counterfactual explanation oracle \(\mathcal{O}_d\) is defined as a function \(\mathcal{O}_d : \mathcal{F} \times \mathcal{X} \times \mathcal{P}(\mathcal{X}) \mapsto \mathcal{X}\). For a given model \(f \in \mathcal{F}\), an instance \(x \in \mathcal{X}\), and an input subspace \(\mathcal{E} \subseteq \mathcal{X}\), $x'=\mathcal{O}_d(f, x, \mathcal{E})$ is a counterfactual explanation such that $x' \in \mathcal{E}$ and $f(x') \neq f(x)$. This counterfactual is \emph{locally optimal} if:
\[
\exists\, \epsilon > 0 \text{ such that } \forall\, v \in \mathbb{R}^m \setminus \{0\},\, \|v\| \leq \epsilon,\quad f(x) = f(x' + v) \quad \text{or} \quad d(x, x' + v) \geq d(x, x').
\]
\end{definition}
Intuitively, this condition ensures that any small perturbation (\(v\)) of \(x'\) either gives an invalid counterfactual (having the same label as $x$) or increases the distance to the original input \(x\).

For an adversary with black-box access to a target model~\(f\) (\emph{i.e.,} through a prediction API), a \emph{model extraction attack} aims to retrieve the exact model's parameters~\citep{Florian2016StealingMachineLearningModels}. However, this goal is often too strict as many models might encode the same prediction function and thus remain indistinguishable through counterfactual or prediction queries.
Consequently, a more tractable objective, known as \emph{functionally equivalent extraction} (Definition~\ref{def:func_eq}), focuses on reconstructing a model encoding the exact same function over the input space~\citep{DBLP:conf/uss/JagielskiCBKP20}.

\begin{definition}
\label{def:func_eq}
    A functionally equivalent extraction attack aims to reconstruct a model \(\hat{f} \in \mathcal{F}\) such that it is functionally identical to the target model \(f \in \mathcal{F}\) across the entire input space~\(\mathcal{X}\). Formally, the attack seeks to find \(\smash{\hat{f}}\) satisfying Equation~\eqref{eq:func_identical_def} using as few queries as possible.%
    \begin{align}
        \forall x \in \mathcal{X}, \quad \hat{f}(x) = f(x) \label{eq:func_identical_def}
    \end{align}
\end{definition}

A common way to empirically evaluate such attacks is through the \emph{fidelity}~\citep{aivodji2020model} of the model reconstructed by the attacker, coined the \emph{surrogate model}. 
It is defined as the proportion of examples (from a given dataset) for which the surrogate agrees with the target model. To theoretically bound the efficiency of a model extraction attack, we additionally rely on the notion of \emph{c-competitiveness} from the online discovery literature, formalized in Definition~\ref{def:compet}.

\begin{definition}
\label{def:compet}
    Let \(\mathcal{A}\) denote an online model extraction attack algorithm. Define \(Q_{\mathcal{A}}^f\) as the number of queries required by \(\mathcal{A}\) to extract the decision boundary of model \(f\), and let \(Q_{opt}^f\) represent the minimal (optimal) number of queries necessary to extract \(f\) by an omniscient offline algorithm. The algorithm \(\mathcal{A}\) is said to be \emph{c-competitive} if, for any model \(f \in \mathcal{F}\):
    \[
        Q_{\mathcal{A}}^f \leq c \cdot Q_{opt}^f
    \]
\end{definition}

\label{assumptions}
In this work, we focus on functionally equivalent model extraction attacks of axis-parallel decision boundary models. We assume that for each query $x$ to the API, the attacker obtains (i) the label $f(x)$ of the query and (ii) a locally optimal counterfactual explanation $x'$.

\subsection{Tree Reconstruction Attack algorithm}\label{sec:tra}

We now introduce our proposed \textbf{Tree Reconstruction Attack (TRA)}, detailed in \Cref{alg:TRA}. 
TRA is a divide-and-conquer based algorithm that aims to reconstruct a decision tree \( f_n \) with \( n \) split levels by systematically exploring the input space \( \mathcal{X} \), using only a black-box API returning predictions and locally optimal counterfactuals.
A split level is defined as a particular value for a given feature that divides the input space into two subspaces. Note that multiple nodes within different branches of a decision tree can share the same split level.

\textbf{Algorithm Overview.}
TRA operates by maintaining a query list \( \mathcal{Q} \) that initially contains the entire input space~\(\mathcal{X}\). The algorithm iteratively processes each input subset \( \mathcal{E} \subseteq \mathcal{X}\) from \( \mathcal{Q} \), until \( \mathcal{Q} \) is empty, ensuring that all decision boundaries of the target model \( f \) are identified and replicated in the reconstructed tree. More precisely, at each iteration, TRA first retrieves the subset \( \mathcal{E} \) on top of the priority queue \( \mathcal{Q} \). It computes its geometric center~\(x\) using the \( center \) function (line 5). It then queries the oracle \( \mathcal{O}_d \) with the target model \( f \), input \( x \), and subset \( \mathcal{E} \) to obtain a counterfactual explanation \( x' = \mathcal{O}_d(f, x, \mathcal{E}) \) (line 8). The set of feature indices where \( x' \) differs from \( x \), i.e., \( \{ i \mid x'_i \neq x_i \} \) is consequently identified. For each differing feature \( i \), TRA splits the input subset \( \mathcal{E} \) into two subspaces based on the split value \( x'_i \) (\textsc{split} function, detailed in Algorithm~\ref{alg:split} in the Appendix~\ref{appendix:detailed_pseudocode}). The resulting subspaces are added to \( \mathcal{Q} \) for further exploration (line 9). If no counterfactual explanation \( x' \) exists within \( \mathcal{E} \) %
, TRA assigns the label \( y = f(x) \) to \( \mathcal{E} \), indicating that it corresponds to a leaf node in the reconstructed tree (line 11). %

\textbf{Illustrative Example.}
To illustrate TRA's operation, consider the axis-parallel decision boundary model illustrated on the right side of Figure \ref{fig:decision_bound_TRA}. Initially, TRA begins with the entire input space \(\mathcal{X} = [0,1]^2\). In the first iteration, the algorithm selects the center point \(\smash{x^{(1)}} = (0.5,0.5)\) of \(\mathcal{X}\) and queries the counterfactual explanation oracle \( \smash{\mathcal{O}_{\lVert.\rVert_2}} \), which returns a counterfactual \(\smash{x'^{(1)}} = (0.5,0.4)\) that differs from \(\smash{x^{(1)}}\) in the second feature ($x_2$). This results in the first split of the input space (to $\mathcal{E}_1 = [0,1]\times]0.4,1]$ and $\mathcal{E}_2 = \mathcal{X}\setminus\mathcal{E}_1$) based on the condition \(x_2 \leq 0.4\), as shown on the left side of Figure \ref{fig:decision_bound_TRA}. In the subsequent iterations, TRA focuses on the resulting subspaces. For example, within the subset where \(x_2 > 0.4\), TRA identifies another split at \(x_1 \leq 0.7\), further partitioning the space. After three iterations, the reconstructed decision tree (shown in Figure~\ref{fig:decision_tree_TRA}) accurately captures part of the decision boundaries of the target model, effectively distinguishing between different regions in the input space. The gray hatched zones (or ``\texttt{?}'' nodes) represent regions that have not yet been explored and remain in the query list \(\mathcal{Q}\). 

\begin{algorithm}[t]
    \caption{Tree Reconstruction Attack (TRA)}
    \label{alg:TRA}
    \begin{algorithmic}[1]
       \STATE \textbf{Input:} Oracle \( \mathcal{O}_d \), target model \( f : \mathcal{X} \mapsto \mathcal{Y}\).
       \STATE \( \mathcal{Q} \gets \{\mathcal{X}\} \) \COMMENT{Initialize query list with the entire input space}
       \REPEAT
           \STATE \( \mathcal{E} \gets \mathcal{Q}.pop(0) \) \COMMENT{Retrieve the next input subset to investigate}
           \STATE \( x \gets center(\mathcal{E}) \) \COMMENT{Compute the center point of \( \mathcal{E} \)}
           \STATE \( y \gets f(x) \) \COMMENT{Obtain the label of the center point}
           \IF{ \( \mathcal{O}_d(f, x, \mathcal{E}) \) exists }
                \STATE \( x' \gets \mathcal{O}_d(f, x, \mathcal{E}) \) \COMMENT{Obtain counterfactual explanation}
                \STATE \( \mathcal{Q} \gets  insert( \textsc{split}(\mathcal{E}, x, x')) \)
                \COMMENT{Split \( \mathcal{E} \) and add the resulting subspaces to \( \mathcal{Q} \) (Alg.~\ref{alg:split})}\label{line:split}
           \ELSE
                \STATE Assign label \( y \) to the subset \( \mathcal{E} \) \COMMENT{No counterfactual found; \( \mathcal{E} \) is a leaf node}
            \ENDIF
       \UNTIL{ \( \mathcal{Q} \) is empty }
    \end{algorithmic}
\end{algorithm}

\begin{figure}[!ht]
    \centering
     \begin{subfigure}{0.3\textwidth}
        \centering
        \begin{tikzpicture}[sibling distance=5em, level distance=2.5em, every node/.style = {shape=rectangle, rounded corners, draw, align=center, top color=white}, blue node/.style = {bottom color=class2!70, shape=circle}, red node/.style = {bottom color=class1!70, shape=circle}, grey node/.style = {bottom color=black!20, shape=circle}]       
  \node {\(x_2 \leq 0.4\)}
    child { node[grey node, xshift=-1em] {\(?\)} 
    }
    child { node[xshift=1em] {\(x_1 \leq 0.7\)} 
        child {node[red node] {\classB}}
        child {node[grey node, xshift=0.3em] {$?$}}
    };
\end{tikzpicture}
        \caption{Decision tree reconstructed by TRA after 3 iterations.}
        \label{fig:decision_tree_TRA}
    \end{subfigure}   
    \hfill
    \begin{subfigure}{0.65\textwidth}
        \centering
        \begin{tikzpicture}[scale=2.5]
    \node at (-0.05,-0.05) {\scriptsize $0$};
    \node at (-0.05,1.0) {\scriptsize $1$};
    \node at (1.0,-0.06) {\scriptsize $1$};
    
    \draw[pattern=north west lines, pattern color=black!25] (0,0) rectangle (1,1);
    \fill[class1!70] (0,0.4) rectangle (0.7,1);
    \draw[thick] (0.7,0.4) node[below, yshift=-1.06cm] {\scriptsize $0.7$} -- (0.7,1);

    \draw[thick] (0,0.4)  node[left, xshift=0.07cm] {\scriptsize $0.4$} -- (1,0.4);
    \draw[thick, ->] (0.5,0.5) node {$\bullet$} node[left] {\tiny $x^{\text{\tiny(1)}}$}  -- (0.5,0.4) node {$\times$} node[below] {\tiny $x'^{\text{\tiny(1)}}$};
    \draw[thick, ->] (0.5,0.7) node {$\bullet$} node[left] {\tiny $x^{\text{\tiny(2)}}$}   -- (0.7,0.7) node {$\times$} node[right] {\tiny $x'^{\text{\tiny(2)}}$};

    \draw (-0.02,1.0) -- (0.02,1.0);
    \draw (1.0,-0.02) -- (1.0,0.02);
    
    \draw[->] (0,0) -- (1.05,0) node[right] {\scriptsize $x_1$};
    \draw[->] (0,0) -- (0,1.05) node[above] {\scriptsize $x_2$};

\end{tikzpicture}%
\hfill%
\begin{tikzpicture}[scale=2.5]%

    \node at (-0.05,-0.05) {\scriptsize $0$};
    \node at (-0.05,1.0) {\scriptsize $1$};
    \node at (1.0,-0.06) {\scriptsize $1$};
    
    \fill[class1!70, draw=black] (0,0) rectangle (1.0,1.0);
    \fill[class2!70] (0,0) rectangle (0.55,0.3);
    \fill[class2!70] (0.2,0) rectangle (0.55,0.4);
    \fill[class2!70] (0.7,0.4) rectangle (1,1);
    \fill[class2!70] (0.8,0) rectangle (1,1);

    \draw[thick] (0.55,0) -- (0.55,0.4) -- (0.2, 0.4) -- (0.2, 0.3) -- (0, 0.3);
    \draw[thick] (0.7,1.0) -- (0.7, 0.4) -- (0.8, 0.4) -- (0.8,0);
    
    \draw (-0.02,1.0) -- (0.02,1.0);
    \draw (1.0,-0.02) -- (1.0,0.02);
    
    \draw[->] (0,0) -- (1.05,0) node[right] {\scriptsize $x_1$};
    \draw[->] (0,0) -- (0,1.05) node[above] {\scriptsize $x_2$};

\end{tikzpicture}
        \caption{Decision boundary of the model extracted by TRA after 3 iterations (Left) and of the target model \smash{$f : [0,1]^2 \to \{\colorbox{class1}{\classB},\colorbox{class2}{\textcolor{white}{\classA}}\}$} (Right).}  %
        \label{fig:decision_bound_TRA}
    \end{subfigure}
    \caption{Illustrative example of the execution of TRA.}
\end{figure}

\begin{restatable}{proposition}{propTRComplexity}\label{prop:TRComplexity}
    Let \( f_n \) be a decision tree with \( n \) split levels across a $m$-dimensional input space \( \mathcal{X} = \mathcal{X}_1 \times \mathcal{X}_2 \times \cdots \times \mathcal{X}_m \). Denote \( s_i \) as the number of split levels in \( f_n \) over the \( i \)-th feature, such that \( \sum_{i=1}^{m} s_i = n \). The worst-case complexity of \Cref{alg:TRA} is \( O\left(\prod_{\substack{i = 1}}^{m} (s_i + 1) \right) \).
\end{restatable}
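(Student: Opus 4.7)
The plan is to view the execution of \Cref{alg:TRA} as a tree of explored subspaces and bound its total size by the number of cells in the axis-parallel grid that the split levels of $f_n$ induce on $\mathcal{X}$, which is exactly $\prod_{i=1}^m(s_i+1)$.

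First, I would establish a geometric invariant: every subspace $\mathcal{E}$ that is inserted into $\mathcal{Q}$ is an axis-aligned hyperrectangle whose lower/upper bound along each feature $i$ is either an extremity of $\mathcal{X}_i$ or coincides with one of the $s_i$ split levels of $f_n$ along feature $i$. The proof is by induction on the iteration count. The base case $\mathcal{E}=\mathcal{X}$ is immediate. For the inductive step, when $\mathcal{E}$ is split using a counterfactual $x'$ returned by $\mathcal{O}_d$, the new hyperplanes introduced are of the form $x_i = x'_i$ for each feature $i$ with $x'_i \neq x_i$. The crucial point is that every such $x'_i$ must be an actual split level of $f_n$ along feature $i$: otherwise $f$ would be locally constant around $x'$ in direction $i$, and a perturbation of the form $v = -\mathrm{sign}(x'_i - x_i)\,\delta\,e_i$ for small $\delta>0$ would give $f(x'+v) = f(x') \neq f(x)$ and $d(x, x'+v) < d(x, x')$, contradicting Definition~\ref{def_local_opt}. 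Hence the invariant is preserved.

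Next, I would model the execution as a rooted tree $T$ whose root is $\mathcal{X}$, whose internal nodes are subspaces that were split, and whose leaves are subspaces to which a label was assigned on line 11. Because \textsc{split} partitions $\mathcal{E}$ into its children and processed subspaces are never revisited, the leaves of $T$ form a partition of $\mathcal{X}$ into axis-aligned boxes whose boundaries are split levels of $f_n$. Since each such leaf-box contains at least one cell of the grid induced by the split levels, and distinct leaves are disjoint, the number of leaves $|L|$ satisfies $|L| \leq \prod_{i=1}^m (s_i+1)$. Moreover, each internal node of $T$ has at least two children (as $x' \neq x$ whenever the oracle returns a counterfactual), so by a standard counting argument $|I| \leq |L| - 1$. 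The total number of iterations of \Cref{alg:TRA} is therefore at most $|L| + |I| \leq 2\prod_{i=1}^m(s_i+1) - 1 = O\bigl(\prod_{i=1}^m(s_i+1)\bigr)$, and since each iteration performs $O(1)$ oracle calls and the $\textsc{split}$ operation over a subspace bounded by split levels can be carried out in time polynomial in $m$, the overall worst-case complexity matches the claim.

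The main obstacle is the inductive step of the invariant, namely proving that \emph{every} differing coordinate of a locally optimal counterfactual is an actual split level of $f_n$. This is where the axis-parallel structure of $\mathcal{F}$ is indispensable: it ensures that the level sets of $f_n$ are unions of axis-aligned boxes, so that local optimality along each coordinate axis can be turned into a membership condition on the set of split thresholds. A secondary, more routine, check is that $\textsc{split}$ indeed returns a partition of $\mathcal{E}$ (so that leaves of $T$ tile $\mathcal{X}$ without overlap), which follows from its construction in Algorithm~\ref{alg:split}.
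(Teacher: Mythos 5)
Your proof is correct, but it follows a genuinely different route from the paper's. The paper argues by induction on the number of split levels $n$: it peels off the first detected split, applies the inductive hypothesis to the two resulting subtrees $f_{n_1}$ and $f_{n_2}$, and combines the bounds using $s_j^{(n+1)} = s_j^{(n_1)} + s_j^{(n_2)} + 1$. You instead give a direct, global counting argument: an invariant showing that every region in $\mathcal{Q}$ is a box whose faces lie on actual split levels, a comparison of the leaves of the recursion tree against the $\prod_{i=1}^m(s_i+1)$ cells of the grid induced by the splits, and the standard bound $|I|\leq |L|-1$ for internal nodes. Two things your version buys: first, it makes explicit and proves the key geometric lemma that the paper leaves implicit --- that every differing coordinate of a \emph{locally optimal} counterfactual must coincide with a split level of $f_n$, which is exactly where Definition~\ref{def_local_opt} and the axis-parallel structure enter; second, it yields the sharp constant $2\prod_{i=1}^m(s_i+1)-1$ rather than just the $O(\cdot)$ bound, which is precisely the quantity the paper needs later in the proof of Proposition~\ref{prop:TRA_compet}. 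One small caveat worth a sentence in a polished write-up: your perturbation argument needs $d(x,x'+v) < d(x,x')$ when a single coordinate of $x'$ moves strictly toward $x_i$, which holds for the usual $\ell_p$ distances with $p<\infty$ but can fail for $\ell_\infty$; the paper's own reasoning implicitly makes the same assumption, so this is a shared, minor restriction on $d$ rather than a flaw specific to your argument. You may also want to note the degenerate edge case where $x'_i$ equals a boundary of $\mathcal{E}$ (so one side of the split is empty), which is handled by the tie rules of Algorithm~\ref{alg:split} and does not affect the count.
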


\begin{restatable}{corollary}{corTRComplexity}\label{cor:TRComplexity}
    The worst-case complexity of \Cref{alg:TRA} is \( O\left(\left(1+ \frac{n}{m}\right)^m\right) \).
\end{restatable}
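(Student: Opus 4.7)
The plan is to derive the corollary directly from Proposition~\ref{prop:TRComplexity} by bounding the product $\prod_{i=1}^m (s_i+1)$ subject to the linear constraint $\sum_{i=1}^m s_i = n$. In other words, I would treat the corollary as an exercise in constrained optimization: find the maximum value that the worst-case bound established in the proposition can take as the distribution of split levels $(s_1,\dots,s_m)$ varies over nonnegative integers (or reals) summing to $n$.

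First I would invoke the arithmetic–geometric mean inequality applied to the positive quantities $s_i+1$. Since $\sum_{i=1}^m (s_i+1) = n+m$, AM–GM gives
\[
\left(\prod_{i=1}^m (s_i+1)\right)^{1/m} \;\le\; \frac{1}{m}\sum_{i=1}^m (s_i+1) \;=\; \frac{n+m}{m} \;=\; 1 + \frac{n}{m}.
\]
Raising both sides to the $m$-th power yields $\prod_{i=1}^m (s_i+1) \le \left(1 + \tfrac{n}{m}\right)^m$. Combining this with the bound $O\!\left(\prod_{i=1}^m (s_i+1)\right)$ from Proposition~\ref{prop:TRComplexity} immediately gives the claimed $O\!\left((1+n/m)^m\right)$ worst-case complexity.

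There is essentially no hard step here: the argument is a one-line application of AM–GM, and the maximizer (equal split counts $s_i = n/m$) is exactly the intuitive worst case in which splits are spread evenly across features. The only thing I would be slightly careful about is noting that the bound holds regardless of whether $n/m$ is an integer, since AM–GM does not require integrality of the $s_i$; this makes the corollary a clean uniform upper bound over all admissible split distributions.
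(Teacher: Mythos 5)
Your proof is correct and reduces the corollary to the same constrained optimization problem as the paper: maximize $\prod_{i=1}^m (s_i+1)$ subject to $\sum_{i=1}^m s_i = n$. The difference is in how that maximum is bounded. The paper takes the logarithm of the objective and applies the KKT conditions to locate the critical point $s_i = n/m$, then substitutes back; you instead apply the AM--GM inequality to the quantities $s_i+1$, using $\sum_i (s_i+1) = n+m$, which yields $\prod_i (s_i+1) \le \left(1+\tfrac{n}{m}\right)^m$ in one line. Your route is more elementary and, arguably, slightly tighter as an argument: AM--GM delivers a global upper bound directly, whereas the KKT computation only identifies a stationary point and implicitly relies on the concavity of $\sum_i \log(s_i+1)$ to conclude it is the global maximizer (a step the paper does not spell out). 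Your closing remark that integrality of the $s_i$ is irrelevant to the bound is also a fair observation that applies equally to both arguments. Both proofs are valid and give the same conclusion; yours is self-contained and avoids invoking optimization machinery for what is ultimately a textbook inequality.
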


The proofs of Proposition \ref{prop:TRComplexity} and Corollary \ref{cor:TRComplexity} are provided in the Appendix \ref{proof:prop_1}.
Proposition \ref{prop:TRComplexity} establishes a first simple upper bound on the complexity of \Cref{alg:TRA}. %
Intuitively, consider a two-dimensional decision boundary that partitions the space in a chessboard-like pattern of size \( s_1 \times s_2 \). A comprehensive mapping of such space necessitates at least \( s_1 \times s_2 \) queries (one in each sub-square), a requirement that holds for multi-class classification scenarios and in high-dimension.%

\textbf{Query Selection Analysis.} An important hyperparameter of TRA %
is the strategy used to select query points within the input space. By default, TRA selects the geometrical center of the current input subset $\mathcal{E}$ as the query point. Alternatively, one could choose other points such as the lower/upper left/right corners, or even a random point. In the following, we present theoretical results analyzing the impact of different query selection strategies on the algorithm's performance. To this end, we leverage the notion of competitive analysis for online discovery problems discussed in Section~\ref{sec:online_analysis_extraction_attacks}.

\begin{restatable}{proposition}{propTRAcompet}\label{prop:TRA_compet}For $(n,m) \in \mathbb{N}^2$, Algorithm \ref{alg:TRA} achieves a competitive ratio of $C_{TRA}^{(n,m)}$, defined as:
    \begin{align*}
    C_{TRA}^{(n,m)} = \frac{2\prod_{j=1}^m (s_j + 1) - 1}{n + 1}
    \leq \frac{2\left(1 + \frac{n}{m}\right)^m - 1}{n + 1},
    \end{align*}
    where $s_i$ is the number of split levels along the $i$-th feature within the tree $f_n$. 
      
\end{restatable}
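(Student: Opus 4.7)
The plan is to decompose the competitive ratio into an upper bound on the numerator $Q_{TRA}^f$ and a lower bound on the denominator $Q_{opt}^f$, then combine the two and invoke an AM-GM argument to obtain the closed-form bound in terms of $n$ and $m$.

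For the upper bound, I would refine the counting argument already used in Proposition~\ref{prop:TRComplexity}. Each iteration of Algorithm~\ref{alg:TRA} pops one subspace $\mathcal{E}$ from $\mathcal{Q}$ and either (i) splits it into two subspaces that are pushed back onto $\mathcal{Q}$, or (ii) certifies $\mathcal{E}$ as monochromatic (a leaf of the reconstructed tree). Viewing the execution as a binary tree whose leaves are the final cells of the induced partition, Proposition~\ref{prop:TRComplexity}'s argument bounds the number of such leaves by $L \leq \prod_{j=1}^m (s_j + 1)$. Since a binary tree with $L$ leaves has exactly $L-1$ internal nodes, and every iteration corresponds to either an internal node or a leaf of this execution tree, the total number of queries satisfies $Q_{TRA}^f \leq 2L - 1 \leq 2\prod_{j=1}^m (s_j + 1) - 1$.

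For the lower bound, I would argue $Q_{opt}^f \geq n+1$ by noting that any tree with $n$ distinct split levels induces a partition of $\mathcal{X}$ with at least $n+1$ cells, and that an omniscient offline algorithm still needs enough responses to certify the label of each cell and the location of each split. A standard indistinguishability (adversary) argument finishes this: given any sequence of at most $n$ queries, one can exhibit two trees $f_1 \neq f_2$, each with $n$ split levels, that produce identical transcripts of (label, locally-optimal counterfactual) responses, so certification of $f$ is impossible with fewer than $n+1$ queries.

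Combining these, $C_{TRA}^{(n,m)} = Q_{TRA}^f / Q_{opt}^f \leq (2\prod_{j=1}^m (s_j+1) - 1)/(n+1)$. The stated closed-form upper bound then follows from the AM-GM inequality applied to $\prod_{j=1}^m (s_j + 1)$ subject to the linear constraint $\sum_{j=1}^m s_j = n$ with $s_j \geq 0$; the product is maximized when all $s_j$ are equal to $n/m$, yielding $\prod_{j=1}^m (s_j+1) \leq (1 + n/m)^m$.

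The main obstacle is the lower bound $Q_{opt}^f \geq n+1$. The subtlety is that a counterfactual response carries information about \emph{two} adjacent cells (the one containing $x$ and the one across the revealed boundary), which naively suggests a bound like $\lceil (n+1)/2 \rceil$ rather than $n+1$. To justify $n+1$ one has to carefully argue that the counterfactual point lies exactly on the boundary and therefore does not unambiguously belong to either adjacent cell until the boundary itself is certified by an interior query on the other side, making each query contribute at most one genuinely new leaf-label observation. Constructing the explicit pair $f_1, f_2$ for the adversarial step — likely by toggling the label of a single un-probed leaf while matching all returned counterfactuals — is the technical heart of the argument.
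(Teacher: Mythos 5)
Your proposal establishes only half of what the paper proves. The proposition asserts an \emph{equality}: the competitive ratio of TRA \emph{is} $\bigl(2\prod_{j=1}^m (s_j+1) - 1\bigr)/(n+1)$, not merely that TRA is $c$-competitive for this $c$. The paper's proof accordingly has two parts: (i) the upper bound $Q_{TRA}^f \leq 2\prod_j (s_j+1) - 1$ together with $Q_{opt}^f \geq n+1$, which you reproduce essentially as the paper does (your execution-tree count of $2L-1$ queries with $L \leq \prod_j(s_j+1)$ cells matches the paper's counting, and the AM--GM / KKT step for the closed form is the same as in Corollary~\ref{cor:TRComplexity}); and (ii) a matching \emph{lower bound}, obtained by constructing an adversarial instance on which TRA actually spends $2\prod_j(s_j+1)-1$ queries while the omniscient offline algorithm needs only $n+1$. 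Part (ii) is absent from your proposal, and without it you cannot conclude the stated equality; it is also the structural ingredient that Proposition~\ref{prop:DC_compet} builds on.

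The missing construction is the technical heart of the argument, and it is not where you located it. You flag the bound $Q_{opt}^f \geq n+1$ as the main obstacle, but the paper disposes of that in one line (take a tree in which each split level appears exactly once, so there are $n+1$ leaves, each of which must be certified). The real work is the adversarial tree: a single-branch tree whose split values are arranged so that, inside every hyper-rectangle TRA considers, the splits of dimension $j$ always lie closer to the query point than those of dimension $j+1$. TRA therefore discovers all $s_1$ splits of dimension $1$ first, then rediscovers the $s_2$ splits of dimension $2$ inside each of the $s_1+1$ resulting slabs, and so on, for a total of $\sum_{i=1}^m s_i \prod_{j<i}(s_j+1) + \prod_{j=1}^m(s_j+1) = 2\prod_j(s_j+1)-1$ queries, whereas the offline algorithm queries once per leaf ($n+1$ queries, since the tree is a single branch). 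You would need to supply this construction, or something equivalent, to complete the proof.
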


\begin{restatable}{proposition}{propDCcompet}\label{prop:DC_compet}
    For all $n > 0$ and $m \geq 2$, no divide-and-conquer-based algorithm can achieve a competitive ratio better than $C^{(n,m)}_{TRA}$.
\end{restatable}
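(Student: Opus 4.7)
The plan is to match the upper bound of Proposition~\ref{prop:TRA_compet} with a lower bound via an adversarial construction combined with a structural argument on divide-and-conquer (D\&C) executions. First, I would formalize any D\&C model-extraction algorithm $\mathcal{A}$ as one whose execution trace induces a rooted binary tree $T_\mathcal{A}$: each node processes one current subspace by placing a single query inside it, and, based on the oracle's response, either declares the subspace a leaf (assigning the observed label) or uses the returned counterfactual to partition it into two children that are processed recursively. Under the locally-optimal counterfactual setting of Definition~\ref{def_local_opt}, each split is binary, so the total number of queries equals the number of nodes of $T_\mathcal{A}$, satisfying $|T_\mathcal{A}| = 2 L_\mathcal{A} - 1$, where $L_\mathcal{A}$ denotes the number of leaves of $T_\mathcal{A}$.

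Next I would exhibit the worst-case target. Fix any split profile $(s_1,\dots,s_m)$ with $\sum_j s_j = n$ and let $f_n$ be the ``chessboard'' tree whose split levels along feature $j$ are placed uniformly in $(0,1)$, inducing $N := \prod_{j=1}^m (s_j+1)$ rectangular cells carrying an alternating $\{0,1\}$-labeling so that no two facet-adjacent cells share a label. The crux of the lower bound is to show that on $f_n$, regardless of $\mathcal{A}$'s query-placement policy, the locally-optimal counterfactual oracle forces $\mathcal{A}$ to isolate every cell: whenever a processed subspace $\mathcal{E}$ intersects two cells of different labels, any query $x \in \mathcal{E}$ yields a locally-optimal counterfactual $x' \in \mathcal{E}$ lying on a split level of $f_n$, compelling $\mathcal{A}$ to partition $\mathcal{E}$ further; and because the labels alternate, no multi-cell subspace is monochromatic and thus may not be declared a leaf. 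Consequently $L_\mathcal{A} \geq N$, which gives $Q_\mathcal{A}^{f_n} \geq 2N - 1$.

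Third, I would show that the omniscient offline algorithm achieves $Q_{opt}^{f_n} \leq n+1$. Knowing $f_n$, the offline attacker places an initial query in a corner cell (certifying one label) and then issues a chain of $n$ further queries, each positioned just across a split not yet revealed, so that its locally-optimal counterfactual differs from the query in exactly one coordinate and discloses a fresh split level; the resulting $n+1$ query-response pairs jointly certify all $n$ split values and the global alternating labeling. Combining with the previous bound yields $Q_\mathcal{A}^{f_n}/Q_{opt}^{f_n} \geq (2N-1)/(n+1) = C^{(n,m)}_{TRA}$, matching Proposition~\ref{prop:TRA_compet} and thereby establishing the optimality of TRA within the D\&C class.

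The principal obstacle I anticipate is the forcing argument in the second paragraph: rigorously showing that for any query placement $x \in \mathcal{E}$ inside a multi-cell subspace, the adversarial oracle can always return a locally-optimal counterfactual lying on a true split level of $f_n$, so that every D\&C partition of $\mathcal{E}$ respects the chessboard structure and cannot short-circuit isolation by merging or skipping cells. Handling boundary cases (queries placed exactly on a split hyperplane, adversarial tie-breaking among equidistant counterfactuals, and verifying that the two children of $\mathcal{E}$ jointly cover every cell it contained) will likely require a careful case analysis parameterized by the position of $x$ within the cells of $\mathcal{E}$, and is where the hypothesis $m \geq 2$ enters to ensure multi-feature ambiguity cannot be exploited.
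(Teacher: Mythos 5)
Your proposal has a genuine gap, and it lies in the choice of adversarial instance. The chessboard does force any divide-and-conquer algorithm to isolate all $N=\prod_{j}(s_j+1)$ cells, so the numerator $Q_{\mathcal{A}}^{f_n}\geq 2N-1$ is fine; but the denominator is not $n+1$ for this instance. Certifying functional equivalence requires verifying the label of \emph{every} cell: knowing the $n$ split positions does not certify "the global alternating labeling," since an instance identical to your chessboard except for one flipped interior cell would be indistinguishable from it under the $n+1$ query--response pairs you describe. A chessboard decision tree has $N$ leaves (each split level is repeated across many branches), so even the omniscient offline algorithm needs on the order of $N$ queries --- the paper itself makes this point when motivating Proposition~\ref{prop:TRComplexity} ("a comprehensive mapping of such space necessitates at least $s_1\times s_2$ queries, one in each sub-square"). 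Hence on the chessboard the ratio you obtain is roughly $(2N-1)/N\approx 2$, far short of $C^{(n,m)}_{TRA}$.

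The paper avoids this by using a completely different hard instance: an \emph{adaptively constructed} single-branch tree in which each split level appears exactly once, so the tree has only $n+1$ leaves and $Q_{opt}^{f}=n+1$. The adversary watches where the D\&C algorithm queries, places the detected split just past the query point, and declares it to be the \emph{last} decision node on its branch, so that both resulting subspaces still contain all remaining splits; an induction on $n$ then shows the algorithm is forced into $1+Q_1+Q_2 = 2\prod_j(s_j+1)-1$ queries while the offline optimum stays at $n+1$. If you want to repair your argument, you must replace the chessboard by an instance whose underlying tree is non-redundant (few leaves, hence small $Q_{opt}$) yet still forces the recursion to rediscover splits in every subregion --- which essentially requires the adaptive-adversary construction rather than a fixed symmetric one.
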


The proofs of Propositions \ref{prop:TRA_compet} and \ref{prop:DC_compet} are provided in Appendix \ref{proof:prop_1}.
Proposition \ref{prop:TRA_compet} provides the competitive ratio achieved by the TRA algorithm, while Proposition \ref{prop:DC_compet} establishes that the choice of query position does not affect the competitive ratio for any divide-and-conquer algorithm iteratively partitioning the input space. These propositions demonstrate that TRA not only offers a competitive approach to tree reconstruction under various query selection strategies, but also sets a theoretical limit that cannot be surpassed by other methods with a similar divide-and-conquer structure. %

\textbf{Anytime Behavior.} Since the query budget is often not known in advance and may vary depending on the target model, it is crucial for an extraction attack to operate in an \emph{anytime} fashion---that is, to produce a usable classifier even if interrupted before completion. TRA satisfies this property by assigning provisional labels at each input space split (line 9 of Algorithm~\ref{alg:TRA}): one subregion inherits the query's label, the other the counterfactual's. This guarantees that a valid decision tree classifier is available at any point during execution. The quality of intermediate classifiers depends on the ordering of the priority queue $\mathcal{Q}$. In practice, preliminary experiments suggested good anytime performance using breadth-first search (BFS), which distributes the exploration evenly. 
While the design of alternative priority strategies is a promising research direction to enhance TRA's anytime performance, the choice of exploration order does not impact the total number of queries required for exact reconstruction ---this is solely determined by the algorithm's divide-and-conquer structure. Finally, we note that the fraction of total volume corresponding to the leaf regions that have already been fully explored by TRA at a given iteration directly lower bounds the proportion of feature space for which functional equivalence can be guaranteed in an anytime manner. In the case of a uniform data distribution over the feature space, this value also lower bounds the anytime surrogate fidelity, and one could use it to early stop TRA as soon as a target fidelity level is achieved.

\section{Experiments}\label{sec:expes_both}

We now empirically evaluate the efficiency and effectiveness of our proposed TRA extraction attack and benchmark it against existing model extraction techniques. We first introduce the experimental setup, before discussing the results. 

\subsection{Experimental Setup}\label{sec:exp_setup}

\textbf{Datasets.} We use five binary classification datasets, selected from related works on model extraction attacks~\citep{aivodji2020model,wang2022dualcf,Florian2016StealingMachineLearningModels} and encompassing a variety of feature types, dimensionalities, and classification tasks, as summarized in Table \ref{tab:datasets}. More precisely, we consider the COMPAS dataset~\citep{angwin2016machine}, as well as the Adult Income (Adult), Default of Credit Card Clients (Credit Card), German Credit and Student Performance (SPerformance) datasets from the UCI repository~\citep{Dua:2019}. Categorical features are one-hot encoded, while numerical, discrete (ordinal) and binary ones are natively handled by both tree building procedures and reconstruction attacks. Each dataset is partitioned into training, validation, and test sets with proportions of 60\%, 20\%, and 20\%, respectively. 
\begin{table}[H]
    \caption{Summary of the datasets used in our experiments. For each dataset, $m$ is the number of features after pre-processing, encompassing $m_N$ numerical, $m_B$ binary, $m_C$ categorical (before one-hot encoding) and $m_D$ discrete (ordinal) ones. Each of the $m_C$ categorical features is one-hot encoded into $c_j$ binary dimensions, where $c_j$ is the number of categories of feature $j$. As a result, the total number of features becomes: $m = m_N + m_B + m_D + \sum_{j=1}^{m_C}c_j $.}
    \label{tab:datasets}
    \begin{center}
        \begin{small}
            \begin{tabular}{lcccccc}
                \toprule
                \textbf{Dataset} & \#Samples & $m$ & $m_N$ & $m_B$ & $m_C$ & $m_D$  \\ 
                \midrule
                Adult & 45222 & 41 & 2 & 2 & 4 & 3 \\
                COMPAS & 5278 & 5 & 0 & 3 & 0 & 2 \\
                Credit Card & 29623 & 14 & 0 & 3 & 0 & 11 \\
                German Credit & 1000 & 19 & 1 & 0 & 3 & 5 \\
                SPerformance & 395 & 43 & 0 & 13 & 4 & 13\\
                \bottomrule
            \end{tabular}
        \end{small}
    \end{center}
    \vskip -0.1in
\end{table}

\textbf{Training the target tree-based models.} 
We train two types of tree-based target models implemented in the scikit-learn library~\citep{scikit-learn}: decision trees and random forests. For decision trees, we experiment with varying $\texttt{max\_depth}$ parameters ranging from 4 to 10, as well as trees without maximum depth constraint ($\texttt{max\_depth}$ set to $\texttt{None}$). The random forests experiments focus on the COMPAS dataset, employing different numbers of trees (5, 25, 50, 75 and 100) to assess scalability and robustness. To prevent overfitting, we utilize the validation set for hyperparameter tuning and apply cost-complexity pruning where applicable. All the details of training procedures and hyperparameters configurations are discussed in Appendix~\ref{appendix:training_details}.

\textbf{Baselines.}
We benchmark TRA against three state-of-the-art model extraction attacks.
First, \PathFinding{}~\citep{Florian2016StealingMachineLearningModels} is the only functionally equivalent model extraction attack against decision trees. While it does not rely on counterfactual examples, it assumes access to a leaf identifier indicating in which leaf of the target decision tree the query example falls. It is thus not applicable to random forests.
Second, \CF{} \citep{aivodji2020model} leverages counterfactual explanations to build a labeled attack set and train a surrogate model mimicking the target one.
Third, \DualCF{} \citep{wang2022dualcf} enhances the \CF{} approach by additionally computing the counterfactuals of the counterfactuals themselves, which has been shown to improve fidelity. 
We adapt the number of queries (which must be pre-fixed for both \CF{} and \DualCF{}) to the complexity of the target model as %
it is set to $50$ times the number of nodes in the target decision tree. \PathFinding~is configured with $\epsilon = 10^{-5}$ (pre-fixed precision of the retrieved splits) to achieve approximate functional equivalence. 

We evaluate three surrogate model variants for %
\CF{} and \DualCF{}: a multilayer perceptron (MLP), and two models from the same hypothesis class as the target model (i.e., a decision tree or a random forest), one of them sharing the exact same hyperparameters, and the other using default hyperparameter values. 
These variants reflect different levels of adversarial knowledge: the hypothesis class, the exact hyperparameters, or neither. Both \CF{} and \DualCF{} were originally evaluated using heuristic counterfactual explanations from the DiCE~\citep{Mothilal_2020} algorithm. To assess the impact of explanation optimality on the attack's performance, and to ensure fair comparisons, we tested these baselines using either DiCE or the OCEAN framework~\citep{parmentier2021optimal}, which formulates counterfactual search as a mixed-integer linear program and guarantees optimality.

Complete experimental results across all configurations of \CF{} and \DualCF{} are reported in Appendix~\ref{appendix:surrogate_based_attacks}. %
They demonstrate that fitting surrogate models of the same hypothesis class facilitates the extraction of both decision trees and random forests. However, knowledge of their hyperparameters does not provide any advantage to either attack. Finally, the non-optimal counterfactuals provided by DiCE lead to better fidelity results than the optimal ones computed by OCEAN for these two attacks. This can be explained by their heuristic nature, which leads to the building of more diverse counterfactuals, not necessarily lying next to a decision boundary. 
In the next section, we display results only for the best-performing configuration for both \CF{} and \DualCF{}, achieved by training a surrogate model of the same hypothesis class and using DiCE counterfactuals.

\textbf{Evaluation.} 
We assess each model extraction attack using two metrics: \emph{fidelity} and \emph{number of queries} made to the prediction and counterfactual oracle API during the attack. Fidelity measures the proportion of inputs for which the extracted model agrees with the target model, quantifying attack success.
We compute fidelity over 3000 points sampled uniformly from the input space, providing a broad evaluation across the entire feature domain.
For completeness, we also report fidelity measured on a test set (i.e., drawn from the data distribution) for our experiments using random forest target models, in Appendix~\ref{appendix:fidelity_test_set}, which shows the same performance trends.

\textbf{Counterfactual oracle.} As discussed in Section~\ref{sec:tra}, TRA requires the use of a locally optimal counterfactual oracle. Since global optimality is a \emph{sufficient} condition, we use the popular OCEAN oracle for simplicity in our main experiments with TRA. In Appendix~\ref{appendix_condition}, we report the performance of TRA using a simple heuristic oracle that produces locally optimal explanations. These experiments show that the choice of oracle (optimal or heuristic) has minimal impact on reconstruction performance.

All experiments are run on a computing cluster with homogeneous nodes using Intel Platinum 8260 Cascade Lake @ 2.4GHz CPU. Each run uses four threads and up to $4$GB of RAM each (multi-threading is only used by the OCEAN oracle). We repeat each experiment five times with different random seeds and report average values. The source code to reproduce all our experiments and figures 
is accessible at \url{https://github.com/vidalt/Tree-Extractor}, under an MIT license.

\subsection{Results}\label{sec:expes_results}

\begin{figure*}[htb]
     \centering
     \begin{subfigure}[b]{0.45\textwidth}
         \centering
         \includegraphics[width=\textwidth]{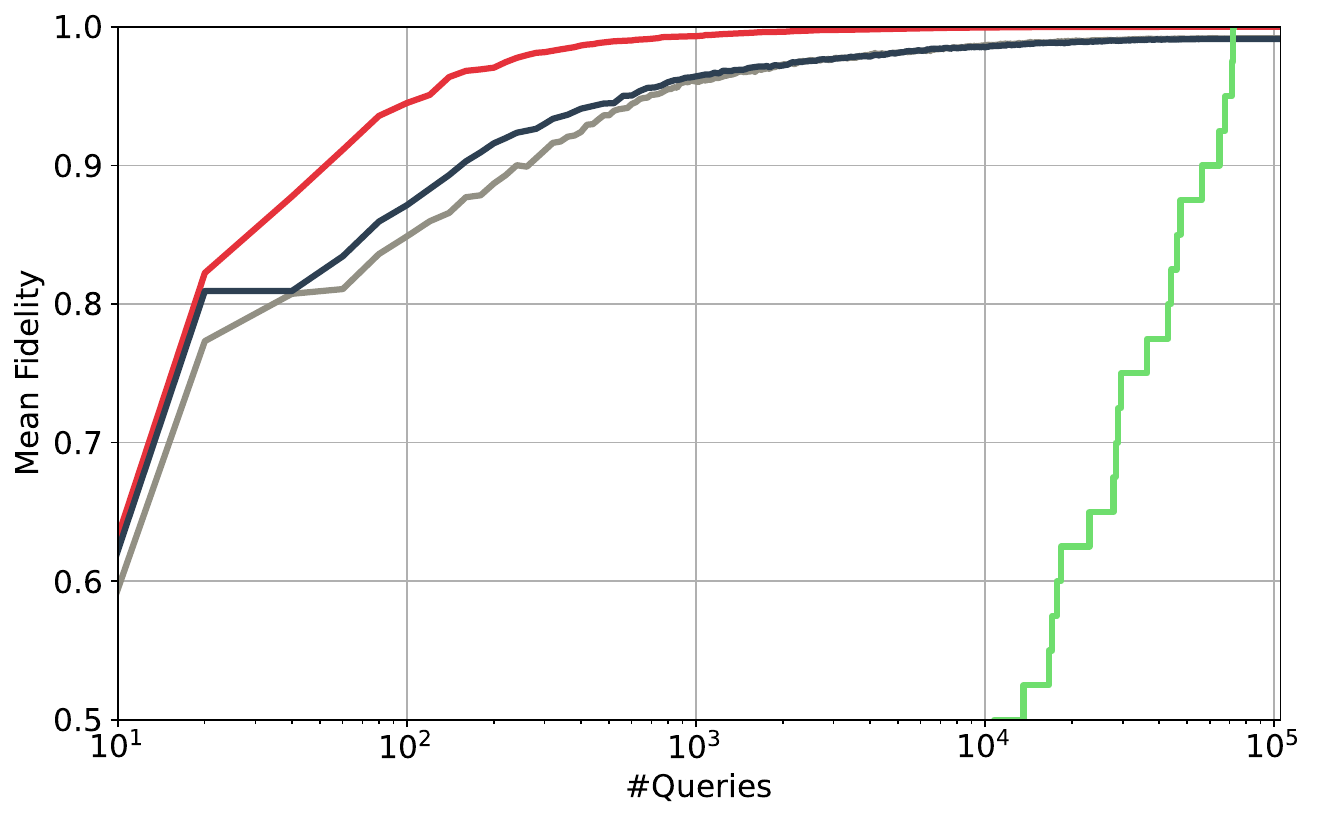}
         \caption{Adult dataset}
         \label{fig:MFvsQ_adult_mainpaper}
     \end{subfigure}
     \hfill
     \begin{subfigure}[b]{0.45\textwidth}
         \centering
         \includegraphics[width=\textwidth]{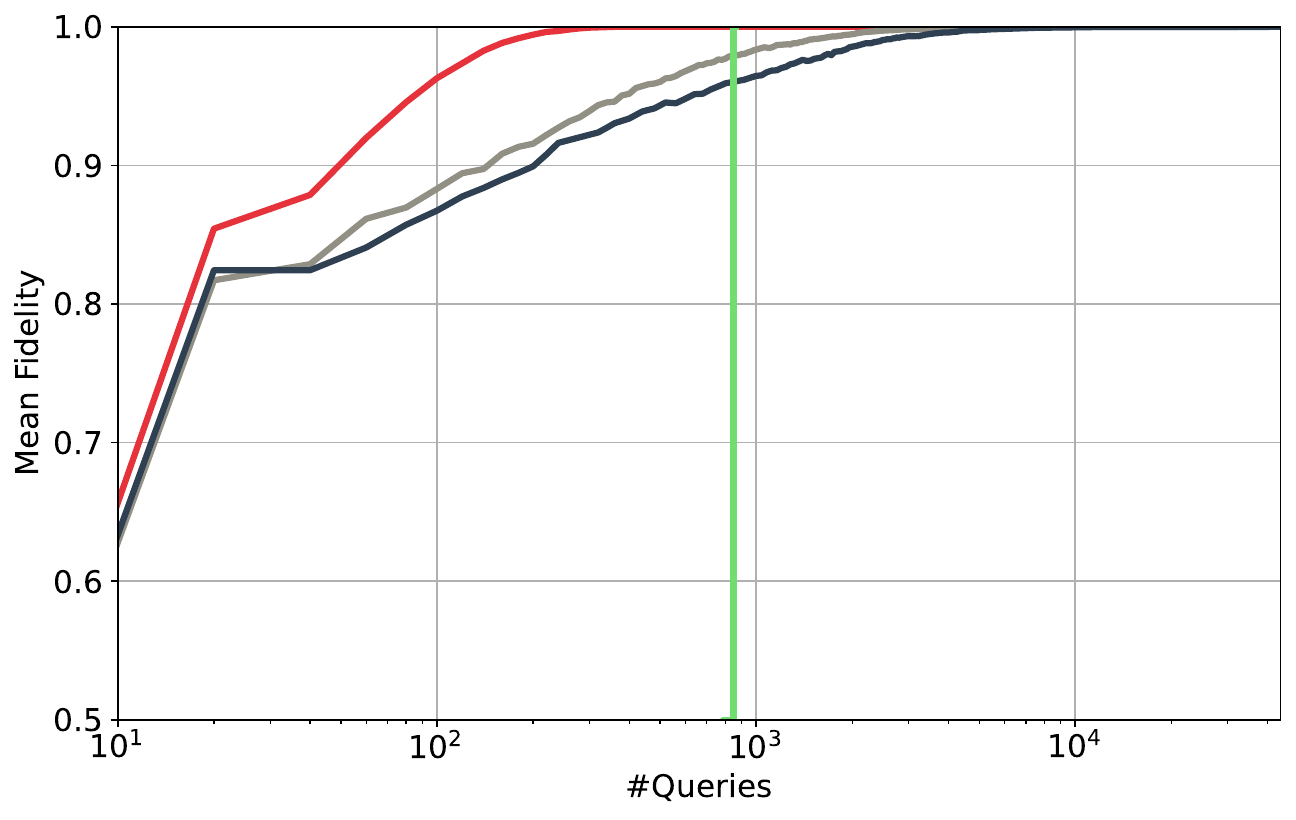}
         \caption{COMPAS dataset}
         \label{fig:MFvsQ_compas_mainpaper}
     \end{subfigure}\\
     \vspace{-15pt}
     \begin{subfigure}[b]{0.26\textwidth}
         \centering     
         \includegraphics[width=1.0\textwidth]{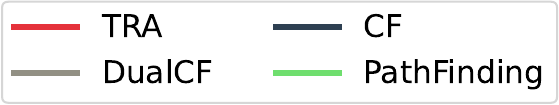}
         \label{fig:MFvsQ_legend_mainpaper}
     \end{subfigure}
     \vspace{-9pt}
    \caption{Anytime performance of all the considered model extraction attacks against decision trees.}%
    \label{fig:MFvsQ_adultcompas}
\end{figure*}

\begin{figure*}[t]
     \centering
     \begin{subfigure}[b]{0.45\textwidth}
         \centering
         \includegraphics[width=\textwidth]{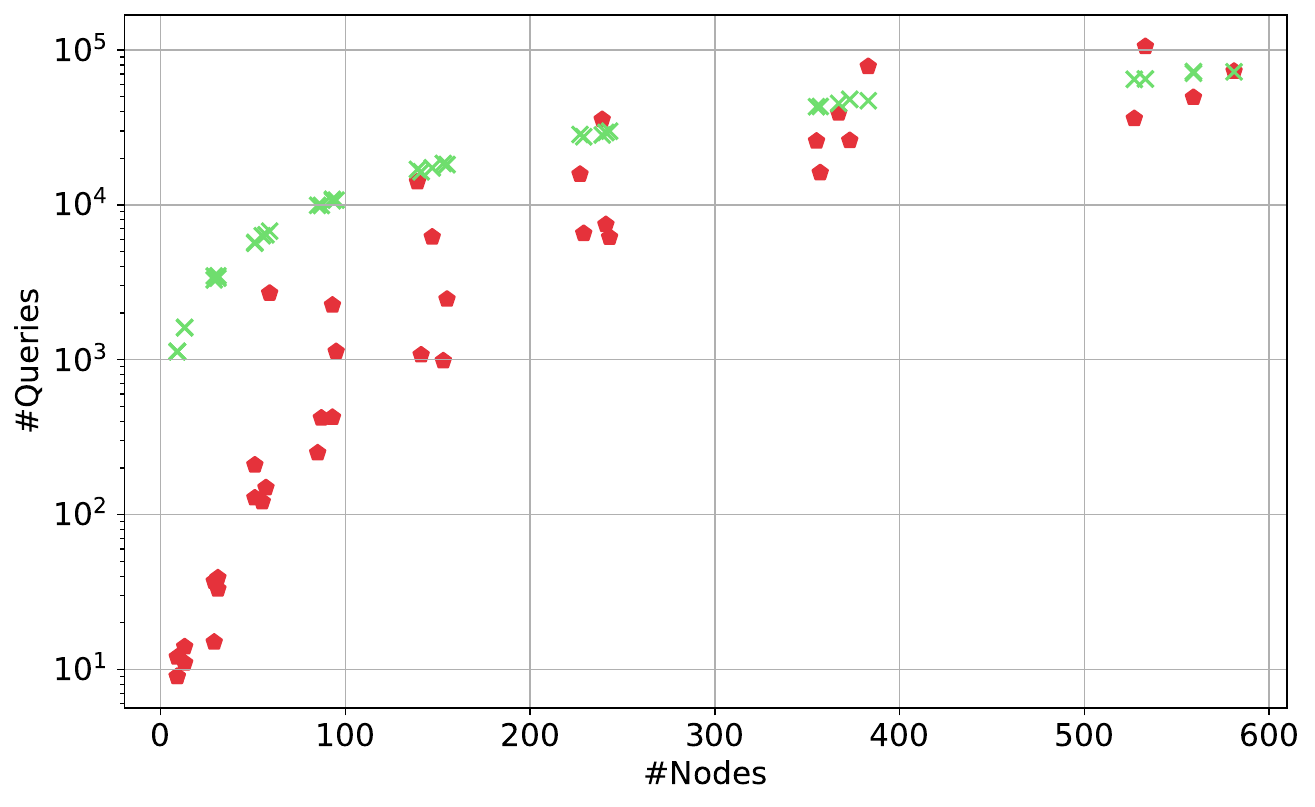}
         \caption{Adult dataset}
         \label{fig:QvsN_adult_mainpaper}
     \end{subfigure}
     \hfill
     \begin{subfigure}[b]{0.45\textwidth}
         \centering
         \includegraphics[width=\textwidth]{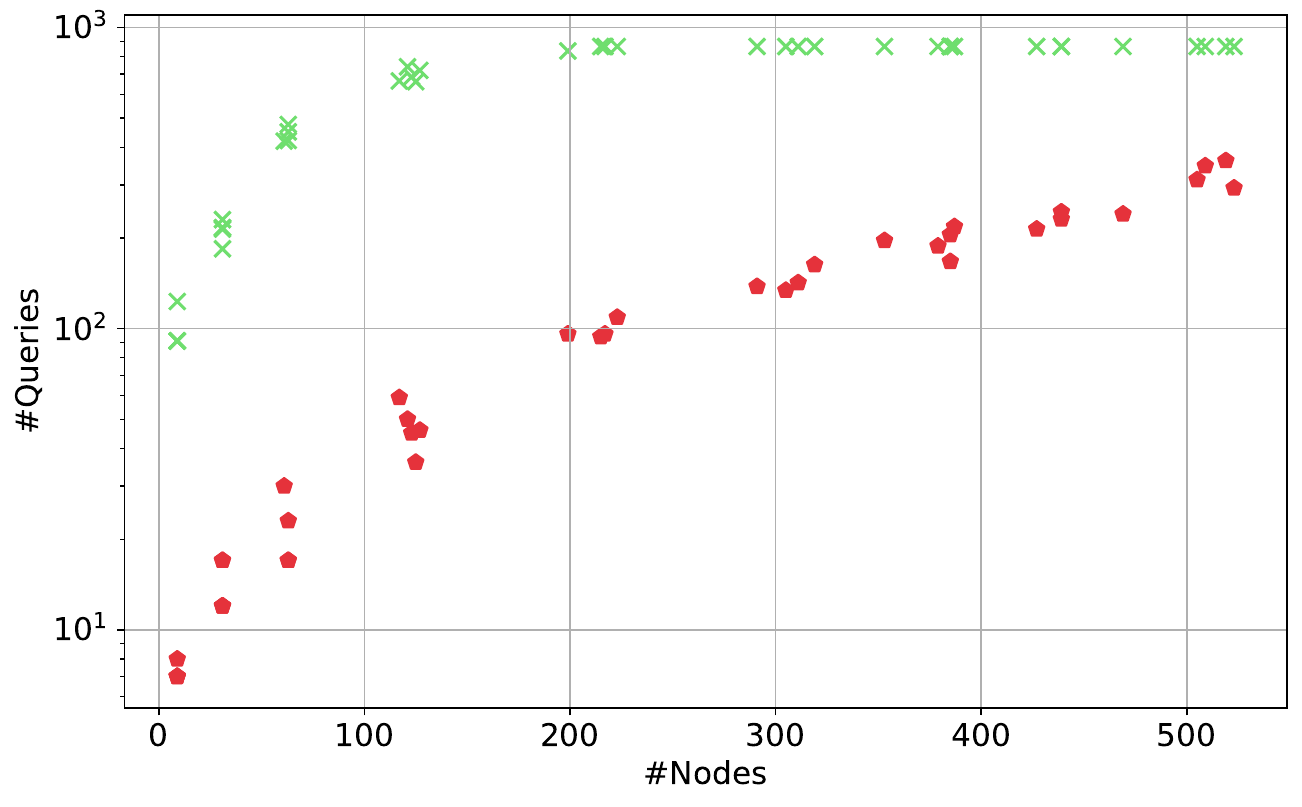}
         \caption{COMPAS dataset}
         \label{fig:QvsN_compas_mainpaper}
     \end{subfigure}\\
     \vspace{-15pt}
    \begin{subfigure}[b]{0.24\textwidth}
         \centering     
         \includegraphics[width=\textwidth]{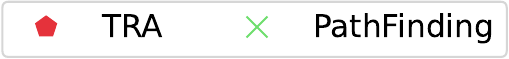}
         \label{fig:QvsN_legend_mainpaper}
     \end{subfigure}
     \vspace{-8pt}
     \caption{Performance of the functionally equivalent model extraction attacks against decision trees. We report the number of queries required to fully reconstruct the trees as a function of their size.}
    \label{fig:QvsN_adultcompas}
    \vspace{-15pt}
\end{figure*}

\textbf{Result 1. TRA outperforms existing approaches in terms of number of queries and anytime fidelity to extract decision trees.}
Figure \ref{fig:MFvsQ_adultcompas} presents the average fidelity of surrogate models obtained from the four studied model extraction attacks on decision trees, as a function of the number of queries, for the Adult and COMPAS datasets. 
Here, each point on a curve is the mean fidelity over all 40 extraction tasks (eight tree depths \(\times\) five seeds), so when it reaches 1.00, all target trees were perfectly reconstructed. 
Results for additional datasets, provided in Figure~\ref{fig:MFvsQ_full} (Appendix~\ref{appendix:all_results}), exhibit the same trends. Across all cases, TRA consistently achieves higher fidelity for any fixed query budget and converges to perfect fidelity orders of magnitude faster.  Notably, unlike \CF{} and \DualCF{}, TRA and \PathFinding{} provide formal guarantees of functional equivalence.

\textbf{Result 2. TRA achieves state-of-the-art performance for functionally equivalent extraction of decision trees.}
Figure~\ref{fig:QvsN_adultcompas} shows the number of queries required by \PathFinding{} and TRA to achieve functionally equivalent model extraction, as a function of the number of nodes in the target models. Results for the Adult and COMPAS datasets are presented, with additional datasets provided in Figure~\ref{fig:QvsN_full} (Appendix~\ref{appendix:all_results}). TRA consistently requires orders of magnitude fewer queries than \PathFinding{} to reconstruct the target models with perfect fidelity.

\begin{wrapfigure}[16]{r}{0.45\textwidth}
    \vspace{-15pt}
    \begin{center}
        \includegraphics[width=\linewidth]{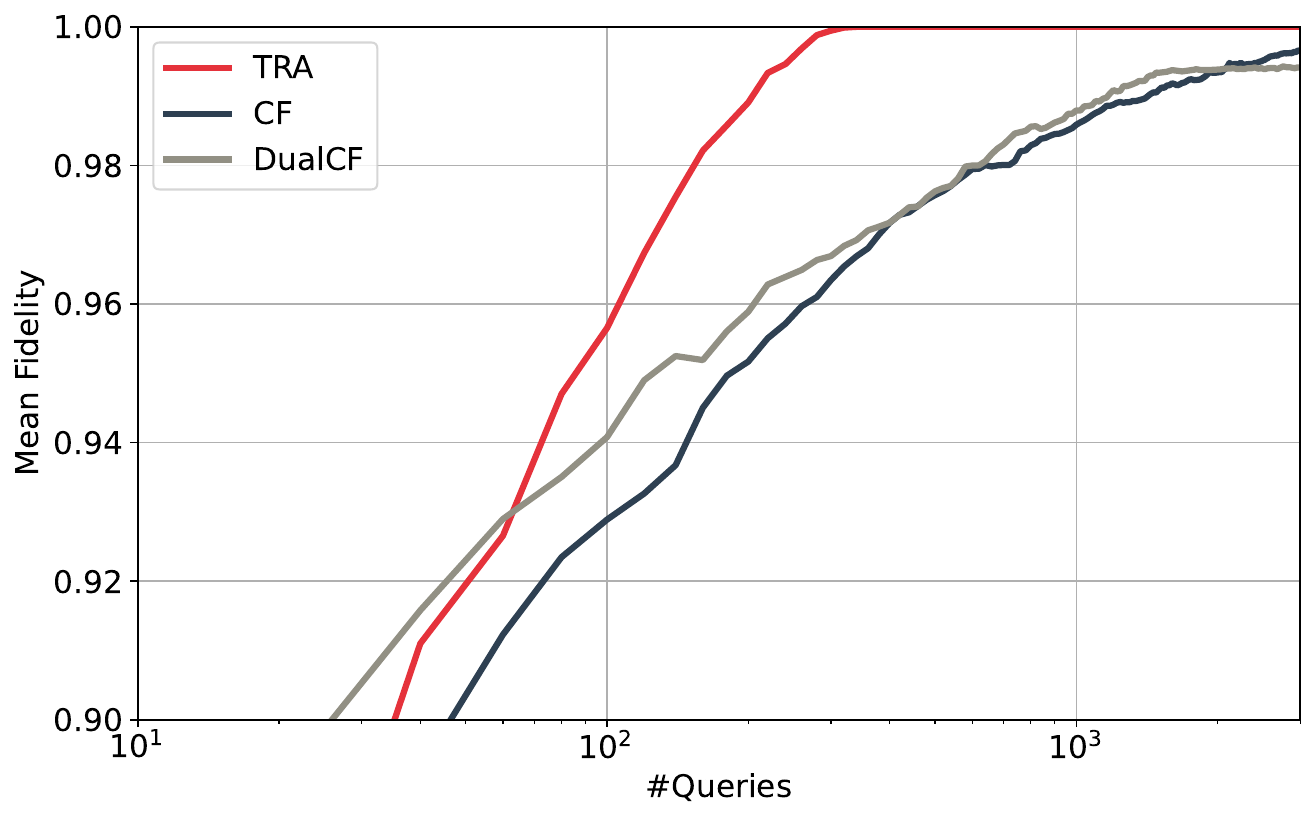}
    \end{center}
    \caption{Anytime performance of the considered model extraction attacks against random forests, on the COMPAS dataset.}
    \label{fig:MFvsQ_adultcompas_RFs}
\end{wrapfigure}

\textbf{Result 3. TRA theoretically and empirically outperforms existing approaches to extract random forests.}
Figure~\ref{fig:MFvsQ_adultcompas_RFs} presents the average fidelity of the three considered model extraction attacks against random forests, plotted against the number of performed queries for the COMPAS dataset. 
The results show that TRA achieves higher fidelity with fewer counterfactual queries and converges significantly faster to perfect fidelity. 
Moreover, TRA is the only attack that certifies functional equivalence for tree ensembles.
Additional results in Figure~\ref{fig:tra_queries_rf_size} (Appendix~\ref{appendix:all_results}) indicate that TRA scales efficiently with the size of the target random forest, as the number of required queries grows sub-linearly with the total number of nodes. This is due to structural redundancies in large forests,  allowing the extracted model to be represented with perfect fidelity as a more compact decision tree  \citep{vidal2020bornagaintreeensembles}. 

\textbf{Result 4. TRA is effective with either globally or locally optimal counterfactuals.}  We report in Appendix~\ref{appendix_condition} results from experiments where TRA is run with a simple oracle generating heuristic counterfactuals that are only locally optimal. The results (also provided within Figure~\ref{fig:local_vs_optimal_compas} for the COMPAS dataset) show that TRA's performance is largely unaffected by the lack of global optimality. In some cases, locally optimal counterfactuals even improve early-stage (anytime) performance by introducing greater diversity in the explored input space. This highlights the practical applicability of TRA in real-world scenarios. Indeed, any valid but possibly sub-optimal counterfactual explanation can be post-processed (through bisection line-searches over each feature using simple prediction queries) into a locally optimal one lying on the decision boundary. Therefore, as long as the counterfactual oracle is reliable (i.e., returns a counterfactual whenever one exists), the returned explanation reveals a direction in which the prediction changes, which is then sufficient to locate a nearby decision boundary and conduct an efficient model extraction attack with TRA.

\begin{figure*}[htb]
     \centering
     \begin{subfigure}[b]{0.45\textwidth}
         \centering
         \includegraphics[width=\textwidth]{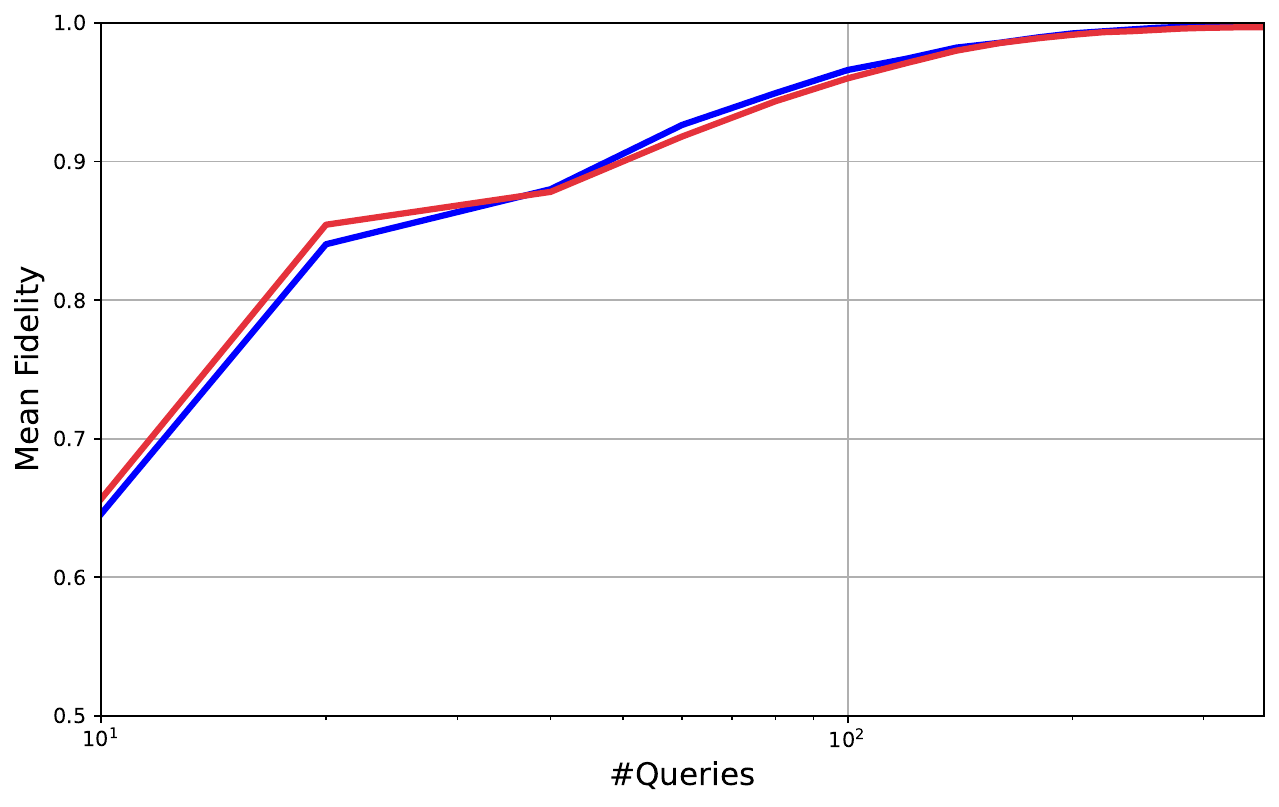}
          \includegraphics[width=0.7\textwidth]{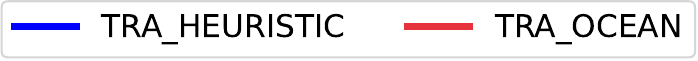}
         \caption{Anytime performance of TRA: average surrogate fidelity as a function of the number of performed queries.}
         \label{fig:MFvsQ_H_compas_mainpaper}
     \end{subfigure}
     \hfill
     \begin{subfigure}[b]{0.45\textwidth}
         \centering
         \includegraphics[width=\textwidth]{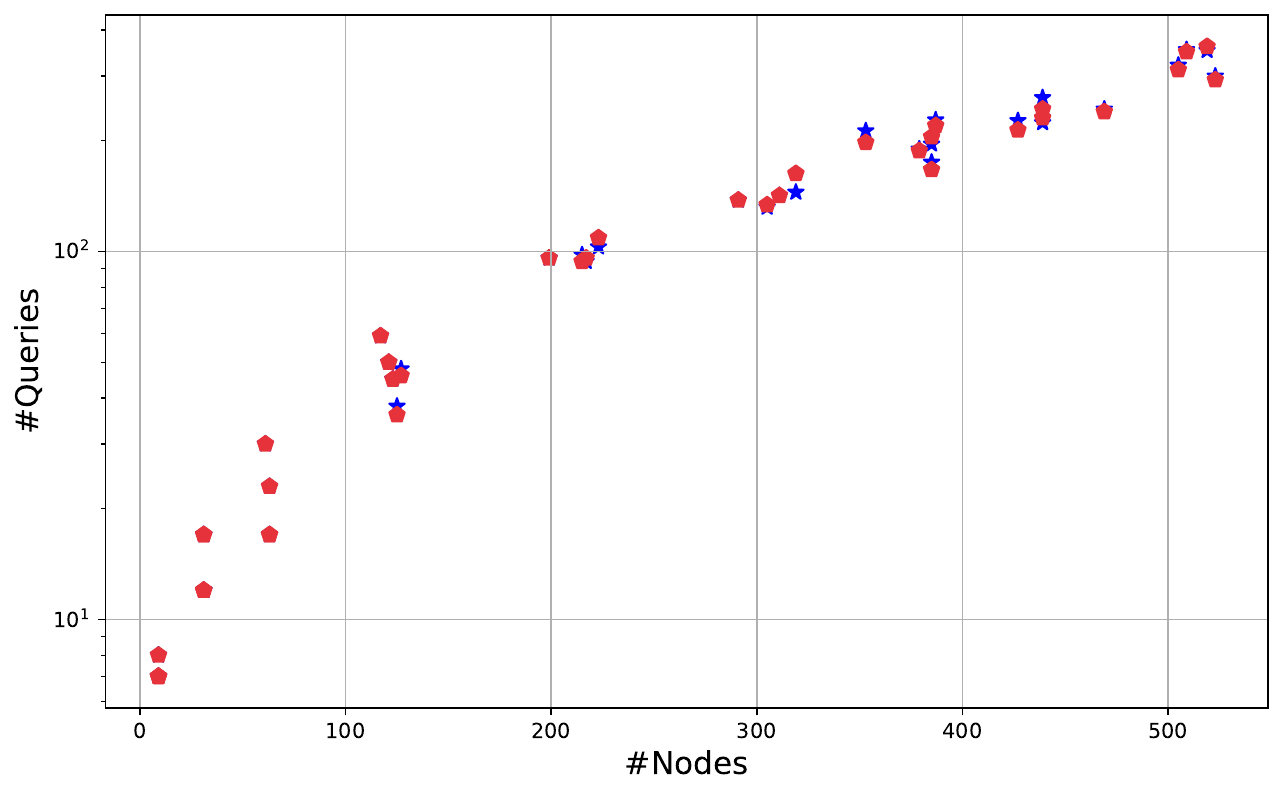}
         \includegraphics[width=0.7\textwidth]{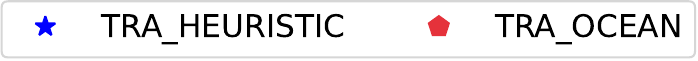}
         \caption{Performance of TRA to achieve functional equivalence: each point represents the number of queries needed to fully reconstruct the trees.}
         \label{fig:QvsN_H_compas_mainpaper}
     \end{subfigure}
    \caption{Comparison of the performance of TRA using either OCEAN or a simpler heuristic counterfactual oracle (Algorithm~\ref{alg:Heuristic} in Appendix~\ref{appendix_condition}) to extract decision trees (COMPAS dataset).}
    \label{fig:local_vs_optimal_compas}
\end{figure*}

\textbf{Result 5. TRA often requires far fewer queries in practice than its theoretical worst-case bound.}
Across all datasets we evaluated, the number of counterfactual queries TRA uses to recover a functionally equivalent model is substantially below the worst case of Proposition~\ref{prop:TRComplexity}. Table~\ref{tab:queries_vs_bound} reports the empirical query counts and the corresponding theoretical worst-case numbers for depth-9 decision trees (means over multiple random seeds). As evidenced, the former are consistently orders of magnitude smaller than the latter, suggesting that TRA is, on average, considerably more efficient than its theoretical worst-case.

\begin{table}[H]
    \caption{Empirical counterfactual query counts used by TRA versus the theoretical worst-case bound from Proposition~\ref{prop:TRComplexity} on decision trees of maximum depth 9. Empirical values are averaged over multiple random seeds.}
    \label{tab:queries_vs_bound}
    \begin{center}
        \begin{small}
            \begin{tabular}{lcc}
                \toprule
                    \textbf{Dataset} & \textbf{Empirical \# queries} & \textbf{Worst-case \# queries (Prop.~\ref{prop:TRComplexity})} \\
                    \midrule
                    SPerformance   &  $1.16e+03$ & $3.02e+08$ \\
                    Adult          &  $3.70e+04$ & $2.45e+14$  \\
                    German Credit  &  $5.18e+01$ & $2.86e+03$ \\
                    Credit Card    &  $6.97e+04$ & $6.49e+11$  \\
                    COMPAS         &  $1.53e+02$ & $1.07e+03$ \\
                    \bottomrule
            \end{tabular}
        \end{small}
    \end{center}
    \vskip -0.1in
\end{table}

\section{Related Works}
The flourishing literature on privacy in machine learning encompasses a wide variety of inference attacks, considering different setups and objectives~\citep{DBLP:journals/corr/abs-2005-08679,DBLP:journals/corr/abs-2007-07646}. 
This paper focuses on model extraction attacks~\citep{Florian2016StealingMachineLearningModels}, which aim at reconstructing the decision boundary of a black-box target model as accurately as possible, given a prediction API. As highlighted in recent surveys~\citep{DBLP:journals/cm/GongWCYJ20,DBLP:journals/csur/OliynykMR23}, various attacks have been proposed in recent years, targeting a broad spectrum of hypothesis classes. Hereafter, we focus on those targeting axis-parallel decision boundary models or exploiting counterfactual explanations. 

\citet{Florian2016StealingMachineLearningModels} propose the only functionally equivalent model extraction attack targeting regression or decision trees: \PathFinding{}. It assumes that each query reply contains a unique identifier for the associated leaf. In a nutshell, \PathFinding{} identifies the decision boundaries of each leaf in the target tree by varying the values of each feature. While effective, this method requires a large number of queries, though partial input queries can sometimes mitigate this overhead. In contrast, TRA does not make strong assumptions regarding the target model's prediction API, is able to extract any axis-parallel decision boundary model (beyond decision trees), and uses orders of magnitude fewer queries by exploiting counterfactual explanations.

While many recent works have focused on generating counterfactual explanations~\citep{DBLP:journals/datamine/Guidotti24}, these explanation techniques have also been shown to facilitate privacy attacks~\citep{DBLP:conf/aistats/PawelczykLN23}. \citet{aivodji2020model} introduce \CF{}, a model extraction attack that leverages counterfactual explanations. Their approach constructs a labeled dataset by querying both predictions and counterfactuals from the target model, which is then used to train a surrogate. \citet{wang2022dualcf} extend this method with \DualCF{}, which improves fidelity by additionally querying the counterfactuals of the counterfactual explanations. However, neither \CF{} nor \DualCF{} provide fidelity guarantees, and they also do not leverage the structural properties of the target model.
\citet{dissanayake2024model} employ polytope theory to show that a sufficient number of optimal counterfactual explanations can approximate convex decision boundaries. They propose a model extraction attack against locally Lipschitz continuous models, with fidelity guarantees dependent on the Lipschitz constants of the target and surrogate models. However, functional equivalence cannot be strictly certified, and as the authors acknowledge, these guarantees do not apply to axis-parallel models (such as decision trees), which lack local Lipschitz continuity and convexity. Also note that their approach relies on globally optimal counterfactuals (whereas TRA accommodates locally optimal ones).

Finally, while beyond the scope of this paper, other explanation-based model extraction attacks have been explored, including those relying on gradient-based~\citep{DBLP:conf/fat/MilliSDH19,DBLP:conf/sectl/MiuraSY24} and other feature-based methods~\citep{oksuz2023autolycus}.

\section{Conclusions and Discussion} \label{section6:ConclDiscussion}

We introduced the first functionally equivalent model extraction attack against decision trees and tree ensembles, leveraging locally optimal counterfactual explanations. In addition to its rigorous functional equivalence guarantee, the proposed method achieves higher fidelity than prior approaches while requiring fewer queries. We also leveraged well-established tools from online discovery to enable a formal analysis of model extraction, drawing an analogy between the two fields. We illustrated the applicability and relevance of this analysis by providing bounds on our attack's efficiency compared to the best achievable strategy, relying on the notion of competitive ratio. This perspective is essential for formally characterizing and comparing model extraction attacks.

Our study demonstrates that optimal counterfactual explanations can be systematically exploited to reconstruct tree ensembles via query APIs, as they inherently reveal decision boundaries. This raises significant concerns, especially as explainability is increasingly mandated by regulations. In many real-world applications, counterfactual explanations serve as a natural mechanism to meet transparency requirements by providing recourse information. We discuss the broader societal impacts of our work in Appendix~\ref{appendix:broader_impact_statement}.

The research perspectives connected to our work are numerous. First, we believe that competitive analysis provides a valuable foundation for studying model extraction attacks, and future work should adopt the same lenses to evaluate other target models.  Besides this, both the algorithms and their theoretical bounds could be refined. Improving input space exploration while mitigating worst-case query complexity is a key direction, including strategies such as dynamically reordering TRA's priority queue to avoid worst-case scenarios, or adopting completely different exploration methods (beyond divide-and-conquer-based algorithms). 
Finally, investigating the impacts of privacy-preserving mechanisms for counterfactual explanations~\citep{10.1145/3580305.3599343} on model extraction attacks' success is a crucial direction towards conciliating trustworthiness and privacy through ML explainability APIs.

\newpage 
\section*{Acknowledgments}

This research was enabled by support provided by Calcul Québec and the Digital Research Alliance of Canada, as well as funding from the SCALE AI Chair in Data-Driven Supply Chains. It was also supported by the \emph{Fonds de recherche du Québec} -- \emph{Nature et technologies (FRQNT)} through a Team Research Project \emph{(327090)}.

\bibliography{neurips_references}

\begin{thebibliography}{32}
\providecommand{\natexlab}[1]{#1}
\providecommand{\url}[1]{\texttt{#1}}
\expandafter\ifx\csname urlstyle\endcsname\relax
  \providecommand{\doi}[1]{doi: #1}\else
  \providecommand{\doi}{doi: \begingroup \urlstyle{rm}\Url}\fi

\bibitem[A{\"{\i}}vodji et~al.(2020)A{\"{\i}}vodji, Bolot, and Gambs]{aivodji2020model}
Ulrich A{\"{\i}}vodji, Alexandre Bolot, and S{\'{e}}bastien Gambs.
\newblock Model extraction from counterfactual explanations.
\newblock \emph{CoRR}, abs/2009.01884, 2020.

\bibitem[Angwin et~al.(2016)Angwin, Larson, Mattu, and Kirchner]{angwin2016machine}
Julia Angwin, Jeff Larson, Surya Mattu, and Lauren Kirchner.
\newblock Machine bias: There’s software used across the country to predict future criminals. and it’s biased against blacks. propublica (2016).
\newblock \emph{ProPublica, May}, 23, 2016.

\bibitem[Boenisch et~al.(2023)Boenisch, Dziedzic, Schuster, Shamsabadi, Shumailov, and Papernot]{Boenisch2023}
F.~Boenisch, A.~Dziedzic, R.~Schuster, A.S. Shamsabadi, I.~Shumailov, and N.~Papernot.
\newblock {When the curious abandon honesty: Federated learning is not private}.
\newblock In \emph{8th IEEE European Symposium on Security and Privacy}, pages 175--199. IEEE, 2023.

\bibitem[Boyd and Vandenberghe(2004)]{BoydVandenberghe2004}
Stephen Boyd and Lieven Vandenberghe.
\newblock \emph{Convex Optimization}.
\newblock Cambridge University Press, Cambridge, 2004.
\newblock ISBN 978-0521833783.

\bibitem[Breiman(2001)]{RF}
Leo Breiman.
\newblock Random forests.
\newblock \emph{Mach. Learn.}, 45\penalty0 (1):\penalty0 5--32, 2001.

\bibitem[Carlini et~al.(2024)Carlini, Paleka, Dvijotham, Steinke, Hayase, Cooper, Lee, Jagielski, Nasr, Conmy, Wallace, Rolnick, and Tram{\`{e}}r]{Carlini2024}
Nicholas Carlini, Daniel Paleka, Krishnamurthy~Dj Dvijotham, Thomas Steinke, Jonathan Hayase, A.~Feder Cooper, Katherine Lee, Matthew Jagielski, Milad Nasr, Arthur Conmy, Eric Wallace, David Rolnick, and Florian Tram{\`{e}}r.
\newblock Stealing part of a production language model.
\newblock In \emph{Forty-first International Conference on Machine Learning, {ICML} 2024}, volume 235 of \emph{Proceedings of Machine Learning Research}. PMLR, 2024.

\bibitem[Cristofaro(2020)]{DBLP:journals/corr/abs-2005-08679}
Emiliano~De Cristofaro.
\newblock An overview of privacy in machine learning.
\newblock \emph{CoRR}, abs/2005.08679, 2020.

\bibitem[Deng et~al.(1991)Deng, Kameda, and Papadimitriou]{Deng91}
X.~Deng, T.~Kameda, and C.~Papadimitriou.
\newblock How to learn an unknown environment.
\newblock In \emph{Proceedings of the 32nd Annual Symposium of Foundations of Computer Science}, pages 298--303, 1991.

\bibitem[Dissanayake and Dutta(2024)]{dissanayake2024model}
Pasan Dissanayake and Sanghamitra Dutta.
\newblock Model reconstruction using counterfactual explanations: A perspective from polytope theory.
\newblock \emph{Advances in Neural Information Processing Systems (NeurIPS)}, 2024.

\bibitem[Dua and Graff(2017)]{Dua:2019}
Dheeru Dua and Casey Graff.
\newblock {UCI} machine learning repository, 2017.
\newblock URL \url{http://archive.ics.uci.edu/ml}.

\bibitem[Fekete and Schmidt(2010)]{Fekete10}
Sándor~P. Fekete and Christiane Schmidt.
\newblock Polygon exploration with time-discrete vision.
\newblock \emph{Computational Geometry}, 43\penalty0 (2):\penalty0 148--168, 2010.
\newblock ISSN 0925-7721.
\newblock Special Issue on the 24th European Workshop on Computational Geometry (EuroCG'08).

\bibitem[Ferry et~al.(2024)Ferry, Fukasawa, Pascal, and Vidal]{DBLP:conf/icml/FerryFPV24}
Julien Ferry, Ricardo Fukasawa, Timoth{\'{e}}e Pascal, and Thibaut Vidal.
\newblock Trained random forests completely reveal your dataset.
\newblock In \emph{Forty-first International Conference on Machine Learning, {ICML} 2024}, volume 235 of \emph{Proceedings of Machine Learning Research}. PMLR, 2024.

\bibitem[Ghosh and Klein(2010)]{Ghosh10}
Subir~Kumar Ghosh and Rolf Klein.
\newblock Online algorithms for searching and exploration in the plane.
\newblock \emph{Computer Science Review}, 4\penalty0 (4):\penalty0 189--201, 2010.
\newblock ISSN 1574-0137.

\bibitem[Gong et~al.(2020)Gong, Wang, Chen, Yang, and Jiang]{DBLP:journals/cm/GongWCYJ20}
Xueluan Gong, Qian Wang, Yanjiao Chen, Wang Yang, and Xinchang Jiang.
\newblock Model extraction attacks and defenses on cloud-based machine learning models.
\newblock \emph{{IEEE} Commun. Mag.}, 58\penalty0 (12):\penalty0 83--89, 2020.

\bibitem[Guidotti(2024)]{DBLP:journals/datamine/Guidotti24}
Riccardo Guidotti.
\newblock Counterfactual explanations and how to find them: literature review and benchmarking.
\newblock \emph{Data Min. Knowl. Discov.}, 38\penalty0 (5):\penalty0 2770--2824, 2024.

\bibitem[Hoffmann et~al.(2001)Hoffmann, Icking, Klein, and Kriegel]{Hoffmann01}
Frank Hoffmann, Christian Icking, Rolf Klein, and Klaus Kriegel.
\newblock The polygon exploration problem.
\newblock \emph{SIAM Journal on Computing}, 31\penalty0 (2):\penalty0 577--600, 2001.

\bibitem[Jagielski et~al.(2020)Jagielski, Carlini, Berthelot, Kurakin, and Papernot]{DBLP:conf/uss/JagielskiCBKP20}
Matthew Jagielski, Nicholas Carlini, David Berthelot, Alex Kurakin, and Nicolas Papernot.
\newblock High accuracy and high fidelity extraction of neural networks.
\newblock In \emph{29th {USENIX} Security Symposium, {USENIX} Security 2020}, pages 1345--1362. {USENIX} Association, 2020.

\bibitem[Karlin et~al.(1986)Karlin, Manasse, Rudolph, and Sleator]{Karlin86}
Anna~R. Karlin, Mark~S. Manasse, Larry Rudolph, and Daniel~D. Sleator.
\newblock Competitive snoopy caching.
\newblock In \emph{27th Annual Symposium on Foundations of Computer Science (sfcs 1986)}, pages 244--254, 1986.

\bibitem[Milli et~al.(2019)Milli, Schmidt, Dragan, and Hardt]{DBLP:conf/fat/MilliSDH19}
Smitha Milli, Ludwig Schmidt, Anca~D. Dragan, and Moritz Hardt.
\newblock Model reconstruction from model explanations.
\newblock In \emph{Proceedings of the 2019 Conference on Fairness, Accountability, and Transparency, FAT* 2019}, pages 1--9. {ACM}, 2019.

\bibitem[Miura et~al.(2024)Miura, Shibahara, and Yanai]{DBLP:conf/sectl/MiuraSY24}
Takayuki Miura, Toshiki Shibahara, and Naoto Yanai.
\newblock {MEGEX:} data-free model extraction attack against gradient-based explainable {AI}.
\newblock In \emph{Proceedings of the 2nd {ACM} Workshop on Secure and Trustworthy Deep Learning Systems, SecTL 2024}, pages 56--66. {ACM}, 2024.

\bibitem[Mothilal et~al.(2020)Mothilal, Sharma, and Tan]{Mothilal_2020}
Ramaravind~K. Mothilal, Amit Sharma, and Chenhao Tan.
\newblock Explaining machine learning classifiers through diverse counterfactual explanations.
\newblock In \emph{Proceedings of the 2020 Conference on Fairness, Accountability, and Transparency, FAT* 2020}, pages 607--617. ACM, 2020.

\bibitem[{\"{O}}ks{\"{u}}z et~al.(2024){\"{O}}ks{\"{u}}z, Halimi, and Ayday]{oksuz2023autolycus}
Abdullah~{\c{C}}aglar {\"{O}}ks{\"{u}}z, Anisa Halimi, and Erman Ayday.
\newblock {AUTOLYCUS:} exploiting explainable artificial intelligence {(XAI)} for model extraction attacks against interpretable models.
\newblock \emph{Proc. Priv. Enhancing Technol.}, 2024\penalty0 (4):\penalty0 684--699, 2024.

\bibitem[Oliynyk et~al.(2023)Oliynyk, Mayer, and Rauber]{DBLP:journals/csur/OliynykMR23}
Daryna Oliynyk, Rudolf Mayer, and Andreas Rauber.
\newblock I know what you trained last summer: {A} survey on stealing machine learning models and defences.
\newblock \emph{{ACM} Comput. Surv.}, 55\penalty0 (14s):\penalty0 324:1--324:41, 2023.

\bibitem[Parmentier and Vidal(2021)]{parmentier2021optimal}
Axel Parmentier and Thibaut Vidal.
\newblock Optimal counterfactual explanations in tree ensembles.
\newblock In \emph{38th International Conference on Machine Learning, {ICML} 2021}, volume 139 of \emph{Proceedings of Machine Learning Research}, pages 8422--8431. {PMLR}, 2021.

\bibitem[Pawelczyk et~al.(2023)Pawelczyk, Lakkaraju, and Neel]{DBLP:conf/aistats/PawelczykLN23}
Martin Pawelczyk, Himabindu Lakkaraju, and Seth Neel.
\newblock On the privacy risks of algorithmic recourse.
\newblock In \emph{International Conference on Artificial Intelligence and Statistics, AISTATS 2023}, volume 206 of \emph{Proceedings of Machine Learning Research}, pages 9680--9696. {PMLR}, 2023.

\bibitem[Pedregosa et~al.(2011)Pedregosa, Varoquaux, Gramfort, Michel, Thirion, Grisel, Blondel, Prettenhofer, Weiss, Dubourg, Vanderplas, Passos, Cournapeau, Brucher, Perrot, and Duchesnay]{scikit-learn}
F.~Pedregosa, G.~Varoquaux, A.~Gramfort, V.~Michel, B.~Thirion, O.~Grisel, M.~Blondel, P.~Prettenhofer, R.~Weiss, V.~Dubourg, J.~Vanderplas, A.~Passos, D.~Cournapeau, M.~Brucher, M.~Perrot, and E.~Duchesnay.
\newblock Scikit-learn: Machine learning in {P}ython.
\newblock \emph{Journal of Machine Learning Research}, 12:\penalty0 2825--2830, 2011.

\bibitem[Rigaki and Garc{\'{\i}}a(2024)]{DBLP:journals/corr/abs-2007-07646}
Maria Rigaki and Sebastian Garc{\'{\i}}a.
\newblock A survey of privacy attacks in machine learning.
\newblock \emph{{ACM} Comput. Surv.}, 56\penalty0 (4):\penalty0 101:1--101:34, 2024.

\bibitem[Tee and Han(2021)]{tee2021lidar}
Yi~Kiat Tee and Yi~Chiew Han.
\newblock Lidar-based 2d slam for mobile robot in an indoor environment: A review.
\newblock In \emph{2021 International Conference on Green Energy, Computing and Sustainable Technology (GECOST)}, pages 1--7. IEEE, 2021.

\bibitem[Tram{\`e}r et~al.(2016)Tram{\`e}r, Zhang, Juels, Reiter, and Ristenpart]{Florian2016StealingMachineLearningModels}
Florian Tram{\`e}r, Fan Zhang, Ari Juels, Michael~K. Reiter, and Thomas Ristenpart.
\newblock Stealing machine learning models via prediction {APIs}.
\newblock In \emph{25th USENIX Security Symposium, USENIX Security 2016}, pages 601--618. USENIX Association, 2016.

\bibitem[Vidal and Schiffer(2020)]{vidal2020bornagaintreeensembles}
Thibaut Vidal and Maximilian Schiffer.
\newblock Born-again tree ensembles.
\newblock In \emph{37th International Conference on Machine Learning, {ICML} 2020}, volume 119 of \emph{Proceedings of Machine Learning Research}, pages 9743--9753. {PMLR}, 2020.

\bibitem[Vo et~al.(2023)Vo, Le, Nguyen, Zhao, Bonilla, Haffari, and Phung]{10.1145/3580305.3599343}
Vy~Vo, Trung Le, Van Nguyen, He~Zhao, Edwin~V. Bonilla, Gholamreza Haffari, and Dinh Phung.
\newblock Feature-based learning for diverse and privacy-preserving counterfactual explanations.
\newblock In \emph{Proceedings of the 29th ACM SIGKDD Conference on Knowledge Discovery and Data Mining, KDD 2023}, page 2211–2222. Association for Computing Machinery, 2023.

\bibitem[Wang et~al.(2022)Wang, Qian, and Miao]{wang2022dualcf}
Yongjie Wang, Hangwei Qian, and Chunyan Miao.
\newblock Dualcf: Efficient model extraction attack from counterfactual explanations.
\newblock In \emph{Proceedings of the 2022 Conference on Fairness, Accountability, and Transparency, FAccT 2022}, pages 1318--1329. {ACM}, 2022.

\end{thebibliography}

\newpage
\newpage
\appendix

\section{Proofs}\label{appendix:proofs}

\propTRComplexity*
\begin{proof}[Proof of Proposition \ref{prop:TRComplexity}]
    \label{proof:prop_1}
    We prove the proposition by induction on the number of split levels \( n \). For clarity and precision, we denote the number of splits in the \(i\)-th dimension for a decision tree with \(n\) split levels as \(s^{(n)}_i\), rather than simply using~\(s_i\).
    
    \begin{itemize}
        \item \textbf{Base Case (\( n = 1 \)):} 
            For \( n = 1 \), there exists a single feature \( j \) with \( s^{(1)}_j = 1 \) and \( s^{(1)}_i = 0 \) for all \( i \neq j \). The number of queries required is at most \( 3 = 2s^{(1)}_j + 1 = O(s^{(1)}_j) \).
        
        \item \textbf{Inductive Step:} 
            Assume the statement holds for all trees with up to \( n \) split levels. Consider a tree \( f_{n+1} \) with \( n+1 \) split levels. Let \(1 \leq  j \leq m\) be the feature index of the first detected split ($x_j \leq \alpha$) where $\alpha \in \mathbb{R}$, dividing \( \mathcal{X} \) into two subspaces:
            \[
                \mathcal{X}_j^1 = \{x \in \mathcal{X} \mid x_j \leq \alpha\}, \quad \mathcal{X}_j^2 = \{x \in \mathcal{X} \mid x_j > \alpha\}.
            \]
            Each subspace contains subtrees \( f_{n_1} \) and \( f_{n_2} \) with \( n_1, n_2 \leq n \) split levels, respectively. By the inductive hypothesis, the number of queries for each subtree is respectively \(O\left(\prod_{\substack{i = 1}}^{m} (s^{(n_1)}_i + 1)\right) \)  and \(O\left(\prod_{\substack{i = 1}}^{m} (s^{(n_2)}_i + 1)\right) \). \\
            Since \( s^{(n+1)}_j = s^{(n_1)}_j  + s^{(n_2)}_j + 1 \) and \( s^{(n_q)}_i = s^{(n+1)}_i \) for \( i \neq j \) (and $q \in \{1, 2\}$), the total number of queries for \( f_{n+1} \) is:
            \[
                O\left(\prod_{\substack{i = 1}}^{m} (s^{(n+1)}_i + 1)\right).
            \]
    \end{itemize}

    Therefore, the query count for extracting \( f_n \) is \( O\left(\prod_{\substack{i = 1}}^{m} (s^{(n)}_i + 1)\right) \).
\end{proof}

\corTRComplexity*
\begin{proof}[Proof of Corollary \ref{cor:TRComplexity}]
    \label{proof:cor_1}
    We build on the worst-case complexity demonstrated in Proposition \ref{prop:TRComplexity}, and we solve the following optimization problem:
    \[
        \max_{s^{(n)}_1 , \ldots, s^{(n)}_m} \prod_{i=1}^{m} (s^{(n)}_i + 1) \quad \text{s.t.} \quad \sum_{i=1}^{m} s^{(n)}_i = n \quad \text{and} \quad s^{(n)}_i \geq 1 \ \forall i \in \{1, \ldots, m\}.
    \]
    Since maximizing a positive value is equivalent to maximizing its logarithm, we transform the objective into:
    \[
        \max_{s^{(n)}_1, \ldots, s^{(n)}_m} \sum_{i=1}^{m} \log(s^{(n)}_i + 1).
    \]
    Applying the Karush-Kuhn-Tucker (KKT) conditions~\citep{BoydVandenberghe2004}, we find that the maximum occurs when \( s^{(n)}_i = \frac{n}{m} \) for all \( i \). Substituting back, the worst-case complexity becomes \(O\left(\left(\frac{n}{m} + 1\right)^m\right)\).
\end{proof}

\propTRAcompet*
\begin{proof}[Proof of Proposition \ref{prop:TRA_compet} ]
     Let $n>0$, $m\geq1$, denote $\alpha_1, ..., \alpha_n$ the tree split levels (decision thresholds) and for each feature $ j = 1, ...,m$, let $s_j$ represent the number of splits in the $j$-th dimension, ordered such that \( s_1 \geq s_2 \geq \dots \geq s_m \). Without loss of generality, assume that the split levels are grouped by dimension. Specifically, splits \( \alpha_1 \) to \( \alpha_{s_1} \) occur in the first dimension, splits \( \alpha_{s_1 + 1} \) to \( \alpha_{s_1 + s_2} \) in the second dimension, and so on. Additionally, within each dimension, the split levels are sorted in increasing order, i.e.,
    \[
    \forall 1 \leq j \leq m, \quad \sum_{i=1}^{j-1} s_i + 1 \leq p \leq \sum_{i=1}^{j} s_i, \quad \alpha_p < \alpha_{p+1}.
    \]
\begin{enumerate}
    \item \textit{Proof of Upper Bound:}
    In the best-case scenario, where each split level appears exactly once in the decision tree (i.e., there is no redundancy among the tree nodes), the omniscient algorithm (a.k.a optimal algorithm) would require at least \( n + 1 \) queries to reconstruct the tree:
    \begin{align}
        Q_{opt}^f \geq n + 1 \label{eq:proof_opt}
    \end{align}
    This includes one query for each leaf to verify its label and certify functional equivalence. Note that this assumes that the omniscient algorithm has correctly guessed the location of each split level and directly queried for counterfactuals over each leaf region.
    
    Conversely, in the worst-case scenario, such as a chessboard-like decision tree where splits are evenly distributed across multiple features, the TRA algorithm must explore all possible regions created by these splits. For a two-dimensional tree, this results in \( s_1 + s_2 (s_1 + 1) + (s_1 + 1)(s_2 + 1)  \) queries, where \( s_1 \) and \( s_2 \) are the number of splits along each feature. The $s_1$ splits along the first dimension are first detected, then the $s_2$ splits along the second dimension are re-discovered at every sub-division performed along the first dimension, and finally $(s_1 + 1)(s_2 + 1)$ queries are required to individually verify each sub-square (leaf node). Extending this to \( m \) dimensions, the number of queries grows multiplicatively with the number of splits per feature, leading to:
    \[
    Q_{TRA}^f \leq \sum_{i=1}^m s_i \prod_{j=1}^{i-1} (s_j + 1) + \prod_{j=1}^m (s_j + 1) = 2 \prod_{j=1}^m (s_j + 1) - 1
    \]
    Therefore, the competitive ratio \( C_{TRA}^{(n,m)} \) is bounded above by:
    \begin{align}
    C_{TRA}^{(n,m)} = \text{sup}_{f \in \mathcal{F}} \left(\frac{Q_{TRA}^f}{Q_{opt}^f}\right) \leq \frac{2\prod_{j=1}^m (s_j + 1) - 1}{n + 1}\label{eq:proof_upper_bound}
    \end{align}
     
    \item \textit{Proof of Lower Bound:} We hereafter build an \emph{adversarial example}, i.e., one that maximizes the ratio of the number of queries that TRA must perform to extract the target decision tree, compared to what an optimal offline algorithm could achieve. This example therefore constitutes a (feasible) lower bound for the competitive ratio of TRA.
    
    Consider a tree with \( n \) splits. An adversary (dynamically building the worst-case instance the online algorithm is run on) can arrange the splits such that the first split detected by TRA is the last decision node in the tree. Specifically, the adversary ensures that the initial split does not reduce the complexity of identifying the remaining \( n \) splits. 
    
    Consider the following adversarial example: for each \( 1 \leq p \leq n \) and \( 1 \leq j \leq m \), let the dimension that \( \alpha_p \) splits on be~\( j \), and set
    \[
     \alpha_p = \begin{cases}
         \frac{p}{(s_1 + 1)}, & \text{if } j = 1, \\
         \frac{p - \sum_{i=1}^{j-1} s_i}{2(s_j + 1)} + \frac{1}{2} + \epsilon, & \text{otherwise},
     \end{cases} 
    \]
    where \( \epsilon > 0 \). This adversarial example ensures that within any hyper-rectangle defined by split level boundaries, for \(j = 1, ..., m-1\) if there are splits in both the \( j \)-th and \( j+1 \)-th dimensions, then there exists a split in the \( j \)-th dimension that is closer to the center of the hyper-rectangle than any split in the \( j+1 \)-th dimension.
    As a consequence, TRA will always detect the splits of the \( j \)-th dimension before those of the \( j+1 \)-th dimension. Therefore, the adversary can design a decision tree with a single branch (as illustrated in the right tree of Figure \ref{proof_figure}) that begins by splitting on the split levels in reverse (decreasing) order of dimensions (see a 2D example in Figure \ref{proof_figure}). For this specific example, TRA will require
    \[2\prod_{j=1}^m (s_j + 1) - 1\] queries, whereas the optimal offline algorithm only needs \( n + 1 \) queries.
    
    Therefore, by the definition of competitive ratio:
    \begin{align}
        C_{TRA}^{(n,m)} \geq \frac{ 2\prod_{j=1}^m (s_j + 1) - 1}{n + 1}\label{eq:proof_lower_bound}
    \end{align}
\end{enumerate}
Hence, by~\eqref{eq:proof_upper_bound} and~\eqref{eq:proof_lower_bound}, we have:
    \[C_{TRA}^{(n,m)} = \frac{2\prod_{j=1}^m (s_j + 1) - 1}{n + 1}
    \]

    \begin{figure}[!ht]
        \centering
        \hspace{-2cm}
        \begin{minipage}{0.4\textwidth}
            \centering
            \newcommand\mydots{\vbox to 0.8em{.\vss.\vss.}}
\begin{tikzpicture}[scale=5]
    \draw[->] (0,0) -- (1.05,0) node[right] {$x_1$};
    \draw[->] (0,0) -- (0,1.05) node[above] {$x_2$};

    \node at (-0.05,-0.05) {$0$};
    \node at (-0.05,1.0) {$1$};
    \node at (1.01,-0.06) {$1$};
    
    \fill[class2!70] (0,0) rectangle (0.2,0.8);
    \fill[class1!70] (0.2,0) rectangle (0.4,0.8);
    \node at (0.2,-0.06) {\footnotesize $\alpha_{1}$};
    \fill[class2!70] (0.4,0) rectangle (0.6,0.8);
    \node at (0.4,-0.06) {\footnotesize $\alpha_{2}$};
    \fill[class1!70] (0.8,0) rectangle (1.0,0.8);
    \node at (0.8,-0.06) {\footnotesize $\alpha_{s_1}$};
    \fill[class3!70] (0,0.8) rectangle (1.0,0.83);
    \fill[class2!70] (0,0.83) rectangle (1.0,0.87);
    \node at (-0.08,0.8) {\footnotesize $\alpha_{s_1 + 1}$};
    \fill[class1!70] (0,0.87) rectangle (1.0,0.9);
    \fill[class2!70] (0,0.94) rectangle (1.0,0.97);
    \fill[class1!70] (0,0.97) rectangle (1.0,1.0);
    \node at (-0.05,0.96) {\footnotesize $\alpha_{n}$};

    \node at (0.7,0.4) {$\ldots$};
    \node at (-0.05,0.865) {\tiny $\mydots$};
    \node at (0.5,0.92) {\tiny $\mydots$};
    \node at (0.49,-0.2) { $s_1$};
    \node at (-0.25,0.88) { $s_2$};

    \draw[dashed] (0.16,-0.16) -- (0.82,-0.16);
    \draw[-] (0.16,-0.18) -- (0.16,-0.14) node[left] {};
    \draw[-] (0.82,-0.18) -- (0.82,-0.14) node[left] {};

    \draw[dashed] (-0.18,0.78) -- (-0.18,0.98);
    \draw[-] (-0.2,0.78) -- (-0.16,0.78) node[below] {};
    \draw[-] (-0.2,0.98) -- (-0.16,0.98) node[below] {};

    \draw (1.0,-0.02) -- (1.0,0.02);
    \draw (-0.02,1.0) -- (0.02,1.0);
\end{tikzpicture}
        \end{minipage}
        \hfill
        \begin{minipage}{0.5\textwidth}
            \centering
            \usetikzlibrary{shapes,arrows,positioning}
\begin{tikzpicture}[
  scale=0.7,
  transform shape, 
  node distance=0.6cm and 0.3cm,
  mynode/.style = {shape=rectangle, rounded corners,
    draw, align=center, top color=white},
  myarrow/.style={->,>=stealth},
  mylabel/.style={text width=1cm, align=center},
  blue_n node/.style = {bottom color=class2!70, shape=circle},
  red_n node/.style = {bottom color=class1!70, shape=circle},
  green_n node/.style = {bottom color=class3!70, shape=circle}
]

\node[mynode] (x1) {\(x_2 \leq \alpha_{n}\)};
\node[mynode, below left=of x1] (x11) {\(x_2 \leq \alpha_{n - 1}\)};
\node[below left=of x11, rotate=135] (dots1) {\(\vdots\)};
\node[mynode, below left=of dots1] (x2) {\(x_2 \leq \alpha_{s_1 + 1}\)};
\node[mynode, below left=of x2] (xk) {\(x_1 \leq \alpha_{s_1}\)};
\node[below left=of xk, rotate=135] (dots) {\(\vdots\)};
\node[mynode, below left=of dots] (xk1) {\(x_1 \leq \alpha_1\)};

\node[red_n node, draw, circle, below right=of x1] (R) {\classB};
\node[blue_n node, draw, circle, below right=of x11] (B) {\classA};
\node[green_n node, draw, circle, below right=of x2] (G) {\classC};
\node[red_n node, draw, circle, below right=of xk] (B1) {\classB};
\node[blue_n node, draw, circle, below left=of xk1] (ln) {\classA};
\node[red_n node, draw, circle, below right=of xk1] (ln1) {\classB};

\draw[myarrow] (x1) -- (R);
\draw[myarrow] (x1) -- (x11);
\draw[myarrow] (x11) -- (B);
\draw[myarrow] (x11) -- (dots1);
\draw[myarrow] (dots1) -- (x2);
\draw[myarrow] (x2) -- (xk);
\draw[myarrow] (x2) -- (G);
\draw[myarrow] (xk) -- (B1);
\draw[myarrow] (xk) -- (dots);
\draw[myarrow] (dots) -- (xk1);
\draw[myarrow] (xk1) -- (ln);
\draw[myarrow] (xk1) -- (ln1);

\end{tikzpicture}
        \end{minipage}
        \caption{Adversarial example for TRA (displayed for $m=2$ dimensions). The classes are \colorbox{class1}{\classB},\colorbox{class2}{\textcolor{white}{\classA}} and \colorbox{class3}{\textcolor{white}{\classC}}. For simplicity, we denote \( s_1 = \smash{s^{(n)}_1} \) and \( s_2 = \smash{s^{(n)}_2} \). Here, the instance is dynamically built so that the number of queries required by TRA is \( s_2(s_1 + 1) + s_1 + (s_1 + 1)(s_2 + 1) \), whereas the optimal offline algorithm only needs \( n + 1 \) queries to check the leaf labels and certify functional equivalence, as shown in the right tree figure.}
        \label{proof_figure}
    \end{figure}

\end{proof}

\propDCcompet*
\begin{proof}[Proof of Proposition \ref{prop:DC_compet}]
    \textbf{Key Idea.} A pure divide-and-conquer (D\&C) algorithm discovers a split along a specific feature dimension upon querying a point. This split divides the input space into two subproblems. An adversary can strategically arrange the splits so that the feature splits detected by the algorithm early on are the ``least helpful'' ones, meaning they occur as the last decisions along their respective feature branches. By doing this, the adversary ensures that these initial splits do not simplify the identification of the remaining splits. We demonstrate that this construction forces the D\&C algorithm to perform poorly compared to an optimal strategy.
    
    We define a pure D\&C-based algorithm as one that divides the input space (problem) into subspaces (sub-problems) based on counterfactual explanations and recursively continues this process within each subspace until no counterfactuals are found. This class of algorithms encompasses all types of querying strategies, such as selecting the geometrical center (as done by TRA), but also selecting the top-left corner, bottom-right corner, or a random point within the input space, among others.

    \textbf{Proof.}  Let \( m > 1 \) be the number of dimensions and \( n \geq m \) be the number of split levels. We prove this proposition by induction on the number of split levels \( n > 0 \).
    \begin{itemize}
        \item \textbf{Base Case} (\( n=2 \), \( m=2 \)): Consider a two-dimensional tree with split levels \( \alpha_1 \) and \( \alpha_2 \). Let \( q = (q_1, q_2) \) be the query made by the D\&C algorithm. The adversary chooses \( \alpha_1 = q_1 + \epsilon_1 \) and \( \alpha_2 = q_2 + \epsilon_2 \), where \( \epsilon_2 > \epsilon_1 > 0 \). Consequently, the first counterfactual explanation returned by the oracle is \( q' = (\alpha_1, q_2) \). The algorithm then splits the input space into two subspaces, both containing the split at \( \alpha_2 \), as depicted in Figure \ref{fig:treeDCproofBase}.
    
    \begin{figure}[ht]
        \centering
        \begin{subfigure}{0.4\textwidth}
            \centering
            \begin{tikzpicture}[
                sibling distance=5em,
                level distance=3em,
                every node/.style = {shape=rectangle, rounded corners, draw, align=center, top color=white},
                blue node/.style = {bottom color=class2!70, shape=circle},
                red node/.style = {bottom color=class1!70, shape=circle},
                green node/.style = {bottom color=class3!70, shape=circle}
            ]       
                \node {\(x_2 \leq \alpha_2\)}
                    child { node[xshift=-1em] {\(x_1 \leq \alpha_1\)} 
                        child {node[blue node] {\classA}}
                        child {node[red node, xshift=-0.3em] {\classB}}
                    }
                    child {node[green node] {\classC}};
            \end{tikzpicture}
            \vspace{1.5cm}
            \caption{Adversarial decision tree.}
        \end{subfigure}
        \hfill
        \begin{subfigure}{0.5\textwidth}
            \centering
            \begin{tikzpicture}[scale=5]
                \draw[->] (0,0) -- (1.05,0) node[right] {$x_1$};
                \draw[->] (0,0) -- (0,1.05) node[above] {$x_2$};
            
                \draw[thick] (0,0.7) -- (1,0.7);
                \node at (-0.05,-0.05) {$0$};
                \node at (-0.05,1.0) {$1$};
                \node at (1.0,-0.06) {$1$};
                
                \fill[class3!70] (0,0.7) rectangle (0.6,1);
                \fill[class1!70] (0.6,0) rectangle (1,0.7);
                \fill[class2!70] (0,0) rectangle (0.6,0.7);
                \fill[class3!70] (0.6,0.7) rectangle (1,1);
            
                \draw[thick] (0.6,0) -- (0.6,1);
                \draw (-0.02,0.7) -- (0.02,0.7) node[left, xshift=-0.1cm] {$\alpha_2$};
                \draw (-0.02,1.0) -- (0.02,1.0);
                \draw (0.6,-0.02) -- (0.6,0.02) node[below, yshift=-0.1cm] {$\alpha_1$};
                \draw (1.0,-0.02) -- (1.0,0.02);
            \end{tikzpicture}
            \caption{Decision Boundary of the adversarial decision tree.}
        \end{subfigure}
        \caption{An adversarial example for \( n=2 \), \( m=2 \), triggering the worst-case competitive ratio of our algorithm.}
        \label{fig:treeDCproofBase}
    \end{figure}
    
    In this adversarial example, the D\&C algorithm requires at least 7 queries to reconstruct the exact decision tree, whereas an omniscient optimal algorithm can achieve this with only 3 queries, one per leaf. Therefore, for this adversarial example, no D\&C-based algorithm can attain a competitive ratio better than \( C_{TRA}^{n,m} = C_{TRA}^{2,2} = \frac{7}{3} \).

    \item \textbf{Inductive Step}: Assume the proposition holds for all trees with up to \( n \) split levels. Consider a tree with \( n+1 \) split levels, with split levels \( (\alpha_1, \ldots, \alpha_{n+1}) \). Let \( q = (q_1, q_2, \ldots, q_m) \in \mathcal{X} \) be the first query made by the D\&C algorithm. The adversary sets \( \alpha_1 = q_1 + \epsilon_1 \) where \( \epsilon_1 > 0 \) and returns the counterfactual explanation \( q' = (\alpha_1, q_2, \ldots, q_m) \). The adversary places this split as the last decision node in the tree. Consequently, the D\&C algorithm splits the input space into two subspaces, each containing all splits of the remaining dimensions, thereby containing at most \( n \) splits each.
    
    By the induction hypothesis, the algorithm will require at least:
    \[
    {Q}_1 = 2(s_1^{(1)} + 1) \prod_{j=2}^m (s_j + 1) - 1
    \]
    for the first subspace, and
    \[
    {Q}_2 = 2(s_1^{(2)} + 1) \prod_{j=2}^m (s_j + 1) - 1
    \]
    for the second subspace, where \( s_1^{(1)} \) and \( s_1^{(2)} \) are the remaining splits along the first dimension in the first and second subspaces, respectively.
    
    Therefore, the total number of queries is:
    \begin{align}
    1 + {Q}_1 + {Q}_2 = 2\prod_{j=1}^m (s_j + 1) - 1= (n + 1) C_{TRA}^{(n,m)}\label{eq:proof_dc}
    \end{align}
    given that \( s_1^{(1)} + s_1^{(2)} + 1 = s_1 \). Hence, by~\eqref{eq:proof_opt} and \eqref{eq:proof_dc}, the best competitive ratio $C^{(n,m)}_{D\&C}$ achievable by any D\&C-based algorithm satisfies:
    \[
    C^{(n,m)}_{D\&C} \geq \frac{(n + 1) C_{TRA}^{(n,m)}}{n + 1} = C_{TRA}^{(n,m)}.
    \]
    \end{itemize}
\end{proof}

\section{Details of the \textsc{Split} Procedure (Used by TRA)}
\label{appendix:detailed_pseudocode}

Given a region $\mathcal{E}$, a query $x$, and its counterfactual $x'$, \textsc{split} partitions $\mathcal{E}$ into disjoint subregions by iterating over features where $x$ and $x'$ differ, peeling off the half that contains $x$ and keeping the complementary half (the side toward $x'$). The exact pseudo-code of this procedure is provided in Algorithm~\eqref{alg:split}.

\begin{algorithm}[ht]
\caption{\textsc{split}$(\mathcal{E}, x, x')$}
\label{alg:split}
\begin{algorithmic}[1]
\REQUIRE Region $\mathcal{E}$; vectors $x$, $x'$.
\ENSURE List $E$ of disjoint subregions whose union is $\mathcal{E}$.
\STATE $S \gets \{\, (i, x'_i)\;|\; x_i \neq x'_i \,\}$ \COMMENT{indices and thresholds where $x$ and $x'$ differ}
\STATE $\mathcal{E}_{0}, E \gets \mathcal{E}, \emptyset$
\FOR{$(i, v) \in S$}
   \IF{$x_i \le v$}
       \STATE $\mathcal{E}_1 \gets \{\, z \in \mathcal{E}_0 \;|\; z_i \le v \,\}$
   \ELSE
       \STATE $\mathcal{E}_1 \gets \{\, z \in \mathcal{E}_0 \;|\; z_i > v \,\}$
   \ENDIF
   \STATE $E \gets E \cup \{\mathcal{E}_1\}$ \COMMENT{peel off the side containing $x$}
   \STATE $\mathcal{E}_0 \gets \mathcal{E}_0 \setminus \mathcal{E}_1$ \COMMENT{keep the remainder (toward $x'$)}
\ENDFOR
\STATE $E \gets E \cup \{\mathcal{E}_0\}$ \COMMENT{add the final remainder}
\RETURN $E$
\end{algorithmic}
\end{algorithm}

\paragraph{Tie-handling.} To avoid overlaps at $v$, we use a consistent rule (e.g., $\le$ on one side and $>$ on the other). In numerical implementations, we replace the strict inequality ($z_i > v $) with a relaxed one $z_i \ge v+\varepsilon$ for a small $\varepsilon>0$.

\section{Additional Experimental Results}\label{appendix:BExpDetails}

\subsection{Experimental Setup Details}
\label{appendix:training_details}

\paragraph{Target Model Training.} During the training process of both decision trees and random forests, we conduct a grid search with 50 steps over the range $[0, 0.2]$ to determine the optimal cost-complexity pruning parameter \texttt{ccp\_alpha} that maximizes accuracy on the validation dataset.

\paragraph{Surrogate Model Training.} For surrogate models (used by the \CF{} and \DualCF{} attacks) that do not utilize the target model's hyperparameters, we employ the default parameters provided by the scikit-learn Python library~\citep{scikit-learn}. Specifically for MLPs, we configure a scikit-learn MLP with two hidden layers, each consisting of 20 neurons, while keeping all other parameters at their default values.

\paragraph{Anytime Fidelity.} The anytime fidelity was calculated each 20 queries during all attacks execution, except for \PathFinding{} which is not an anytime attack. Let \(N\) denote the number of target models \(f_1, f_2, \ldots,f_N\) to extract (i.e., over all considered experimental configurations and random seeds) and \(\mathcal{D}\) a dataset with \(n\) samples. The anytime fidelity of a given model extraction attack (at a given time step) over \(\mathcal{D}\) is calculated as follows :
\begin{equation}\label{eq:anytimeFidelity}
     \frac{1}{N} \sum_{i=1}^N \left( \frac{1}{n}\sum_{j=1}^n \mathds{1}_{\{f_i(x_j) = \hat{f}_i(x_j)\}} \right)
\end{equation}
where \(\hat{f}_1, \hat{f}_2, \ldots, \hat{f}_N \) are the models extracted by the considered model extraction attack at the given time step.

\subsection{Configuration of Surrogate-Based Attacks}\label{appendix:surrogate_based_attacks}

We report in Figure~\ref{fig:CF_full} (respectively, Figure~\ref{fig:DualCF_full}) the anytime performance of the \CF{} (respectively, \DualCF{}) model extraction attack against decision tree models, for the three considered types of surrogates and the two counterfactual oracles, on all considered datasets.
More precisely, as depicted in Section~\ref{sec:exp_setup}, we run these two attacks using three different assumptions on adversarial knowledge, namely the hypothesis class of the target model, its hyperparameters, and none of them. In the first case, the adversary trains a decision tree surrogate with default parameters (DT). In the second case, he trains a surrogate decision tree with the exact same hyperparameters as the target model (DT+). Finally, in the third case, a multi-layer perceptron (MLP) is used as surrogate model. 
Both \CF{} and \DualCF{} were originally tested using the DiCE~\citep{Mothilal_2020} counterfactual oracle, which provides heuristic-based (non-optimal) counterfactual explanations. To assess the impact of explanation optimality on the attack's performance, and to ensure fair comparisons, we run \CF{} and \DualCF{} both using DiCE and using optimal counterfactual explanations computed with the OCEAN framework~\citep{parmentier2021optimal}.

We also report in
Figure~\ref{fig:CF_DualCF_RFs_full} the anytime performance of the \CF{} and \DualCF{} model extraction attacks against random forest models, for the three considered types of surrogates and the two counterfactual oracles, on the COMPAS dataset.

We hereafter highlight the key trends of these results, focusing on the impact of two dimensions: the adversarial knowledge (regarding the target model's architecture and hyperparameters) and the type of counterfactual oracle used.

\begin{figure}
     \centering
     \begin{subfigure}[b]{0.45\textwidth}
         \centering
         \includegraphics[width=\textwidth]{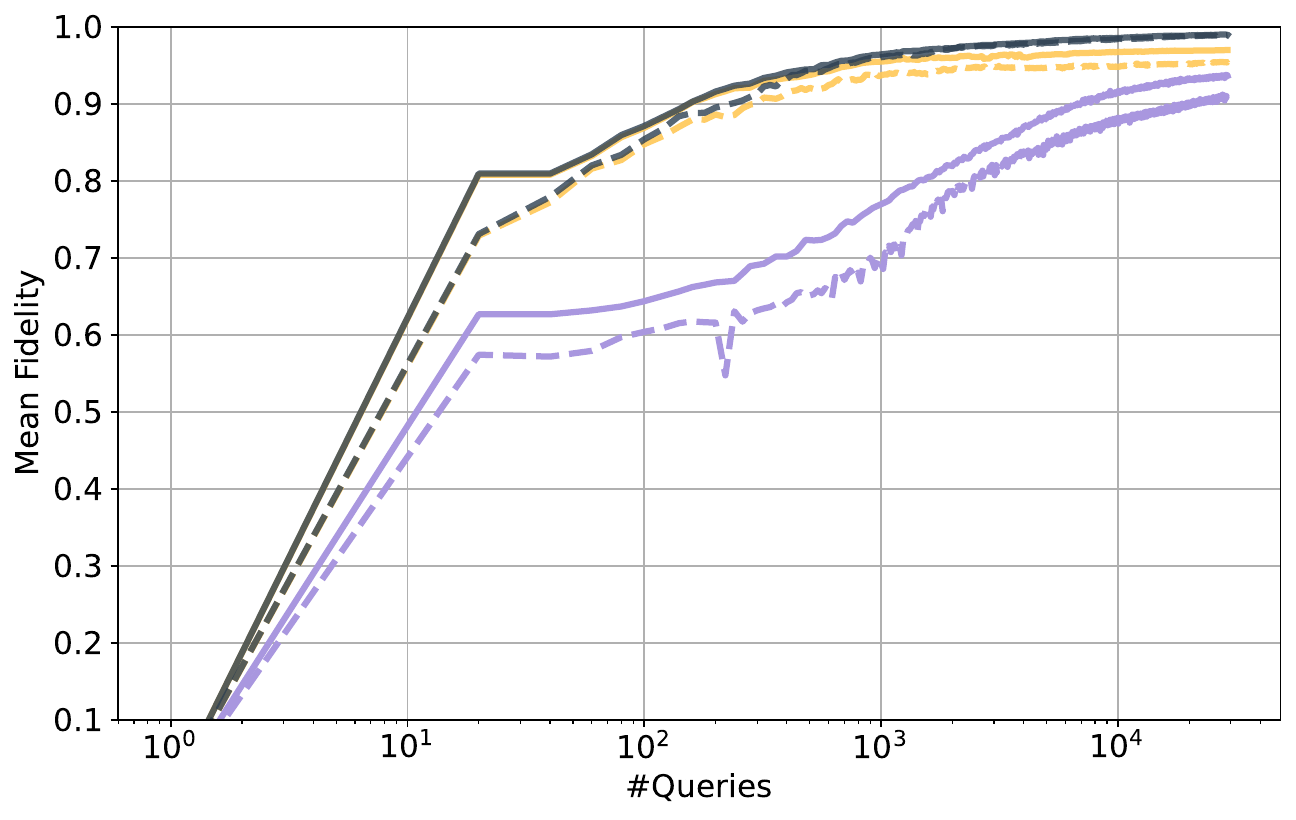}
         \caption{Adult dataset}
         \label{fig:CF_adult}
     \end{subfigure}
     \hfill
     \begin{subfigure}[b]{0.45\textwidth}
         \centering
         \includegraphics[width=\textwidth]{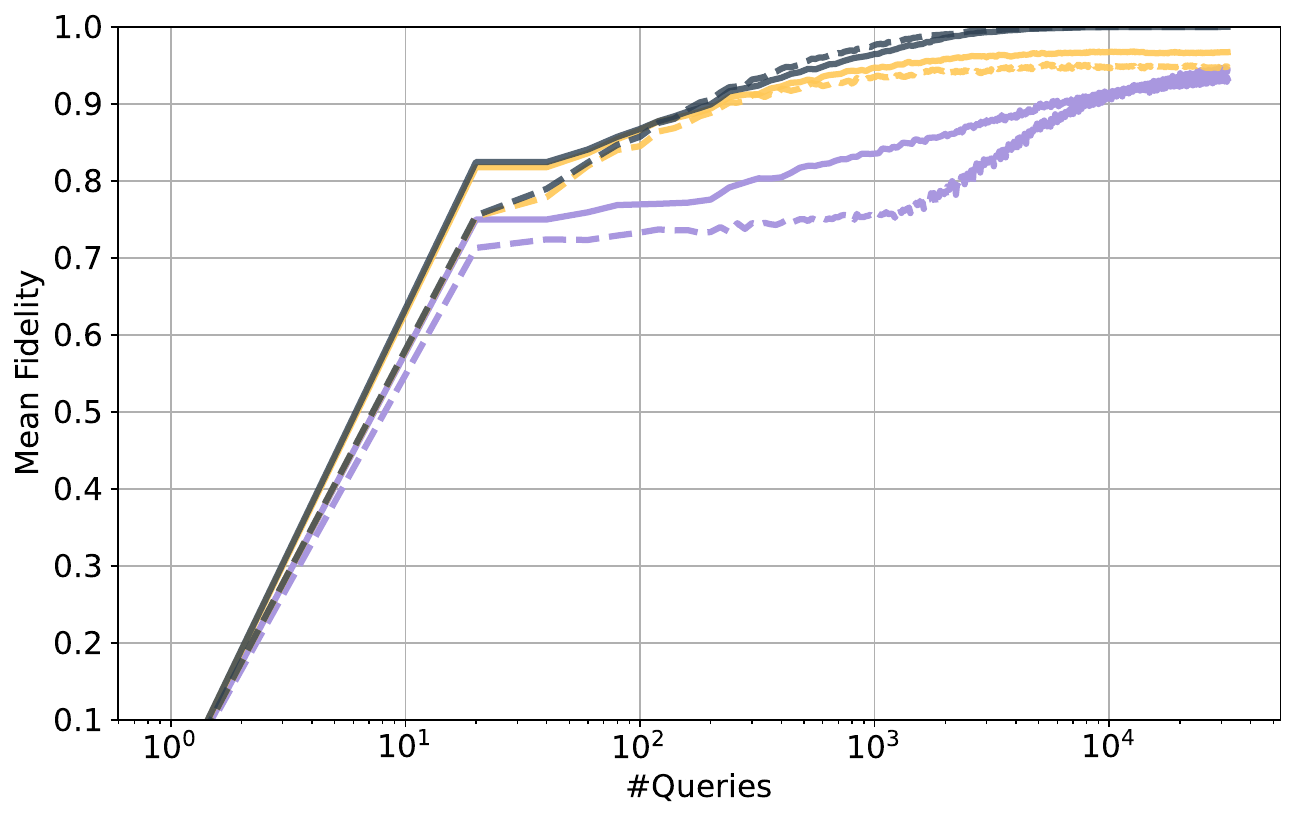}
         \caption{COMPAS dataset}
         \label{fig:CF_compas}
     \end{subfigure}
     \hfill
     \begin{subfigure}[b]{0.45\textwidth}
         \centering
         \includegraphics[width=\textwidth]{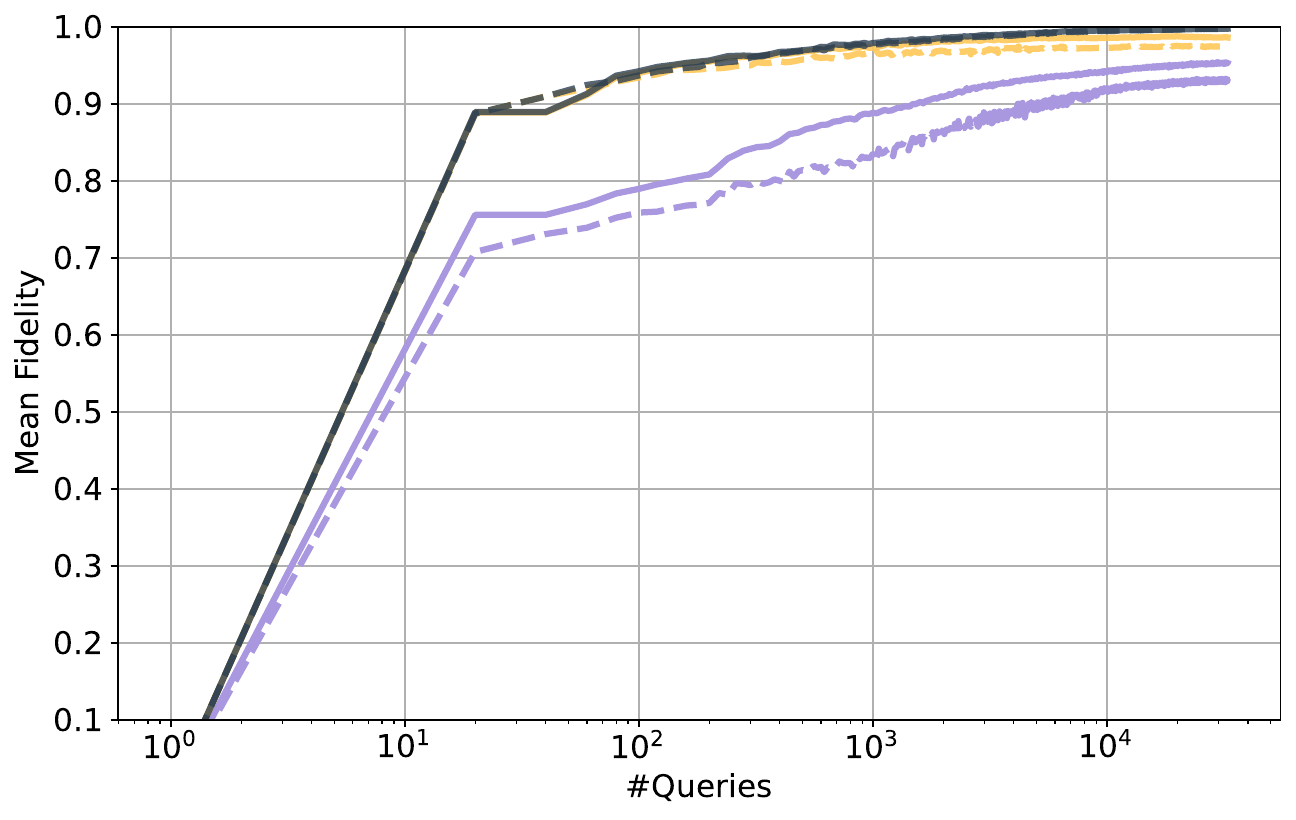}
         \caption{Credit Card dataset}
         \label{fig:CF_CC}
     \end{subfigure}
     \hfill
     \begin{subfigure}[b]{0.45\textwidth}
         \centering
         \includegraphics[width=\textwidth]{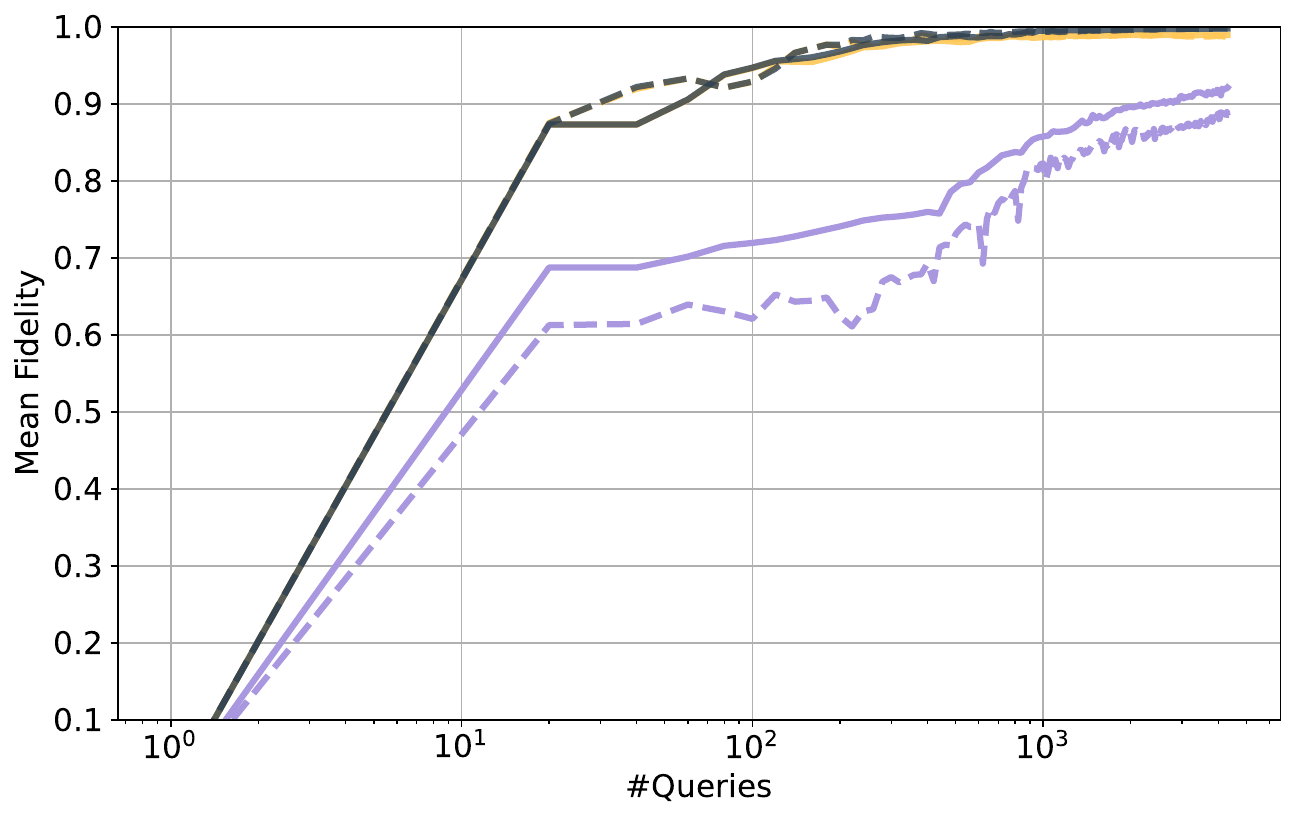}
         \caption{German Credit dataset}
         \label{fig:CF_GC}
     \end{subfigure}
     \hfill
     \begin{subfigure}[b]{0.45\textwidth}
         \centering
         \includegraphics[width=\textwidth]{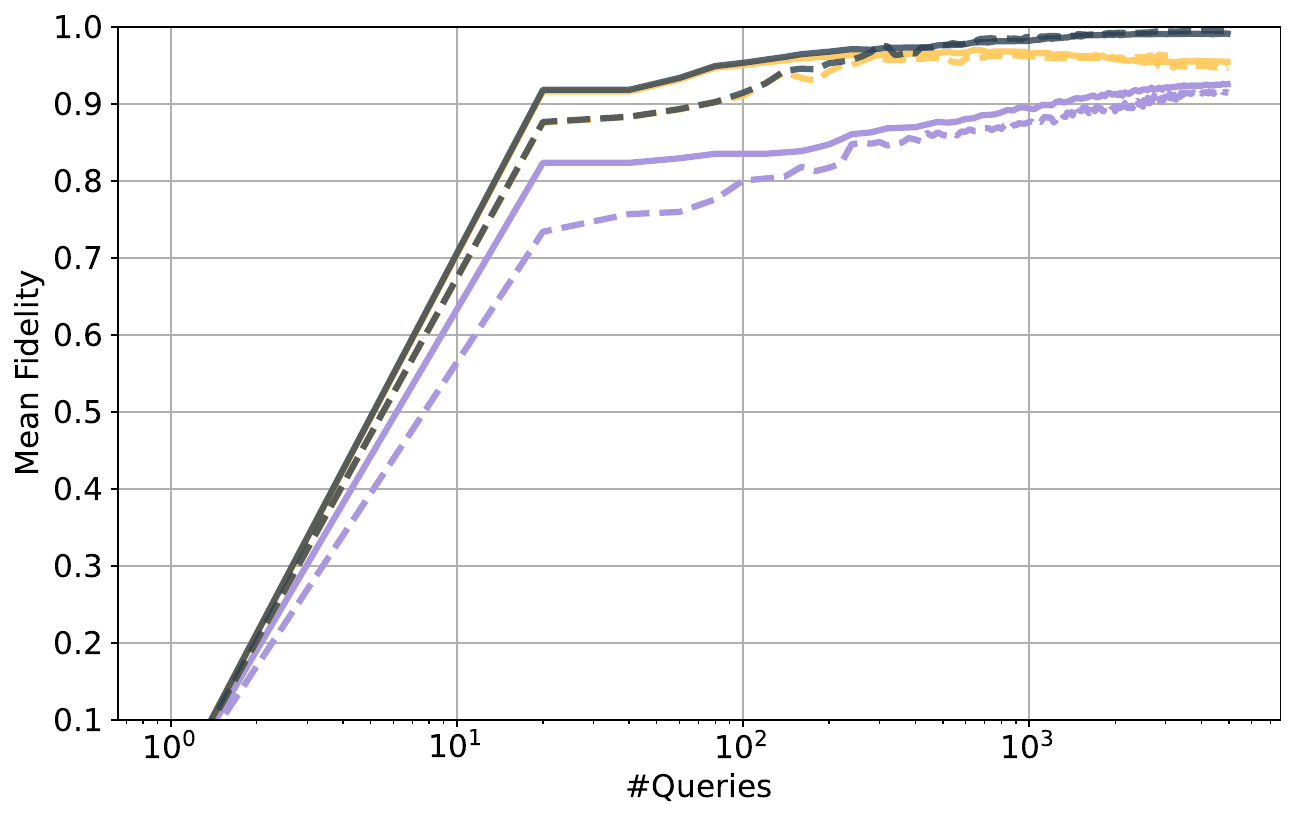}
         \caption{Student Performance dataset}
         \label{fig:CF_SP}
     \end{subfigure}
     
     \hspace{10pt}
     
     \begin{subfigure}[b]{0.7\textwidth}
         \centering
         \includegraphics[width=\textwidth]{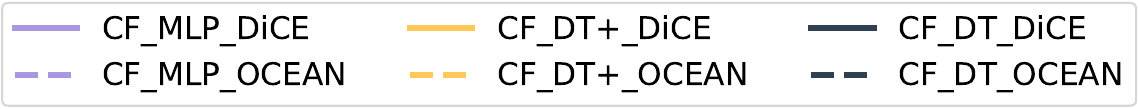}
         \label{fig:CF_legend}
     \end{subfigure}
        \caption{Anytime performance of the \CF{} model extraction attack against decision trees. We report results for all datasets and studied configurations, including adversarial knowledge regarding the target model architecture (DT, DT+, and MLP) and counterfactual oracles (DiCE and OCEAN).}
        \label{fig:CF_full}
\end{figure}
\begin{figure}
     \centering
     \begin{subfigure}[b]{0.45\textwidth}
         \centering
         \includegraphics[width=\textwidth]{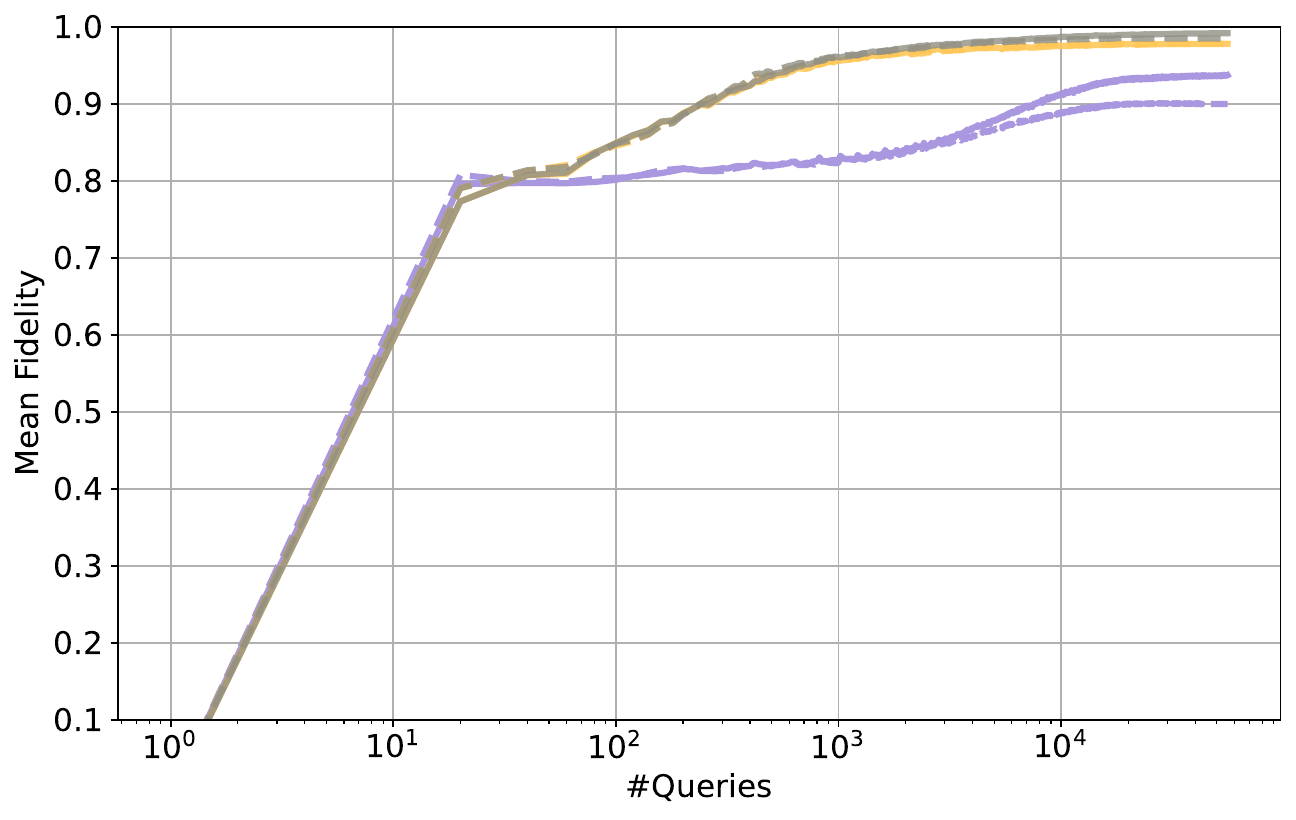}
         \caption{Adult dataset}
         \label{fig:DualCF_adult}
     \end{subfigure}
     \hfill
     \begin{subfigure}[b]{0.45\textwidth}
         \centering
         \includegraphics[width=\textwidth]{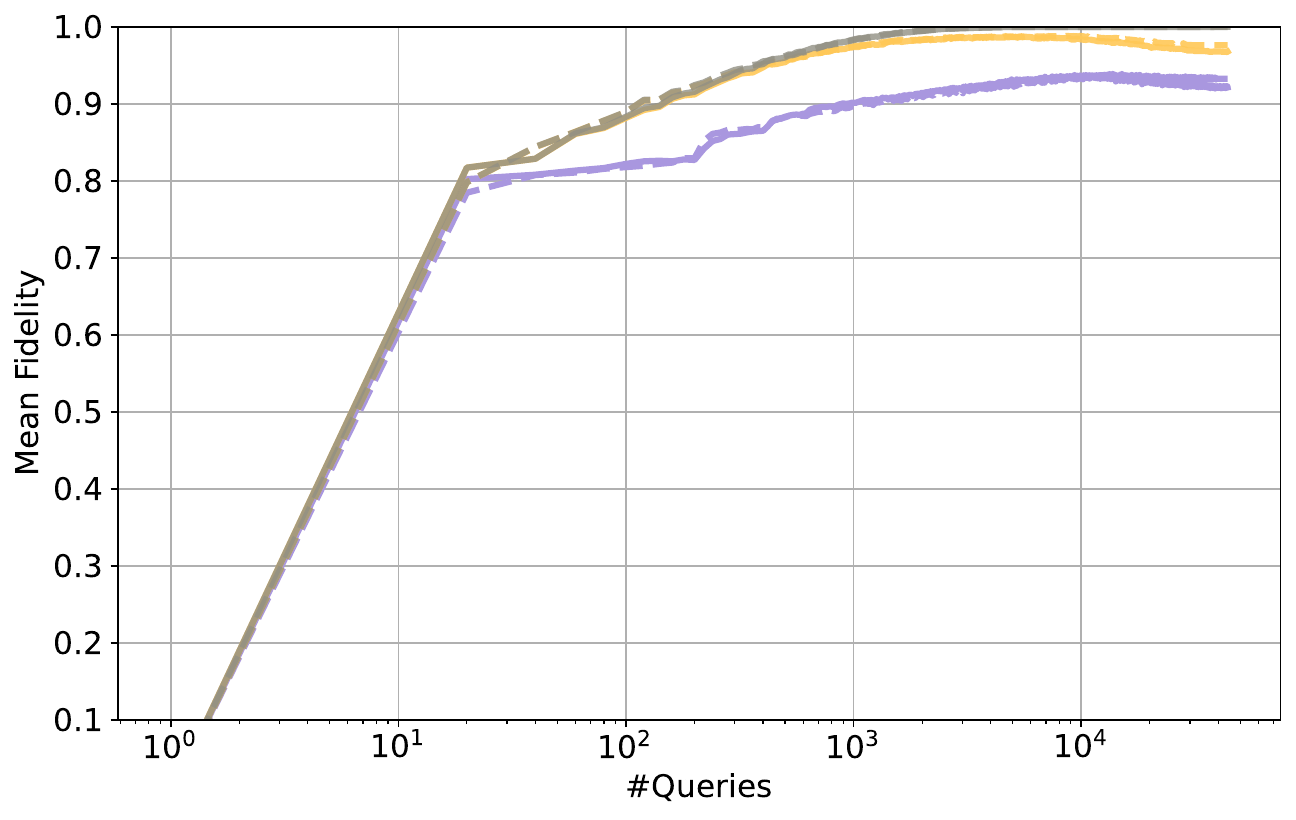}
         \caption{COMPAS dataset}
         \label{fig:DualCF_compas}
     \end{subfigure}
     \hfill
     \begin{subfigure}[b]{0.45\textwidth}
         \centering
         \includegraphics[width=\textwidth]{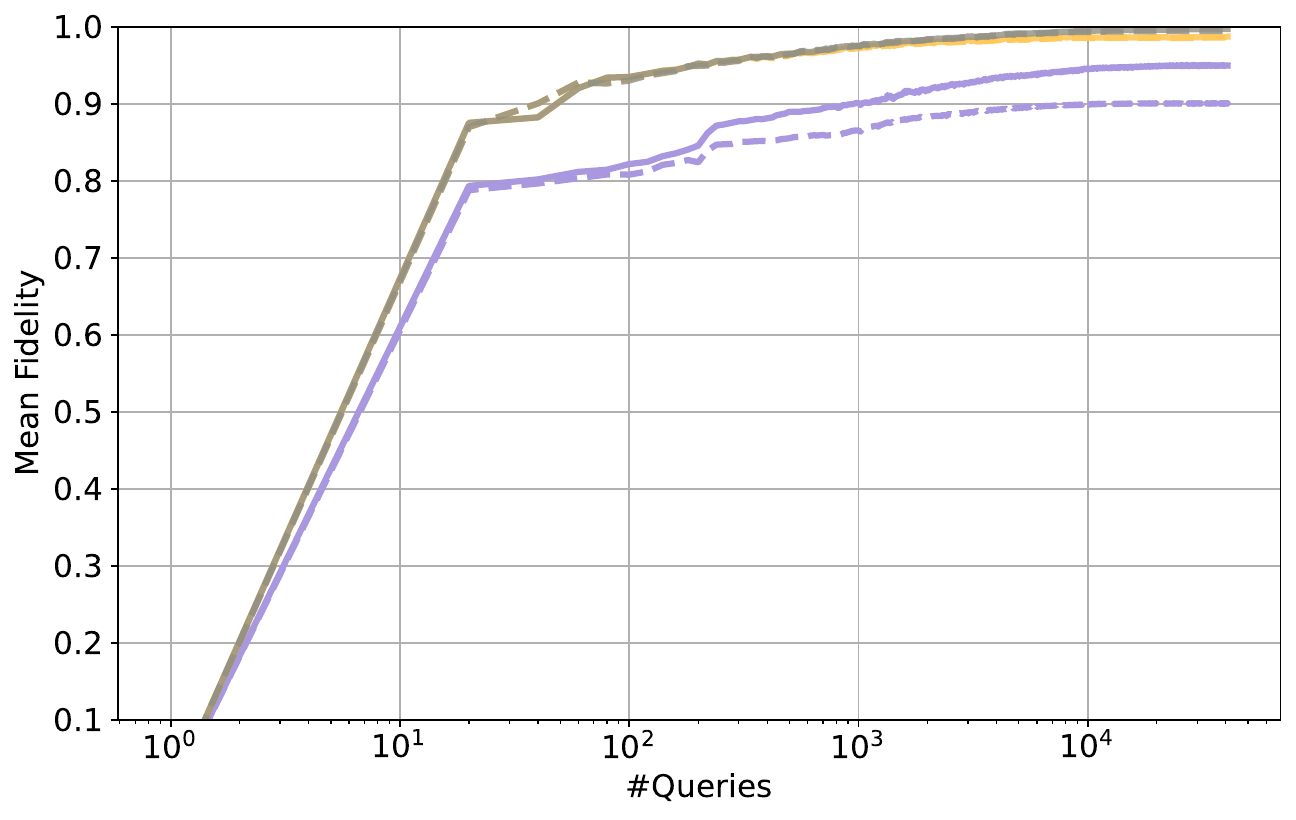}
         \caption{Credit Card dataset}
         \label{fig:DualCF_CC}
     \end{subfigure}
     \hfill
     \begin{subfigure}[b]{0.45\textwidth}
         \centering
         \includegraphics[width=\textwidth]{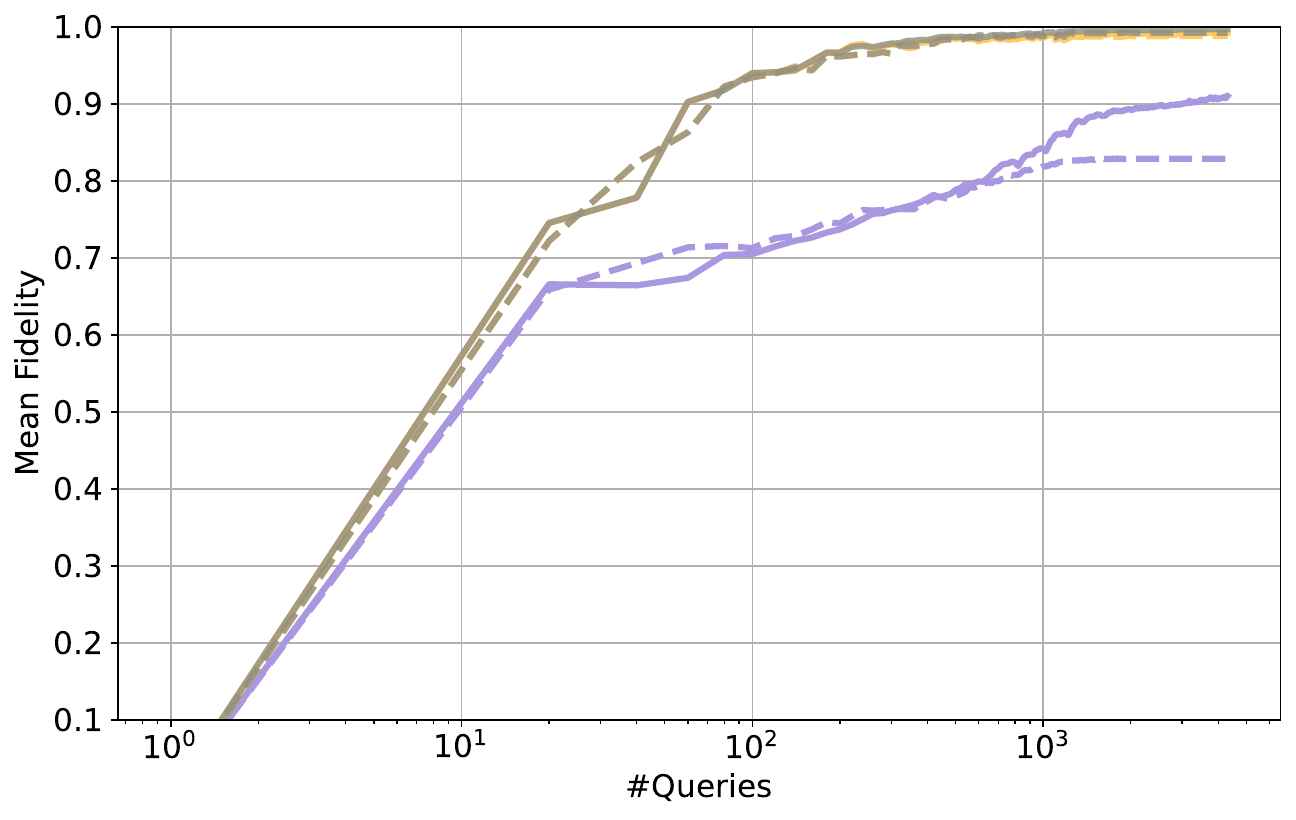}
         \caption{German Credit dataset}
         \label{fig:DualCF_GC}
     \end{subfigure}
     \hfill
     \begin{subfigure}[b]{0.45\textwidth}
         \centering
         \includegraphics[width=\textwidth]{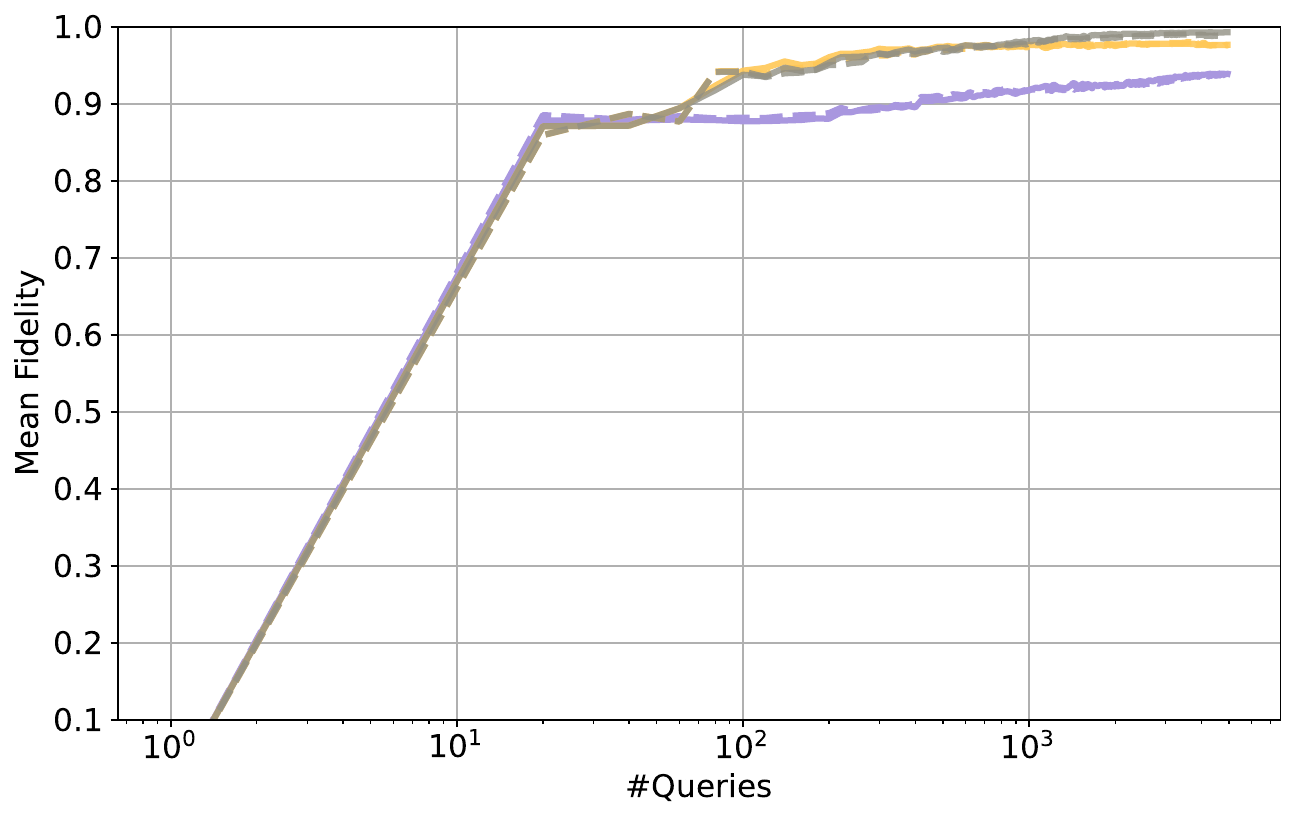}
         \caption{Student Performance dataset}
         \label{fig:DualCF_SP}
     \end{subfigure}
     
     \hspace{10pt}
     
     \begin{subfigure}[b]{0.7\textwidth}
         \centering
         \includegraphics[width=\textwidth]{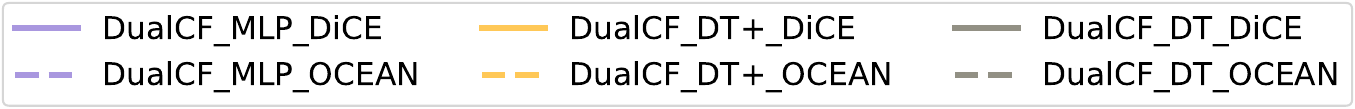}
         \label{fig:DualCF_legend}
     \end{subfigure}
        \caption{Anytime performance of the \DualCF{} model extraction attack against decision trees. We report results for all datasets and studied configurations, including adversarial knowledge regarding the target model architecture (DT, DT+, and MLP) and counterfactual oracles (DiCE and OCEAN).}
    \label{fig:DualCF_full}
\end{figure}

\begin{figure}[!ht]
    \centering
    \includegraphics[width=0.49\linewidth]{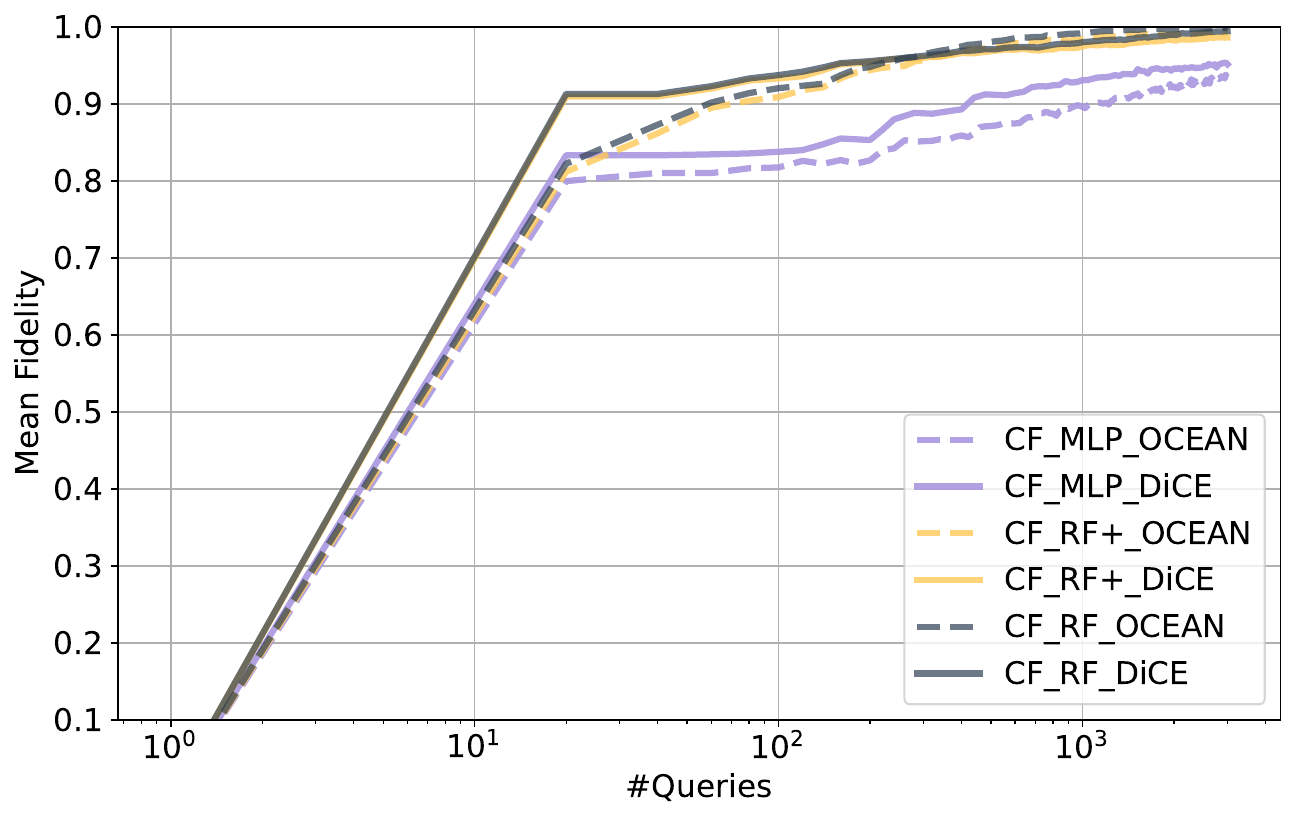}
    \includegraphics[width=0.49\linewidth]{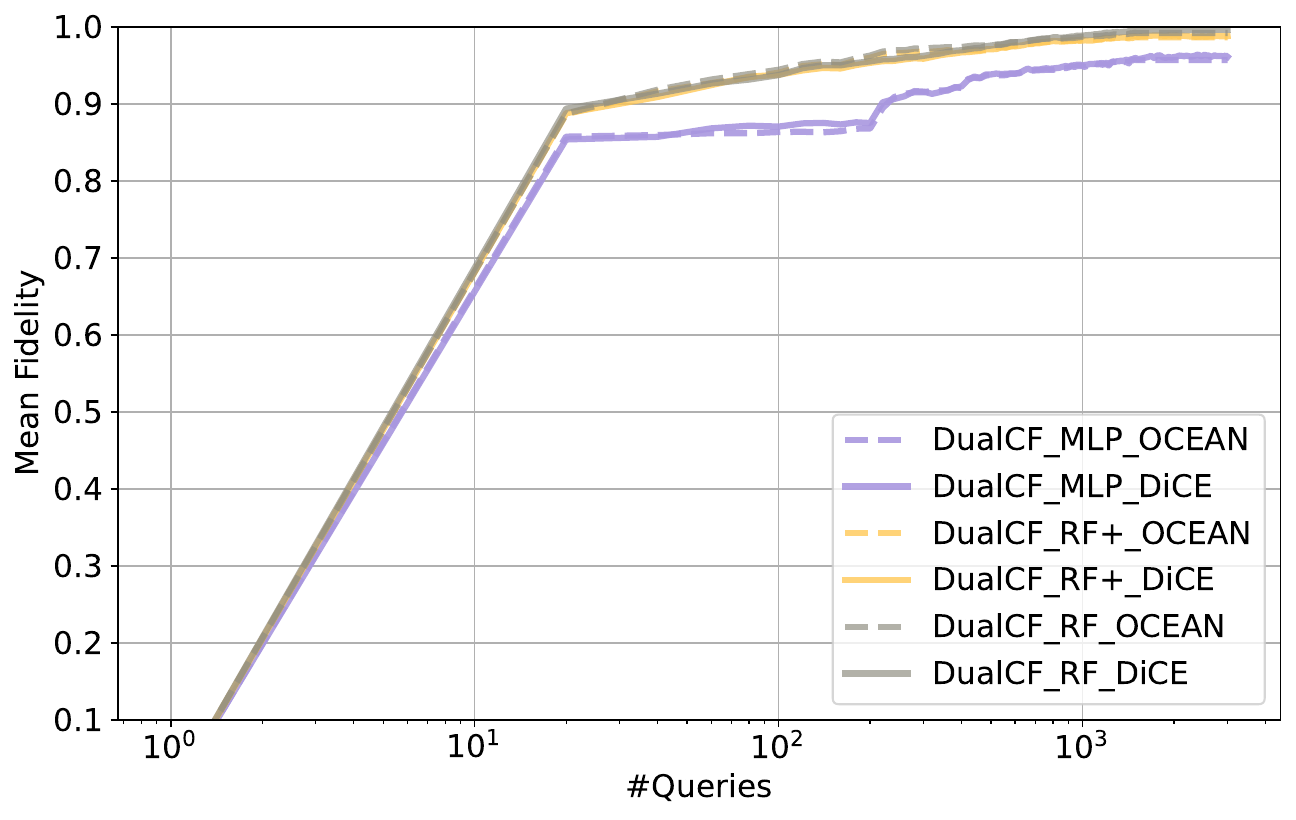}
    \caption{Anytime performance of the \CF{} (left) and \DualCF{} (right) model extraction attacks against random forests. We report results for the COMPAS dataset and all studied configurations, including adversarial knowledge regarding the target model architecture (DT, DT+, and MLP) and counterfactual oracles (DiCE and OCEAN).}
    \label{fig:CF_DualCF_RFs_full}
\end{figure}

\paragraph{Knowledge of the target model architecture and hyperparameters.} One first important trend that is consistent across both \CF{} and \DualCF{}, and for both decision trees and random forests, is that knowledge of the hypothesis class of the target model helps fitting a surrogate with high fidelity. Indeed, as can be observed in Figures~\ref{fig:CF_full}, \ref{fig:DualCF_full} and \ref{fig:CF_DualCF_RFs_full}, the MLP surrogate always under-performs compared to the decision trees or random forests ones. Indeed, fitting a surrogate model of the same type is facilitated by the fact that the shapes of its decision boundaries are the same as the target model, e.g., axis-parallel splits for tree-based models. Interestingly, knowledge of the hyperparameters of the target decision tree or random forests does not seem to help fitting the surrogate. Indeed, in most experiments, the surrogate sharing the same hypothesis class as the target model (i.e., DT or RF) and the surrogate sharing both the hypothesis class and the hyperparameters (i.e., DT+ or RF+) have very close performances. Furthermore, imposing the target model's hyperparameters to the trained surrogate can even be counterproductive, as can be seen in Figures~\ref{fig:CF_adult} and~\ref{fig:CF_compas} for instance. In such cases, the fact that surrogate learning is more constrained due to the enforced hyperparameters (e.g., maximum depth) seems to slow its convergence towards very high fidelity values. This is consistent with previous findings: for instance, \citet{aivodji2020model} observed that knowledge of the architecture of a target neural network did not provide a significant advantage to the \CF{} model extraction attack.

\paragraph{Optimality of the counterfactuals.} The results in Figures~\ref{fig:CF_full}, \ref{fig:DualCF_full} and \ref{fig:CF_DualCF_RFs_full} suggest that the non-optimal counterfactual explanations returned by DiCE helped fitting the extracted surrogate models better than the optimal ones provided by OCEAN. Indeed, for a fixed query budget and surrogate type, the performances of the extracted model are often better when fitted with DiCE counterfactuals than with OCEAN ones. Although some variations appear, this finding is generally verified for all the studied types of surrogates, for both decision trees and random forests target models, and for both the \CF{} and \DualCF{} extraction attacks. Intuitively, this can be attributed to a greater diversity in the non-optimal counterfactuals, which do not necessarily lie close to a decision boundary, unlike optimal ones. This also highlights a crucial insight: optimal counterfactuals only give an advantage to a model extraction attack if the attack is able to leverage the information it carries as a whole (including both the counterfactual example and its optimality) through a structured approach, as demonstrated by TRA.

\subsection{Test Set Fidelity Results}
\label{appendix:fidelity_test_set}

As mentioned in Section~\ref{sec:exp_setup}, the results provided in Section~\ref{sec:expes_results} measure fidelity on a dataset uniformly sampled over the input space, which accurately quantifies how well the extracted models fit the decision boundaries of the target ones over the whole input space. Another approach consists in evaluating fidelity on a test set. In such cases, the results indicate how well the extracted models mimic the target ones for examples drawn from the actual data distribution. We report in Table~\ref{tab:results_test_set_fidelity} the results of our extraction attacks against random forests. More precisely, for random forests of varying sizes, we report the average fidelity (measured on the uniformly sampled dataset or on a test set) achieved by all considered methods along with the required number of queries. For \DualCF{} and \CF{}, these values are arbitrarily fixed to allow their surrogates to converge towards (near) perfect fidelity. For our proposed TRA, functional equivalence is achieved using the reported number of queries, hence fidelity on both considered datasets is always $1.0$. For \CF{} and \DualCF{}, we report results for the random forest surrogate using default parameters, for both studied counterfactual oracles. Indeed, we observed in Section~\ref{appendix:surrogate_based_attacks} that DiCE counterfactuals led to better anytime performances (in terms of uniform dataset fidelity) than OCEAN ones within the \CF{} model extraction attack, in most experiments. However, this is not always the case, with a few setups in which the difference between both approaches becomes very small or shifts in favor of the runs using OCEAN counterfactuals after sufficiently many iterations. This is the case on the COMPAS dataset (Figure~\ref{fig:CF_compas}), and although the difference remains very subtle, it is also visible on the test set fidelity, illustrating the fact that the two values are often very well aligned. 

Overall, the superiority of TRA is clear, both in terms of (uniform or test) fidelity and in terms of required numbers of queries, confirming the observations of Section~\ref{sec:expes_results}.

\begin{table}[H]
    \caption{Summary of our model extraction experiments against random forests, on the COMPAS dataset. For random forests with varying numbers of trees, we report their total number of nodes and the average performances of the different considered model extraction attacks. FU and FTD denote respectively the Fidelity over the Uniform and Test Data.}
    \label{tab:results_test_set_fidelity}
    \centering
    \resizebox{\textwidth}{!}{%
    \begin{tabular}{lll|lcc|lcc|lcc|lcc|lcc}
    \toprule
    & & & \multicolumn{3}{c|}{TRA} & \multicolumn{6}{c|}{DualCF} & \multicolumn{6}{c}{CF} \\
    \cline{7-18}
    & & & \multicolumn{3}{c|}{} &  \multicolumn{6}{c|}{RF} &  \multicolumn{6}{c}{RF}  \\
    \cline{4-18}
    & & & \multicolumn{3}{c|}{OCEAN}  & \multicolumn{3}{c|}{DiCE} & \multicolumn{3}{c|}{OCEAN} & \multicolumn{3}{c|}{DiCE} & \multicolumn{3}{c}{OCEAN}  \\
    \cline{4-18}
    Dataset & \#Trees & \#Nodes &  \#Queries & FU & FTD & \#Queries & FU & FTD & \#Queries & FU & FTD & \#Queries & FU & FTD & \#Queries & FU & FTD \\
    \hline \multirow{4}{*}{COMPAS}  & 5 & 486.60 & 73.60 & 1.00 & 1.00 & 3000 & 1.00 & 0.99 & 3000 & 1.00 & 1.00 & 3000 & 1.00 & 1.00 & 3000 & 1.00 & 1.00 \\
     & 25 & 4569.00 & 138.80 & 1.00 & 1.00 & 3000 & 0.99 & 0.98 & 3000 & 1.00 & 1.00 & 3000 & 1.00 & 1.00 & 3000 & 1.00 & 1.00 \\
     & 50 & 9151.20 & 147.60 & 1.00 & 1.00 & 3000 & 0.99 & 0.98 & 3000 & 1.00 & 1.00 & 3000 & 1.00 & 1.00 & 3000 & 1.00 & 1.00 \\
     & 75 & 7317.00 & 95.20 & 1.00 & 1.00 & 3000 & 1.00 & 0.99 & 3000 & 1.00 & 1.00 & 3000 & 1.00 & 1.00 & 3000 & 1.00 & 1.00 \\
     & 100 & 18369.20 & 129.60 & 1.00 & 1.00 & 3000 & 1.00 & 0.98 & 3000 & 1.00 & 1.00 & 3000 & 1.00 & 1.00 & 3000 & 1.00 & 1.00 \\
    \bottomrule
    \end{tabular}}
\end{table}

\subsection{Detailed Experimental Results}\label{appendix:all_results}

We hereafter report all the results of our main experiments over the five considered datasets. 

First, Figure~\ref{fig:MFvsQ_full} provides the anytime performances (in terms of average surrogate fidelity as a function of the number of performed queries) of the four considered model extraction attacks when applied on decision tree target models. The findings highlighted in Section~\ref{sec:expes_results} (in particular, \textbf{Result 1}) are consistent across all considered datasets: TRA exhibits higher anytime fidelity than \CF{} and \DualCF{} for all query budgets, while also providing functional equivalence guarantees. While \PathFinding{} also provides these guarantees, it necessitates orders of magnitudes more queries to fit its surrogate.

Figure~\ref{fig:QvsN_full} focuses on functionally equivalent model extraction attacks, and relates the number of queries they require to fully recover the target model to its size (quantified as its number of nodes). The logarithmic scale of the y-axis highlights that TRA usually requires orders of magnitudes fewer queries than \PathFinding{} to entirely extract a given decision tree, confirming our \textbf{Result 2} (Section~\ref{sec:expes_results}). %
Interestingly, this trend is more subtle when reconstructing large trees trained on the datasets with the highest numbers of features (i.e., Adult and SPerformance).

Finally, Figure~\ref{fig:tra_queries_rf_size} reports the number of counterfactual queries required by TRA to conduct a functionally equivalent extraction of random forests of various sizes, as a function of the total number of nodes to be recovered within the target forest. 
Importantly, as quantified through the performed power-law regression, the number of queries required by TRA to entirely extract the target forests grows sub-linearly -- in $\Theta(\text{\#Nodes}^{0.38})$ -- with the total number of nodes to be retrieved. This empirically demonstrates the very good scalability of TRA with respect to the size of the target random forests, consistent with \textbf{Result 3} (Section~\ref{sec:expes_results}). Note that this behavior arises because large forests with many trees introduce redundancies, allowing the extracted model to be represented with perfect fidelity as a more compact decision tree \citep{vidal2020bornagaintreeensembles}. 

\def\ra{0.48}
\begin{figure}[ht]
     \centering
     \begin{subfigure}[b]{\ra\textwidth}
         \centering
         \includegraphics[width=\textwidth]{figs/MFvsQ/MFvsQ_Adult.pdf}
         \caption{Adult dataset}
         \label{fig:MFvsQ_adult}
     \end{subfigure}
     \hfill
     \begin{subfigure}[b]{\ra\textwidth}
         \centering
         \includegraphics[width=\textwidth]{figs/MFvsQ/MFvsQ_COMPAS.pdf}
         \caption{COMPAS dataset}
         \label{fig:MFvsQ_compas}
     \end{subfigure}
     \hfill
     \begin{subfigure}[b]{\ra\textwidth}
         \centering
         \includegraphics[width=\textwidth]{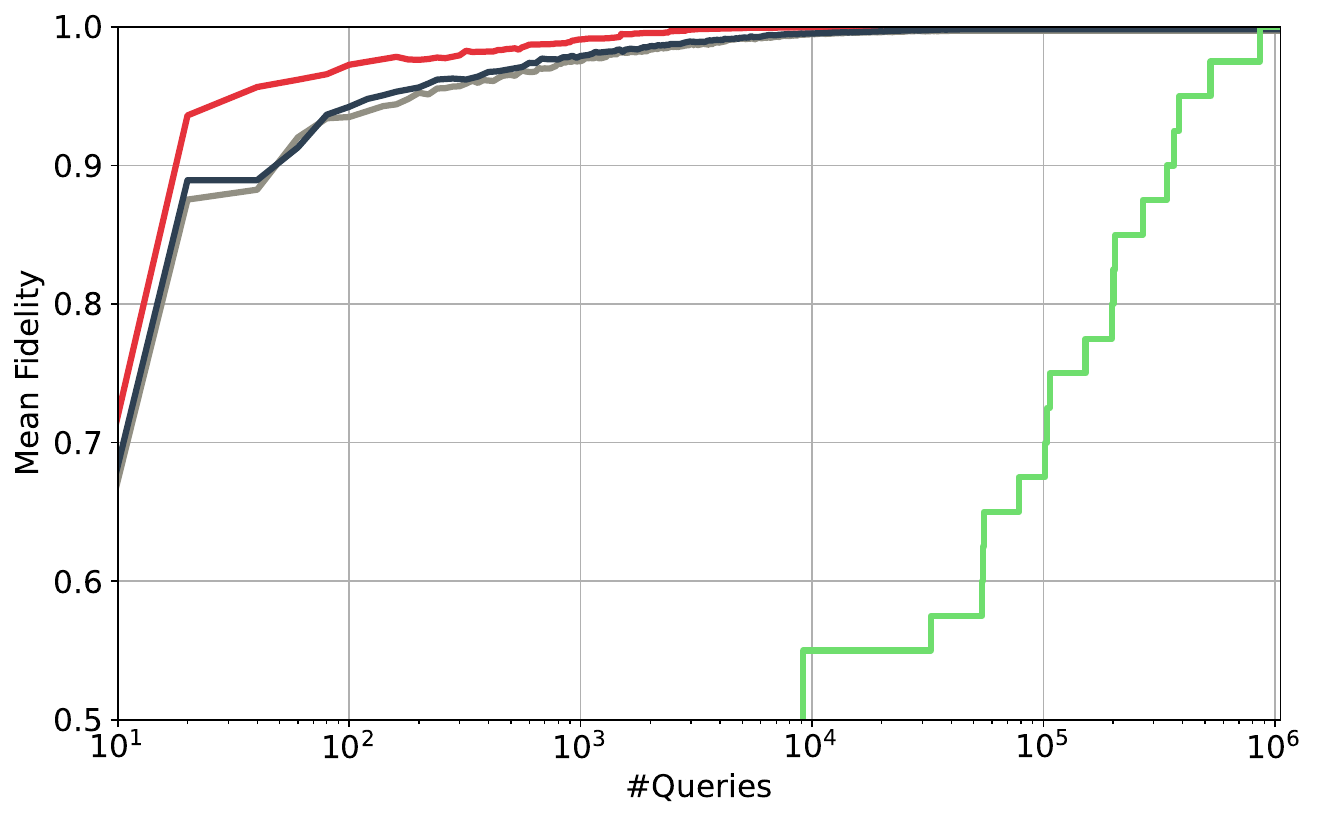}
         \caption{Credit Card dataset}
         \label{fig:MFvsQ_CC}
     \end{subfigure}
     \hfill
     \begin{subfigure}[b]{\ra\textwidth}
         \centering
         \includegraphics[width=\textwidth]{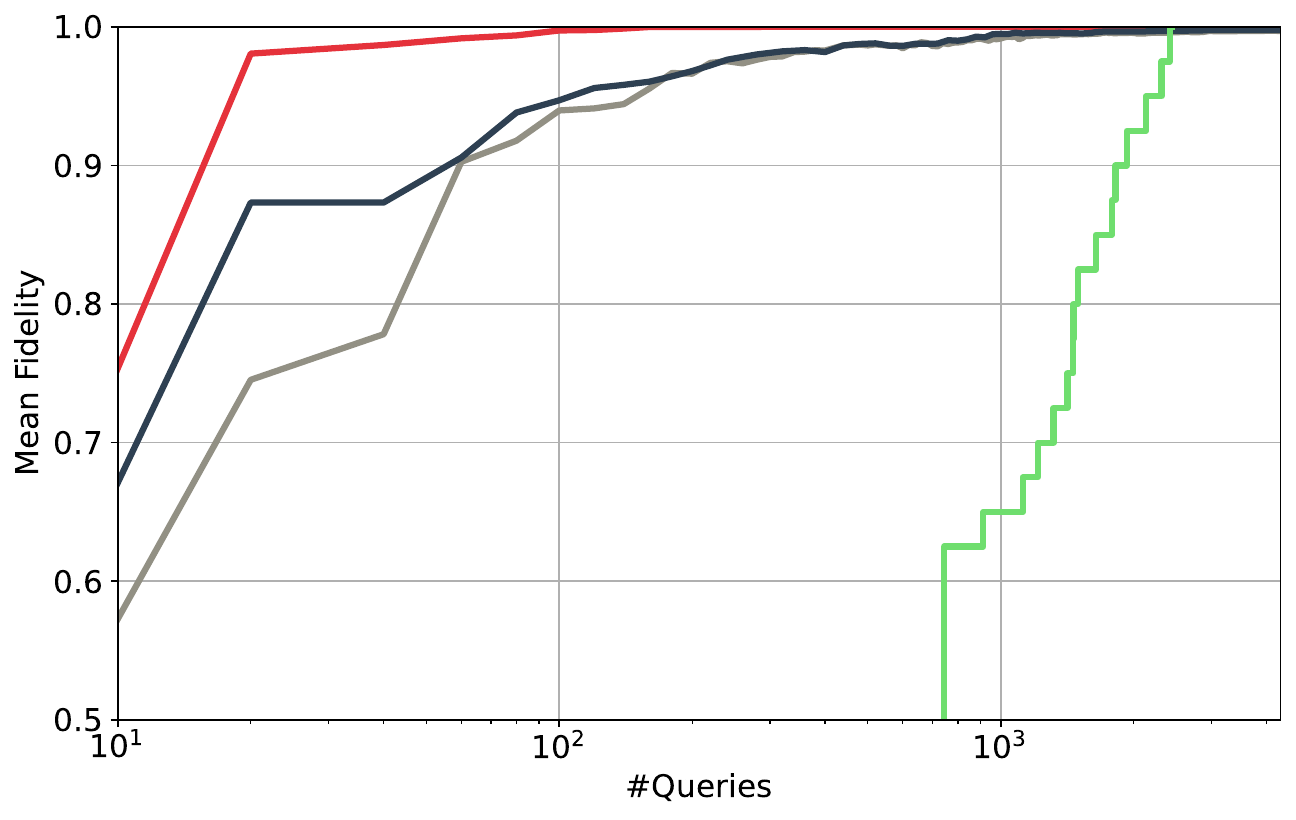}
         \caption{German Credit dataset}
         \label{fig:MFvsQ_GC}
     \end{subfigure}
     \hfill
     \begin{subfigure}[b]{\ra\textwidth}
         \centering
         \includegraphics[width=\textwidth]{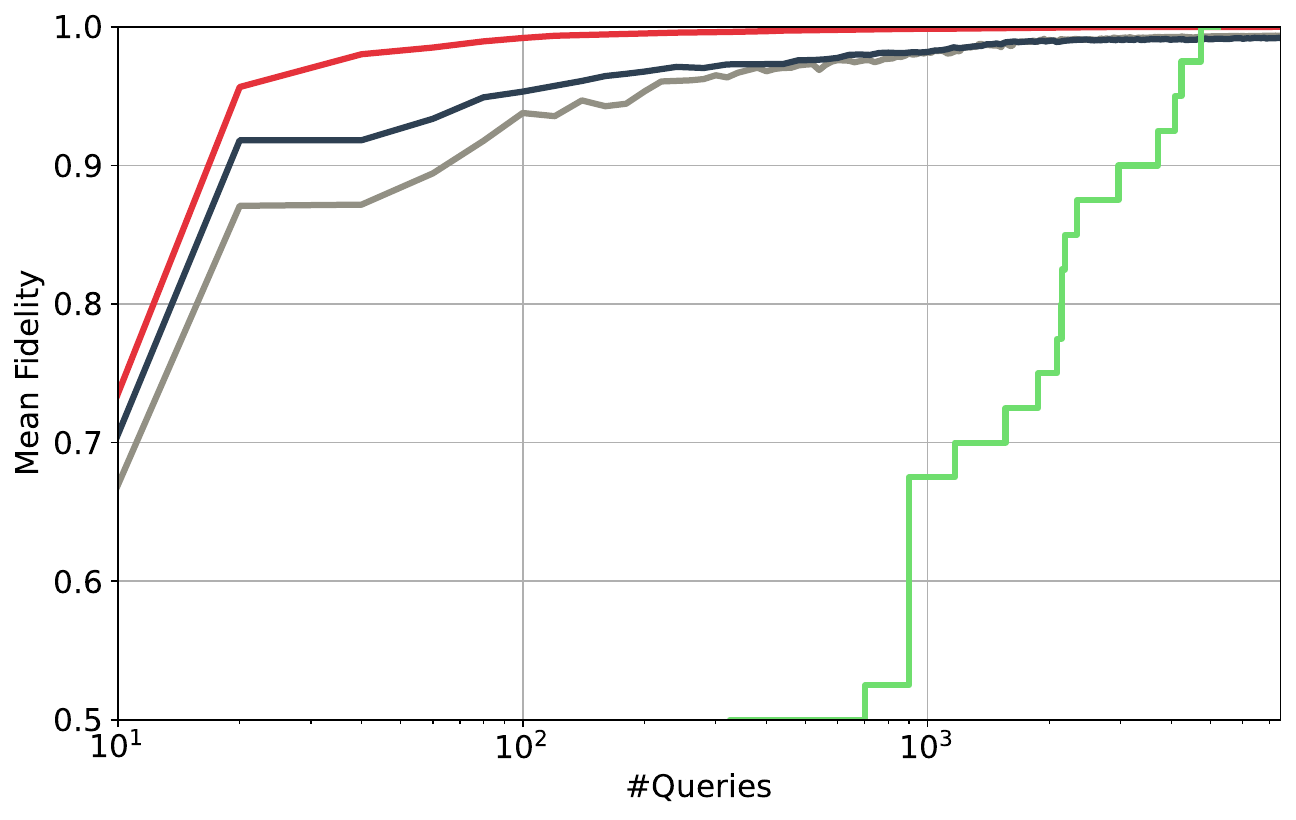}
         \caption{Student Performance dataset}
         \label{fig:MFvsQ_SP}
     \end{subfigure}
     
     \hspace{10pt}
     
     \begin{subfigure}[b]{0.7\textwidth}
         \centering  
         \includegraphics[width=\textwidth]{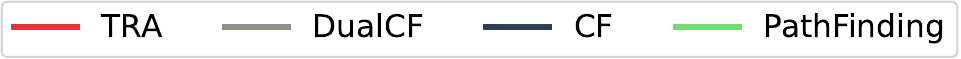}
         \label{fig:MFvsQ_legend}
     \end{subfigure}
     
        \caption{Anytime performance of all the considered model extraction attacks against decision trees. We report results for all datasets and retain the best configuration for the surrogate-based attacks \CF{} and \DualCF{}.}
    \label{fig:MFvsQ_full}
\end{figure}

\begin{figure}[ht]
     \centering
     \begin{subfigure}[b]{\ra\textwidth}
         \centering
         \includegraphics[width=\textwidth]{figs/QvsN/QvsN_Adult.pdf}
         \caption{Adult dataset}
         \label{fig:QvsN_adult}
     \end{subfigure}
     \hfill
     \begin{subfigure}[b]{\ra\textwidth}
         \centering
         \includegraphics[width=\textwidth]{figs/QvsN/QvsN_COMPAS.pdf}
         \caption{COMPAS dataset}
         \label{fig:QvsN_compas}
     \end{subfigure}
     \hfill
     \begin{subfigure}[b]{\ra\textwidth}
         \centering
         \includegraphics[width=\textwidth]{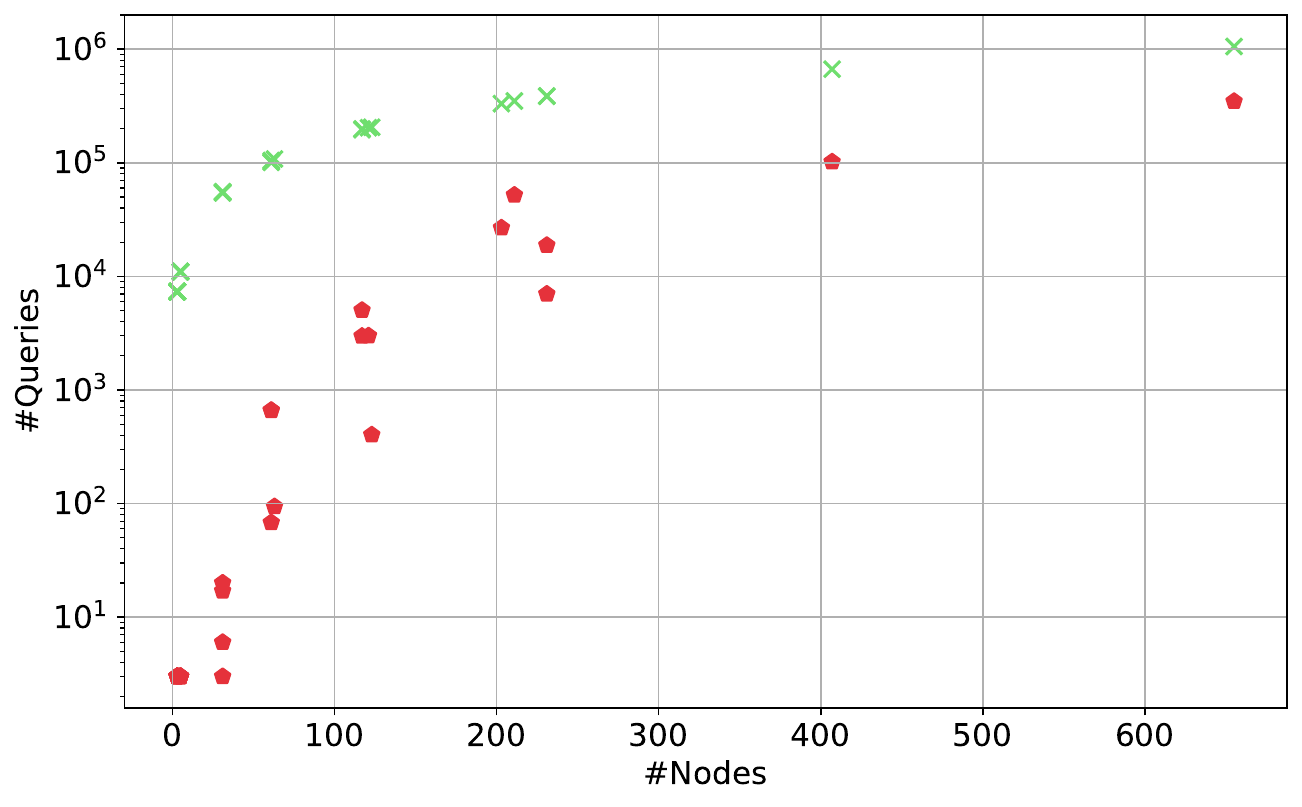}
         \caption{Credit Card dataset}
         \label{fig:QvsN_CC}
     \end{subfigure}
     \hfill
     \begin{subfigure}[b]{\ra\textwidth}
         \centering
         \includegraphics[width=\textwidth]{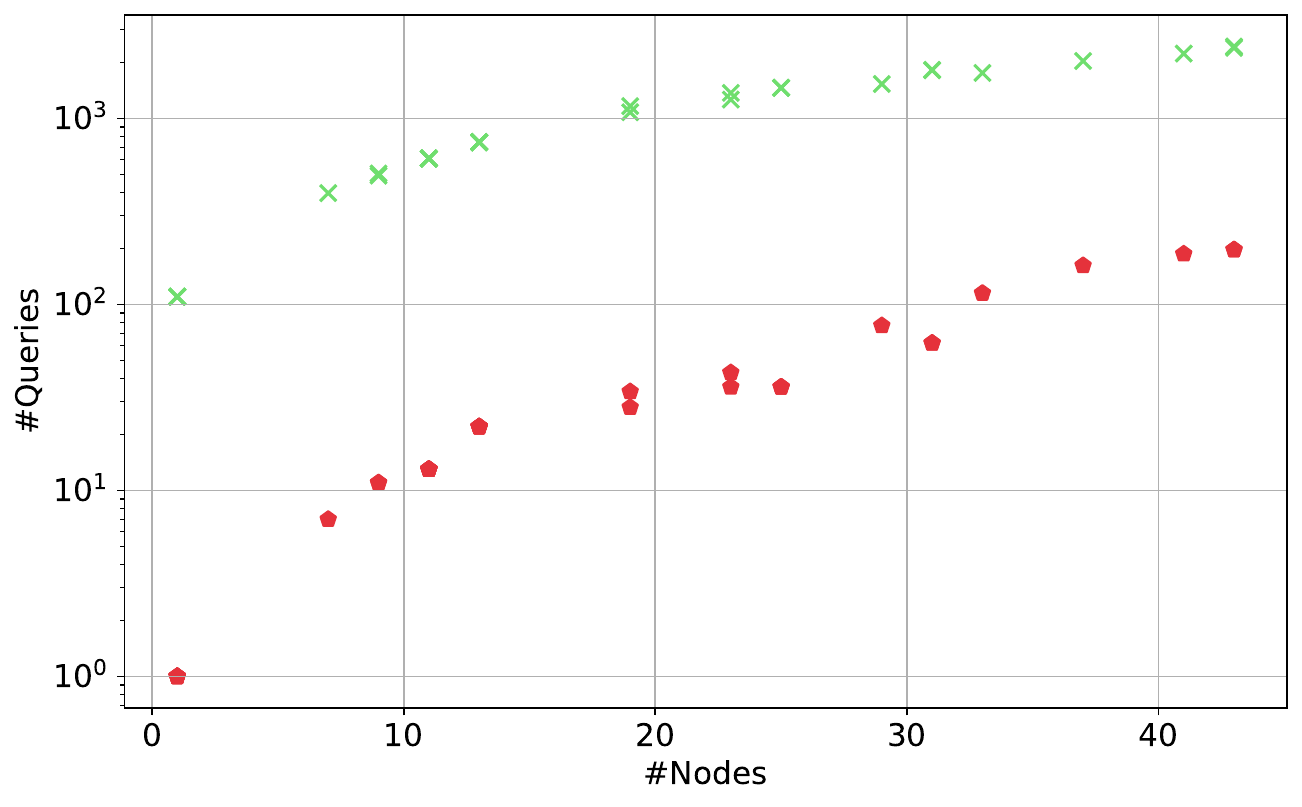}
         \caption{German Credit dataset}
         \label{fig:QvsN_GC}
     \end{subfigure}
     \hfill
     \begin{subfigure}[b]{\ra\textwidth}
         \centering
         \includegraphics[width=\textwidth]{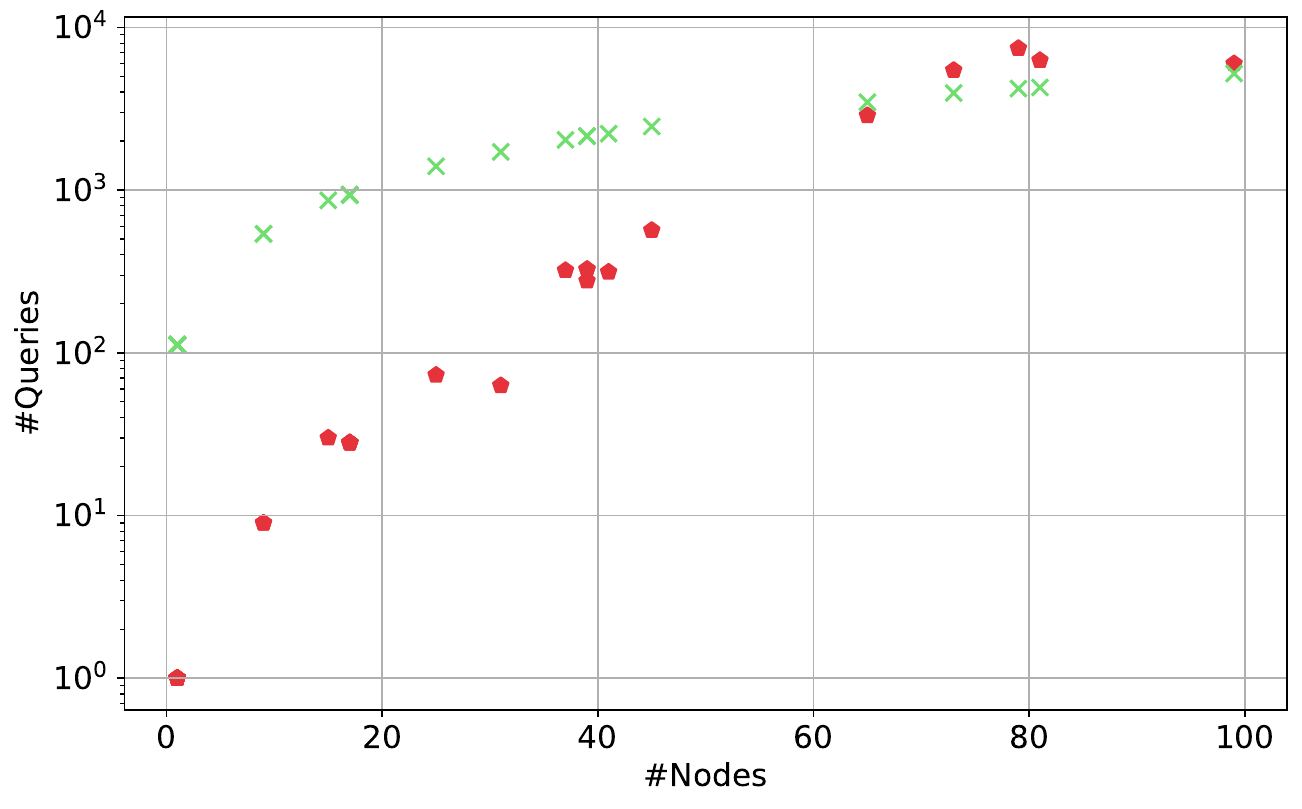}
         \caption{Student Performance dataset}
         \label{fig:QvsN_SP}
     \end{subfigure}
     
     \vspace{10pt}
     \begin{subfigure}[b]{0.3\textwidth}
         \centering     
         \includegraphics[width=\textwidth]{figs/QvsN/legend.pdf}
         \label{fig:QvsN_legend}
     \end{subfigure}
     
        \caption{Performance of the \PathFinding{} and TRA functionally equivalent model extraction attacks against decision trees. We report results for all datasets where each point represents the number of queries needed to fully reconstruct the trees.}
    \label{fig:QvsN_full}
\end{figure}

\begin{figure}[ht]
    \centering
    \includegraphics[width=0.65\linewidth]{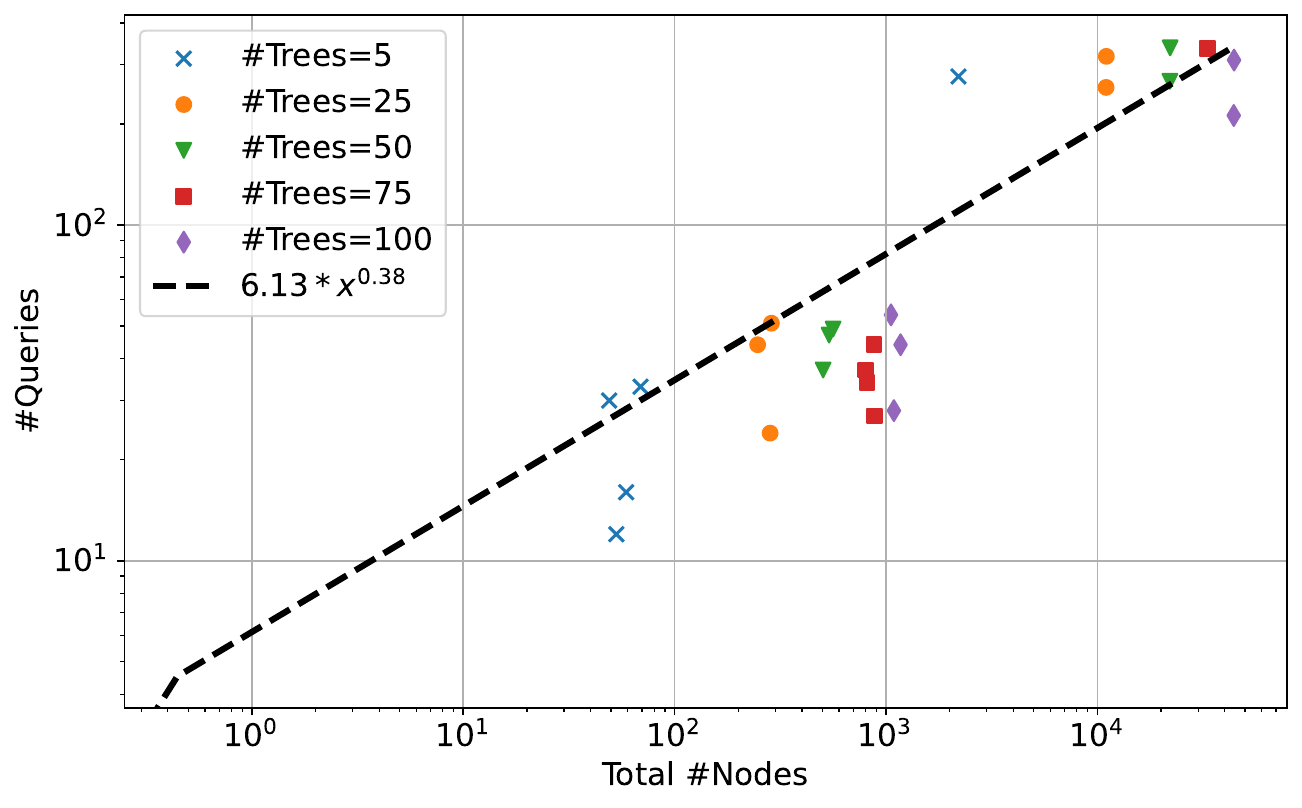}
    \caption{Number of queries required by TRA to perform a functionally equivalent extraction of target random forests of various sizes on the COMPAS dataset, as a function of the total number of nodes to be reconstructed within the target forest. As illustrated through the performed power-law regression, the number of required queries grows sub-linearly -- in $\Theta(\text{\#Nodes}^{0.38})$ -- with the total number of nodes to be retrieved, suggesting good scalability of the extraction attack with respect to the forest size.
}
    \label{fig:tra_queries_rf_size}
\end{figure}

\clearpage
\section{Performances of TRA when used with Locally Optimal Counterfactuals}
\label{appendix_condition}

In the main body of the paper (Section~\ref{sec:expes_results}) we employed the \textsc{OCEAN} counterfactual oracle for our experiments using TRA. It supplies \emph{globally} optimal counterfactual explanations. While such global optimality is \emph{sufficient} for the \textsc{TRA} attack to succeed, it is \emph{not necessary}. Indeed, as pointed out in Section~\ref{sec:tra}, \textsc{TRA} requires counterfactuals that are \emph{locally} optimal -- i.e., positioned on (or infinitesimally close to) the decision boundary of the explained model $f$. To illustrate this distinction, we replicate all experiments, following exactly the same setup described in Section~\ref{sec:exp_setup}, using locally optimal counterfactuals. 

The DiCE counterfactual oracle, used in our experiments for the \CF{} and \DualCF{} model extraction attacks, could be a good candidate. However, it frequently fails to return valid counterfactuals when restricted to a given subspace of the input space -- making it difficult to use within TRA. This suggests that the counterfactuals found by DiCE often lie in different decision regions of the target model, rather far from the actual query. When the counterfactuals are used to provide recourse information, this can be problematic -- for instance, by suggesting that a credit applicant must exert significantly more effort than actually necessary to improve their application.
For these reasons, in our experiments using TRA, we rather consider a lightweight heuristic (Algorithm \ref{alg:Heuristic}) producing locally optimal counterfactuals.

\begin{algorithm}[ht]
    \caption{A simple heuristic to find locally optimal counterfactual explanations}
    \label{alg:Heuristic}
    \begin{algorithmic}[1]
       \STATE \textbf{Input:} Query \(x\), model \(f\), input space \(\mathcal{E}\), training data \(\mathcal{D}_T\), and maximum number of iterations for the uniform sampling process \(S\).
       \STATE \textbf{Return:} A locally optimal counterfactual explanation \(x' \in \mathcal{E}\) if it exists. 
       \STATE \(\mathcal{D}_{\mathcal{E}} \gets \mathcal{D}_T \cap \mathcal{E}\)  \COMMENT{gets all the data points that are in the desired input space \(\mathcal{E}\)}
       \FOR{\( x' \in \mathcal{D}_{\mathcal{E}} \) }
           \IF{ \( f(x') \neq  f(x) \) } 
                \RETURN \(linesearch(x,x')\) \COMMENT{$x'$ is a counterfactual, we refine it via line search towards $x$}\label{alg:return1}
            \ENDIF
       \ENDFOR
       \FOR{ \(0\leq i \leq S\)}
            \STATE \(x' \gets sample(\mathcal{E})\) \COMMENT{sample a point $x'$ uniformly within \(\mathcal{E}\)}
            \IF{ \( f(x') \neq  f(x) \) }    
                \RETURN \(linesearch(x,x')\) \COMMENT{$x'$ is a counterfactual, we refine it via line search towards $x$} \label{alg:return2}
            \ENDIF
        \ENDFOR
       \RETURN \(None\) \COMMENT{no counterfactual found}
    \end{algorithmic}
\end{algorithm}

More precisely, Algorithm \ref{alg:Heuristic} searches in the training dataset and then refines the first valid counterfactual it finds via a one‑dimensional line search (line~\ref{alg:return1}). If no counterfactual is found, it tries another strategy by sampling a predefined number of points uniformly in \(\mathcal{E}\). In our experiments, the maximum number of such sampled points is fixed to \(S = 1000\). If a valid counterfactual is found, it is refined via line search and returned (line~\ref{alg:return2}). Because the search is stochastic and local, it may fail to return a counterfactual even when one exists, yet it satisfies local optimality whenever a solution is found. 

Figures \ref{fig:MFvsQ_full_H} and \ref{fig:QvsN_full_H} contrast the performance of TRA to extract decision trees, using \textsc{OCEAN} (globally optimal counterfactuals) versus Algorithm \ref{alg:Heuristic} (simpler algorithm, producing locally optimal counterfactuals) as counterfactual oracle. %
The observed trends confirm that \textsc{TRA} remains effective as long as the returned counterfactuals are locally, though not necessarily globally, optimal -- thereby confirming \textbf{Result 4} (Section~\ref{sec:expes_results}). Interestingly, we observe in Figure~\ref{fig:MFvsQ_full_H} that in the early iterations, locally optimal counterfactuals may lead to better anytime fidelity values -- which is likely the case due to their heuristic nature, enhancing diversity. We also observe a pathological case in Figure~\ref{fig:MFvsQ_adult_H}, with the Adult dataset. In this particular experiment, when used with Algorithm \ref{alg:Heuristic} as counterfactual oracle, TRA converges close to near-perfect -- but not exact -- fidelity. This is due to the fact that Algorithm \ref{alg:Heuristic} fails to find feasible counterfactuals within the desired region, even if one exists. Indeed, even if the counterfactuals provided by this simple oracle satisfy local optimality whenever one is found, there is no guarantee that if none is found that no one actually exists. Performing a finer uniform sampling (via raising the number of sampled points $S$) could fix this issue. Nevertheless, the performances of TRA with this weak oracle remain very robust, as evidenced through our large set of experiments.

The performances of TRA to extract random forests target models, with either the OCEAN or Algorithm \ref{alg:Heuristic} counterfactual oracles, are provided in Table \ref{tab:results_test_set_fidelity_H} and display the same trends. TRA is able to achieve perfect fidelity for both counterfactual oracles, and there is no significant difference in the required number of queries (which remains very low compared to the size of the reconstructed forests, as discussed in Section~\ref{sec:expes_results}).

\begin{figure}[ht]
     \centering
     \begin{subfigure}[b]{\ra\textwidth}
         \centering
         \includegraphics[width=\textwidth]{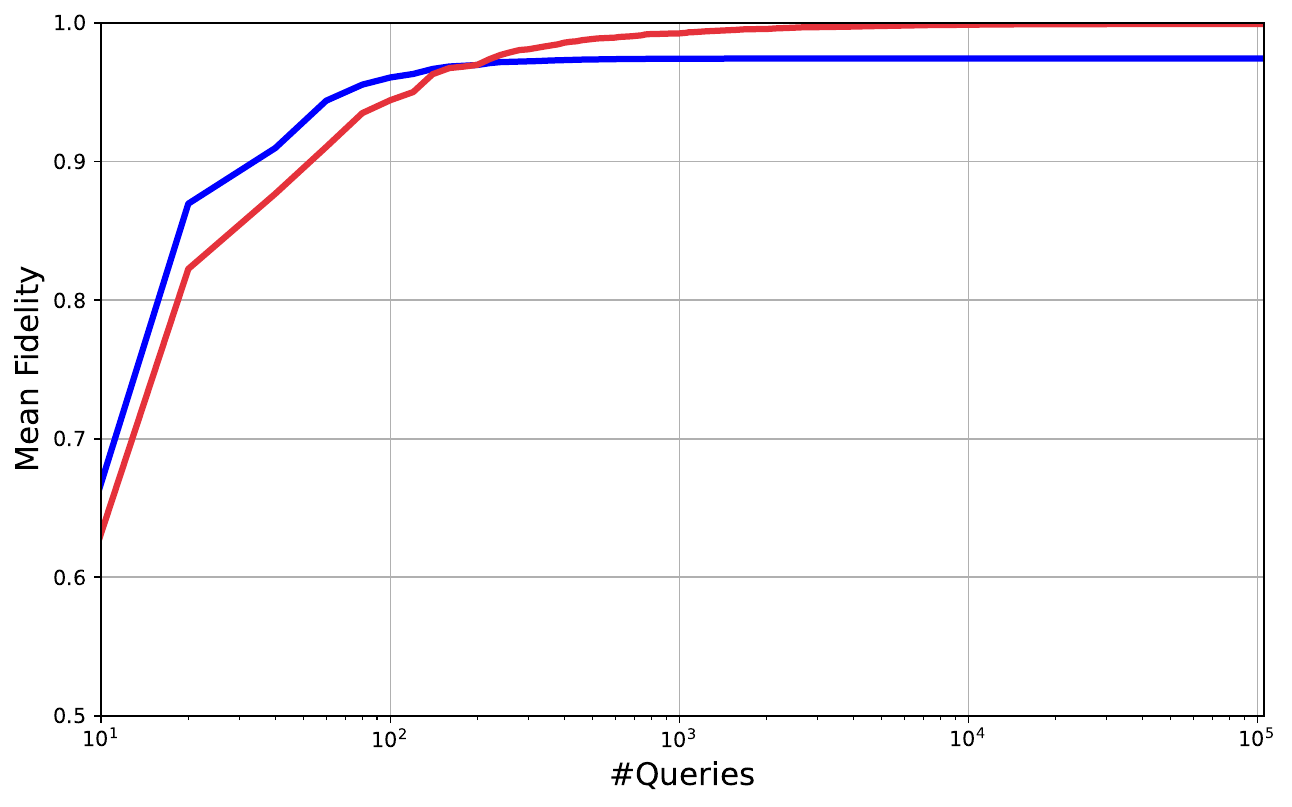}
         \caption{Adult dataset}
         \label{fig:MFvsQ_adult_H}
     \end{subfigure}
     \hfill
     \begin{subfigure}[b]{\ra\textwidth}
         \centering
         \includegraphics[width=\textwidth]{figs/MFvsQ_H/MFvsQ_COMPAS.pdf}
         \caption{COMPAS dataset}
         \label{fig:MFvsQ_compas_H}
     \end{subfigure}
     \hfill
     \begin{subfigure}[b]{\ra\textwidth}
         \centering
         \includegraphics[width=\textwidth]{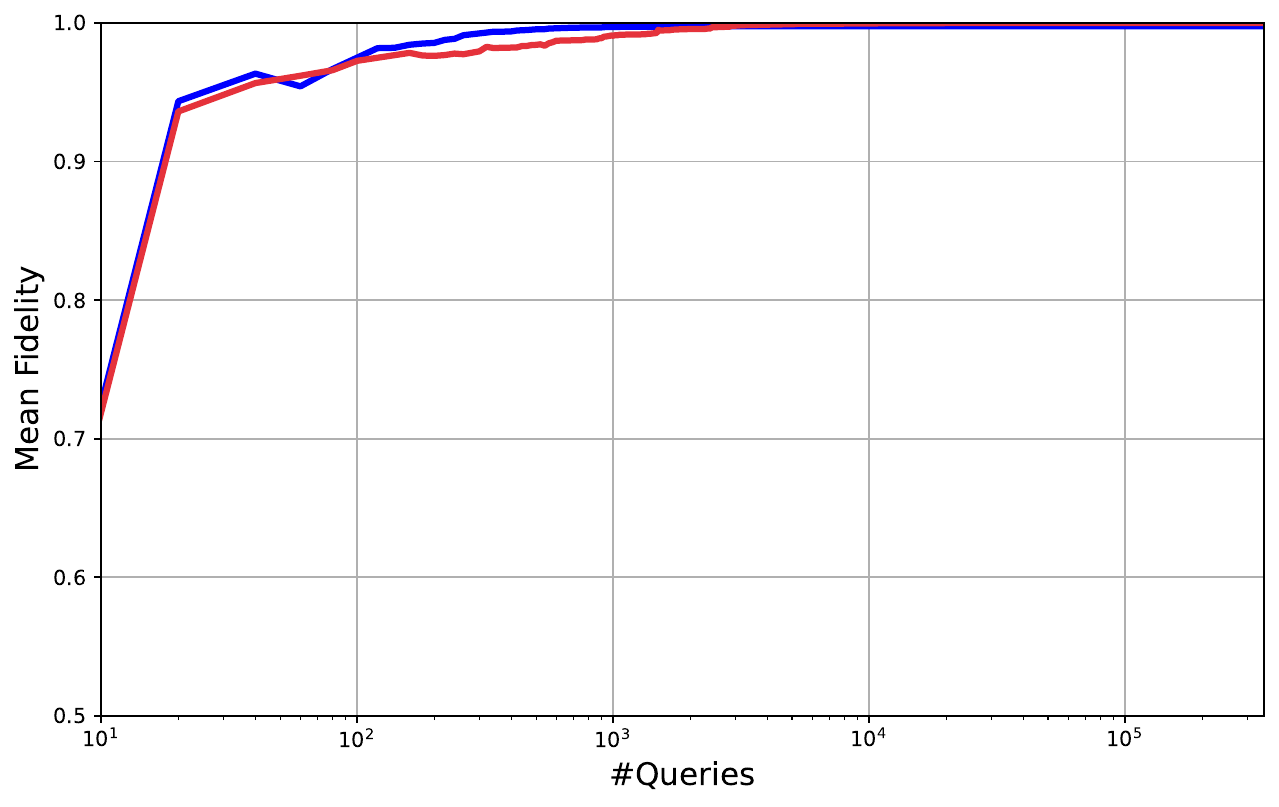}
         \caption{Credit Card dataset}
         \label{fig:MFvsQ_CC_H}
     \end{subfigure}
     \hfill
     \begin{subfigure}[b]{\ra\textwidth}
         \centering
         \includegraphics[width=\textwidth]{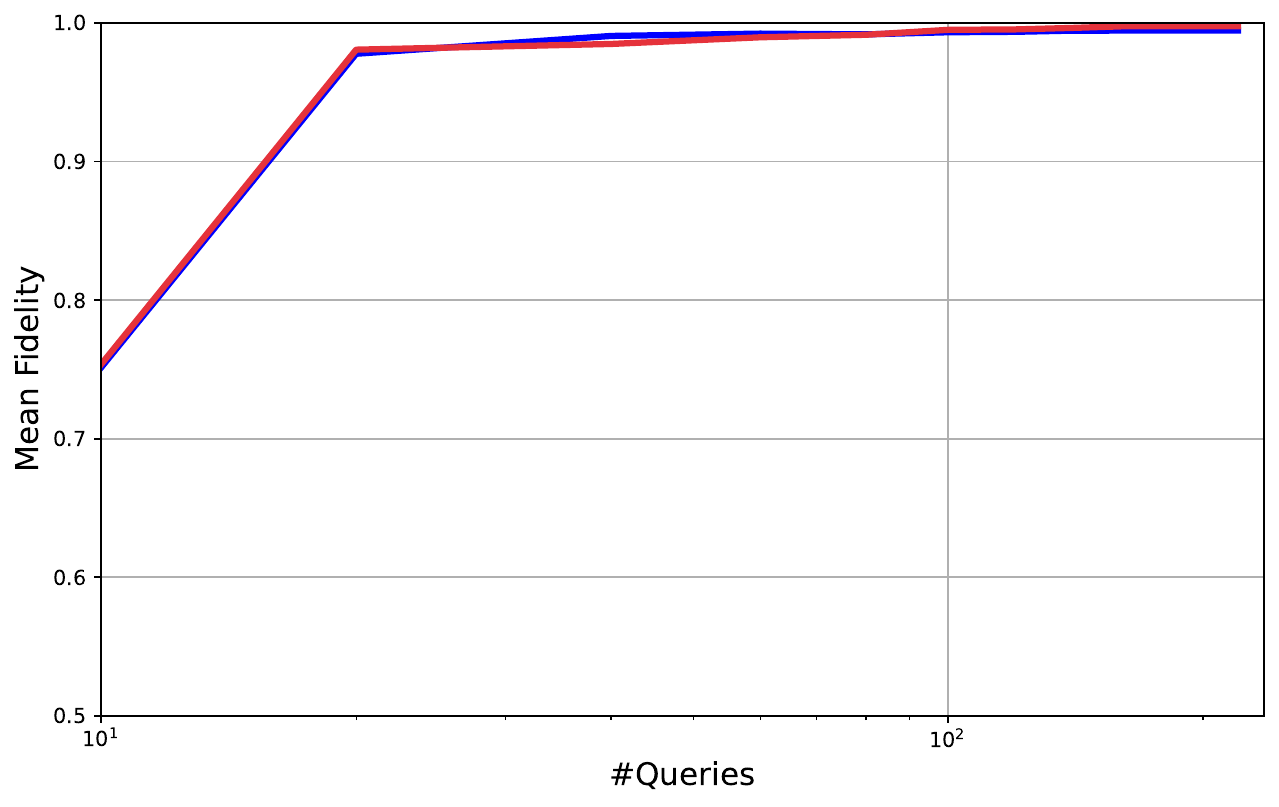}
         \caption{German Credit dataset}
         \label{fig:MFvsQ_GC_H}
     \end{subfigure}
     \hfill
     \begin{subfigure}[b]{\ra\textwidth}
         \centering
         \includegraphics[width=\textwidth]{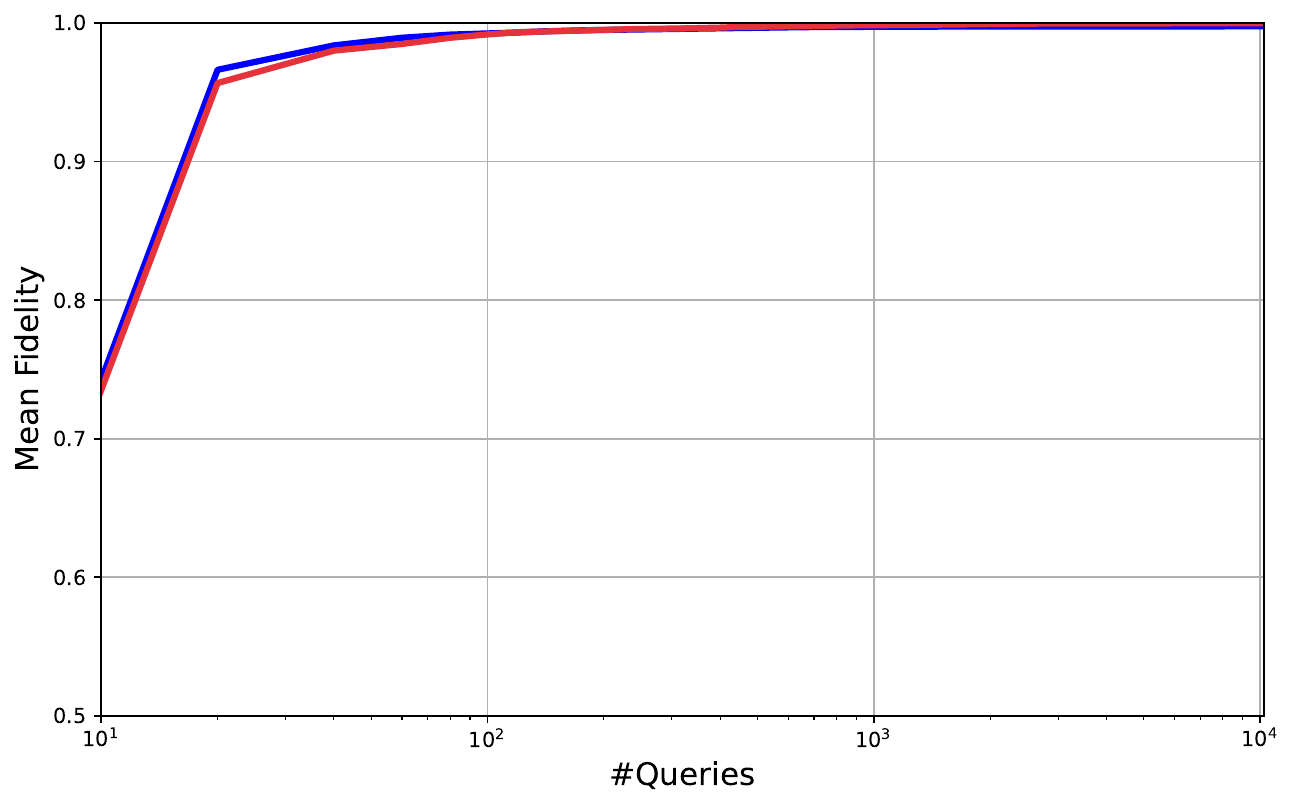}
         \caption{Student Performance dataset}
         \label{fig:MFvsQ_SP_H}
     \end{subfigure}
     \hfill
     \begin{subfigure}[b]{\textwidth}
        \centering
         \includegraphics[width=0.4\textwidth]{figs/MFvsQ_H/legend.pdf}
         \label{fig:MFvsQ_legend_H}
     \end{subfigure}
        \caption{Anytime performance of TRA with either OCEAN or a simpler heuristic counterfactual oracle (Algorithm~\ref{alg:Heuristic}) to extract decision trees. We report results for all datasets.}
    \label{fig:MFvsQ_full_H}
\end{figure}

\begin{figure}[ht]
     \centering
     \begin{subfigure}[b]{\ra\textwidth}
         \centering
         \includegraphics[width=\textwidth]{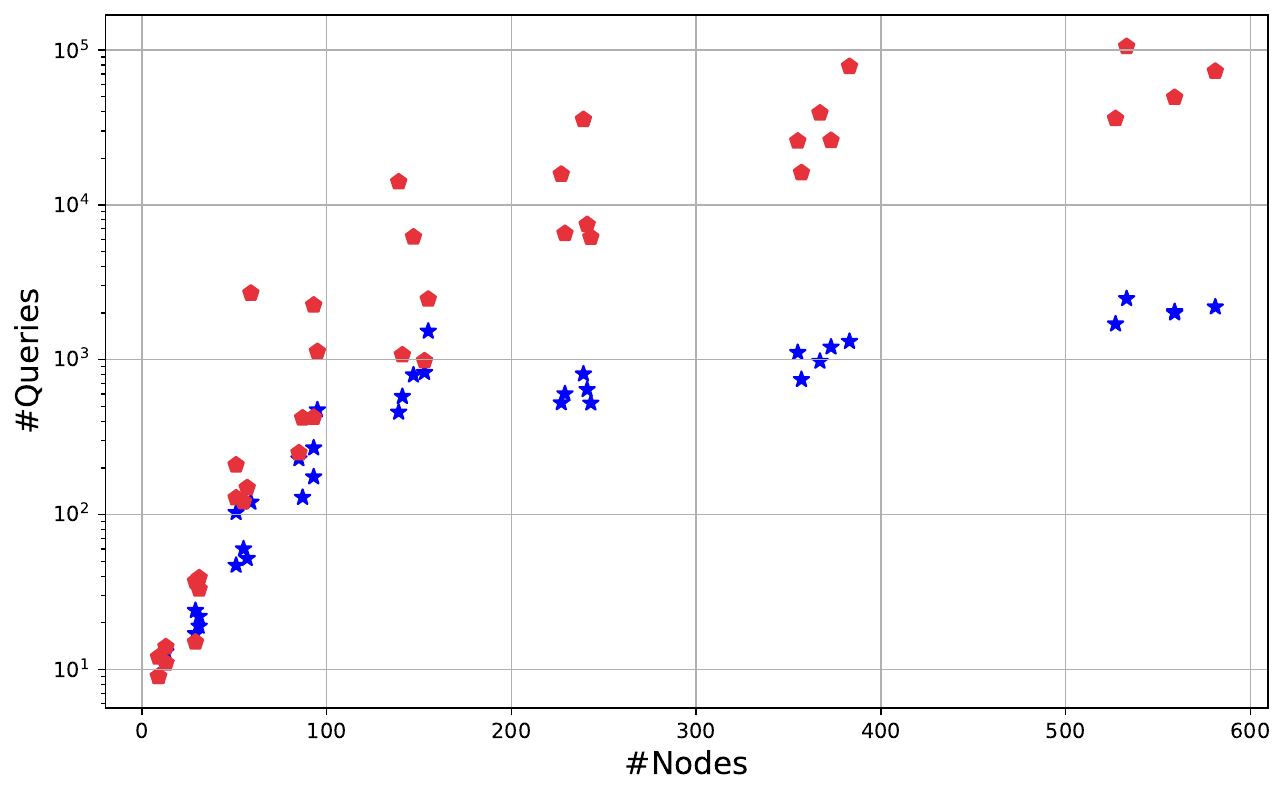}
         \caption{Adult dataset}
         \label{fig:QvsN_adult_H}
     \end{subfigure}
     \hfill
     \begin{subfigure}[b]{\ra\textwidth}
         \centering
         \includegraphics[width=\textwidth]{figs/QvsN_H/QvsN_COMPAS.pdf}
         \caption{COMPAS dataset}
         \label{fig:QvsN_compas_H}
     \end{subfigure}
     \hfill
     \begin{subfigure}[b]{\ra\textwidth}
         \centering
         \includegraphics[width=\textwidth]{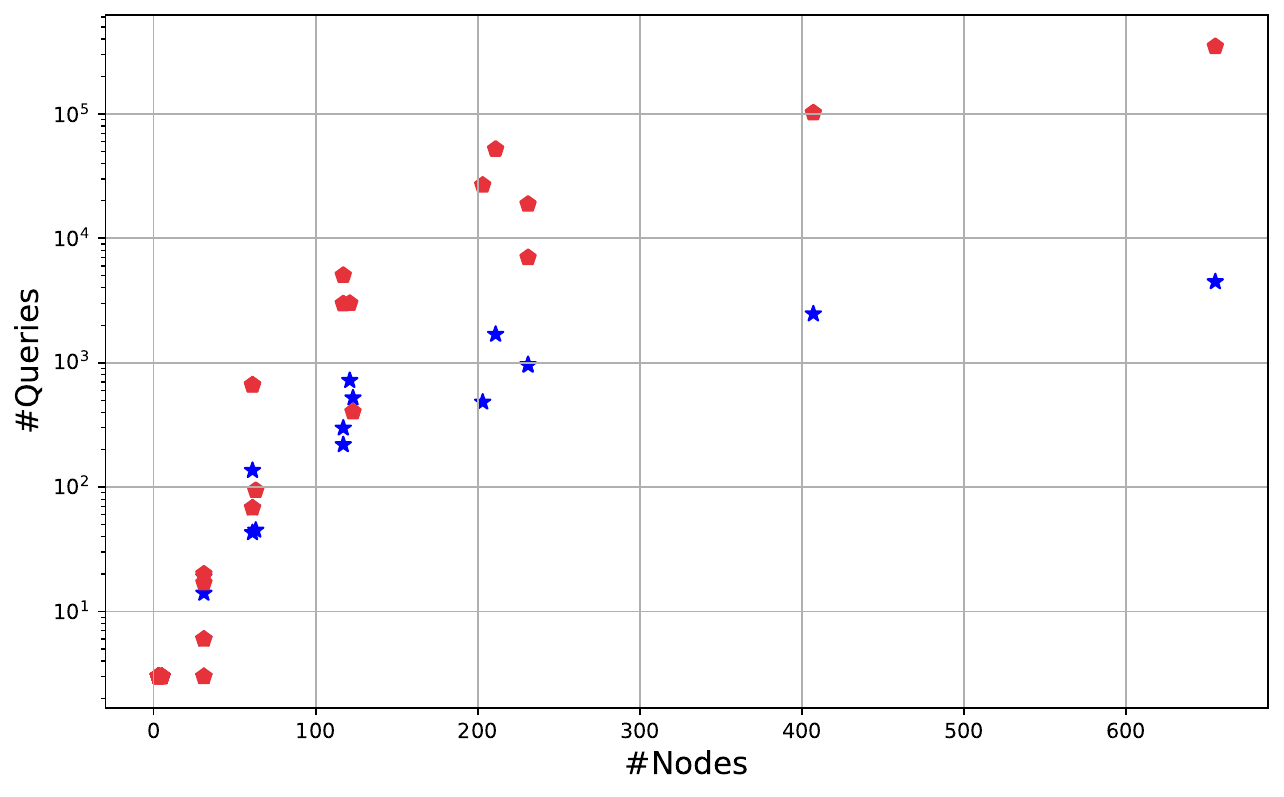}
         \caption{Credit Card dataset}
         \label{fig:QvsN_CC_H}
     \end{subfigure}
     \hfill
     \begin{subfigure}[b]{\ra\textwidth}
         \centering
         \includegraphics[width=\textwidth]{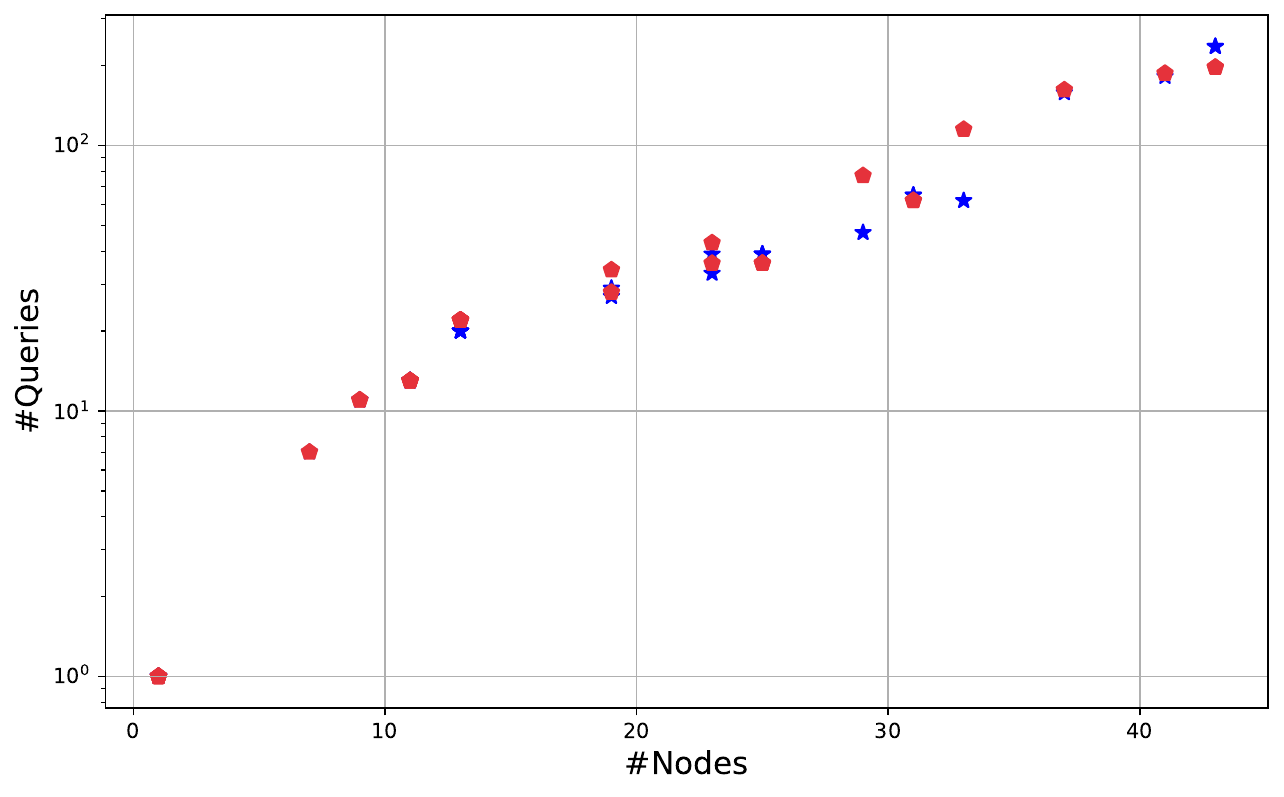}
         \caption{German Credit dataset}
         \label{fig:QvsN_GC_H}
     \end{subfigure}
     \hfill
     \begin{subfigure}[b]{\ra\textwidth}
         \centering
         \includegraphics[width=\textwidth]{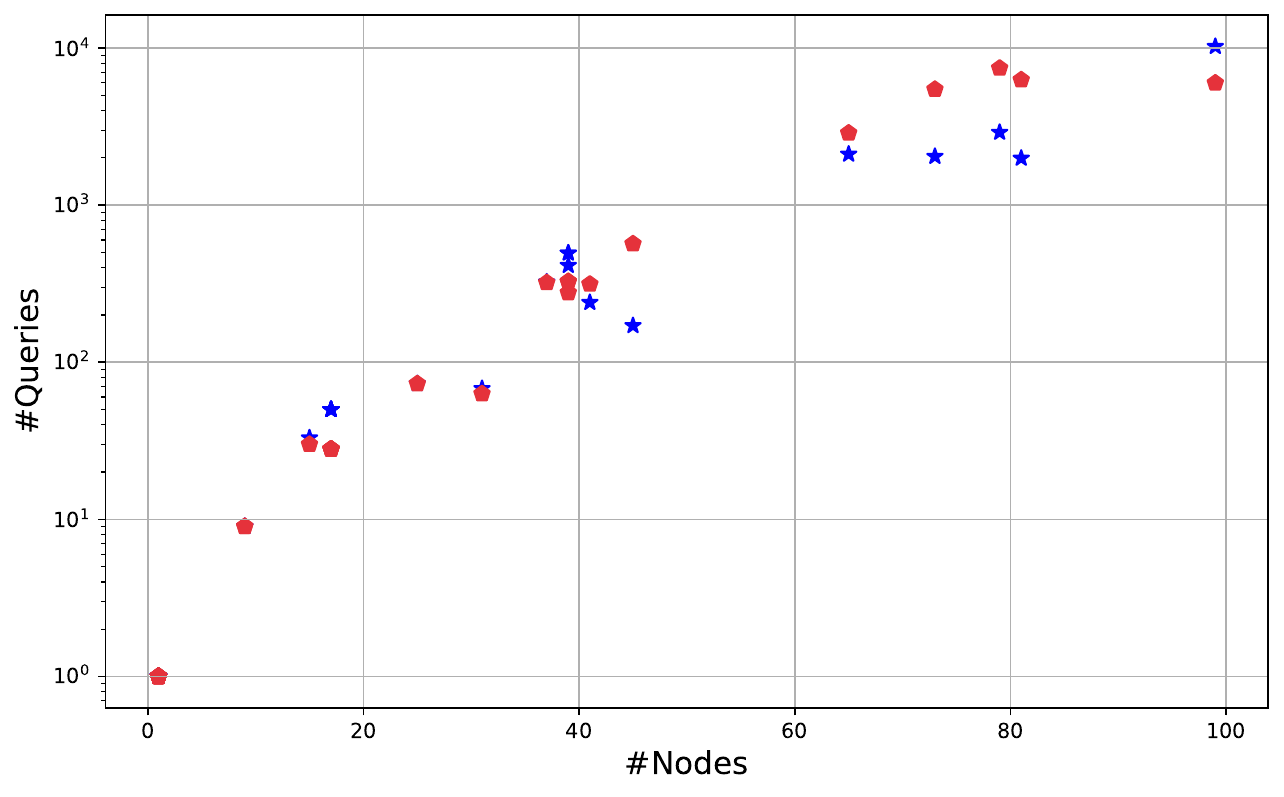}
         \caption{Student Performance dataset}
         \label{fig:QvsN_SP_H}
     \end{subfigure}
     \hfill
     \begin{subfigure}[b]{\textwidth}
         \centering     
         \includegraphics[width=0.4\textwidth]{figs/QvsN_H/legend.pdf}
         \label{fig:QvsN_legend_H}
     \end{subfigure}
        \caption{Performance of TRA with either OCEAN or a simpler heuristic counterfactual oracle (Algorithm~\ref{alg:Heuristic}) to achieve functionally equivalent model extraction attacks against decision trees. We report results for all datasets where each point represents the number of queries needed to fully reconstruct the trees.}
    \label{fig:QvsN_full_H}
\end{figure}

\begin{table}[H]
    \caption{Summary of our model extraction experiments against random forests, on the COMPAS dataset using both OCEAN and Heuristic. FU and FTD denote respectively the Fidelity over the Uniform and Test Data.}
    \label{tab:results_test_set_fidelity_H}
    \centering
    \resizebox{0.8\textwidth}{!}{%
    \begin{tabular}{lll|lcc|lcc}
    \toprule
    & & & \multicolumn{6}{c}{TRA} \\
    \cline{4-9}
    & & & \multicolumn{3}{c|}{OCEAN}  & \multicolumn{3}{c}{Heuristic}   \\
    \cline{4-9}
    Dataset & \#Trees & Nodes &  \#Queries & FU & FTD & \#Queries & FU & FTD \\
    \hline \multirow{4}{*}{COMPAS}  & 5 & 486.60 & 73.60 & 1.00 & 1.00 & 74.20 & 1.00 & 1.00 \\
     & 25 & 4569.00 & 138.80 & 1.00 & 1.00 & 140.00 & 1.00 & 1.00 \\
     & 50 & 9151.20 & 147.60 & 1.00 & 1.00 & 149.20 & 1.00 & 1.00 \\
     & 75 & 7317.00 & 95.20 & 1.00 & 1.00 & 95.20 & 1.00 & 1.00\\
     & 100 & 18369.20 & 129.60 & 1.00 & 1.00 & 130.40 & 1.00 & 1.00 \\
    \bottomrule
    \end{tabular}}
\end{table}

\section{Broader Impact Statement}\label{appendix:broader_impact_statement}

To meet ethical and legal transparency requirements, machine learning explainability techniques have been extensively studied in recent years. Among them, counterfactual explanations provide a natural and effective approach by identifying how an instance could be modified to receive a different classification. In credit granting applications, for example, they can provide recourse to individuals whose credit was denied.

As a result, MLaaS platforms increasingly integrate such explainability tools into their APIs. While these explanations enhance user trust, they also expose a new attack surface to malicious entities by revealing additional model information. In this work, we theoretically and empirically demonstrate that locally optimal counterfactual explanations of decision trees and tree ensembles can be exploited to conduct efficient model extraction attacks. These results highlight the critical tension between transparency and the protection of model integrity and intellectual property. By identifying and quantifying these vulnerabilities, we highlight the risks of releasing model explanations without a thorough security assessment. Our research establishes a benchmark for evaluating method safety, advocating for the development of privacy-preserving approaches to explainability.

Finally, while previous evaluations of model extraction attacks have been predominantly empirical, we show that tools from online discovery provide a principled framework for characterizing attack efficiency. This perspective paves the way for more structured approaches to assessing model attack budgets and risks.

\end{document}